\lstdefinelanguage{json}{
  basicstyle=\small\ttfamily,
  showstringspaces=false,
  breaklines=true,
  columns=flexible,
  keepspaces=true,
  morestring=[b]",
  stringstyle=\color{black},
  literate=
   *{0}{{0}}1 {1}{{1}}1 {2}{{2}}1 {3}{{3}}1 {4}{{4}}1
    {5}{{5}}1 {6}{{6}}1 {7}{{7}}1 {8}{{8}}1 {9}{{9}}1
    {:}{{:}}1 {,}{{,}}1 {\{}{{\{}}1 {\}}{{\}}}1
    {[}{{[}}1 {]}{{]}}1
}
\theoremstyle{plain}
\newtheorem{theorem}{Theorem}[section]
\newtheorem{proposition}[theorem]{Proposition}
\newtheorem{lemma}[theorem]{Lemma}
\theoremstyle{definition}
\newtheorem{assumption}[theorem]{Assumption}
\theoremstyle{remark}
\newtheorem{remark}[theorem]{Remark}
\definecolor{lightorange}{HTML}{FAE3D6}
\definecolor{lightblue}{rgb}{0.8, 0.9, 1}
\icmltitlerunning{Letting Trajectories Spread: Quality-Preserving Control for Diverse Flow Matching}
\begin{document}

\twocolumn[
  \icmltitle{\texorpdfstring{Letting Trajectories Spread: \\ Quality-Preserving Control for Diverse Flow Matching}
  {Letting Trajectories Spread: Quality-Preserving Control for Diverse Flow Matching}}
  
  \icmlsetsymbol{equal}{*}
  \icmlsetsymbol{correspond}{\ensuremath{\dagger}}

  \begin{icmlauthorlist}
      \icmlauthor{Jingxuan Wu}{equal,unc}
      \icmlauthor{Zhenglin Wan}{equal,nus}
      \icmlauthor{Xingrui Yu}{correspond,cfar,ihpc}
      \icmlauthor{Yuzhe Yang}{ucsb}
      \icmlauthor{Bo An}{ntu}
      \icmlauthor{Ivor Tsang}{cfar,ihpc,ntu}
      \icmlauthor{Yang You}{nus}
  \end{icmlauthorlist}

  \icmlaffiliation{unc}{The University of North Carolina at Chapel Hill}
  \icmlaffiliation{ntu}{Nanyang Technological University}
  \icmlaffiliation{cfar}{CFAR, Agency for Science, Technology and Research, Singapore}
  \icmlaffiliation{ihpc}{IHPC, Agency for Science, Technology and Research, Singapore}
  \icmlaffiliation{ucsb}{University of California, Santa Barbara}
  \icmlaffiliation{nus}{National University of Singapore}

  \icmlcorrespondingauthor{Xingrui Yu}{Yu\_Xingrui@a-star.edu.sg}

  \icmlkeywords{Machine Learning, ICML}

  \vskip 0.3in
]



\printAffiliationsAndNotice{}  

\begin{abstract}
Flow-based text-to-image models follow deterministic trajectories, making it costly to explore diverse modes under limited sampling budgets.
Existing approaches to improving diversity often rely on retraining or degrade image fidelity.
To address this limitation, we present a training-free, inference-time control mechanism that makes the flow itself diversity-aware.
 Our core insight is to encourage diversity through guidance that is geometrically decoupled from the model’s quality-seeking direction.
Our method simultaneously encourages lateral spread among trajectories via a feature-space objective and reintroduces uncertainty through a time-scheduled stochastic perturbation. Crucially, this perturbation is projected to be orthogonal to the generation flow, a geometric constraint that allows it to boost variation without degrading image details or prompt fidelity.
Theoretically, we show that this design monotonically increases a volume surrogate while approximately preserving the marginal distribution, providing a principled explanation for the robustness of generation quality.
Empirically, across multiple text-to-image settings under fixed sampling budgets, our method consistently improves diversity metrics such as the Vendi Score and Brisque over strong baselines, while upholding image quality and alignment.
\end{abstract}

\section{Introduction}

Recent advances in text-to-image (T2I) synthesis have unlocked unprecedented capabilities, enabling the creation of photorealistic visuals for applications ranging from digital art and design to scientific visualization \citep{rombach2022high, saharia2022photorealistic, ramesh2022hierarchical, zhang2023text}. Among the leading paradigms, Flow Matching (FM) and Rectified Flow (RF) have gained prominence due to their efficient inference and solid theoretical foundation \citep{lipman2022flow, liu2022flow}. While significant efforts have pushed the fidelity and prompt-alignment of these models to new heights \citep{rombach2022high, esser2024scaling}, a critical challenge remains: a striking lack of semantic diversity.

\begin{figure}[t]
    \centering
    \includegraphics[width=1\linewidth
    ]{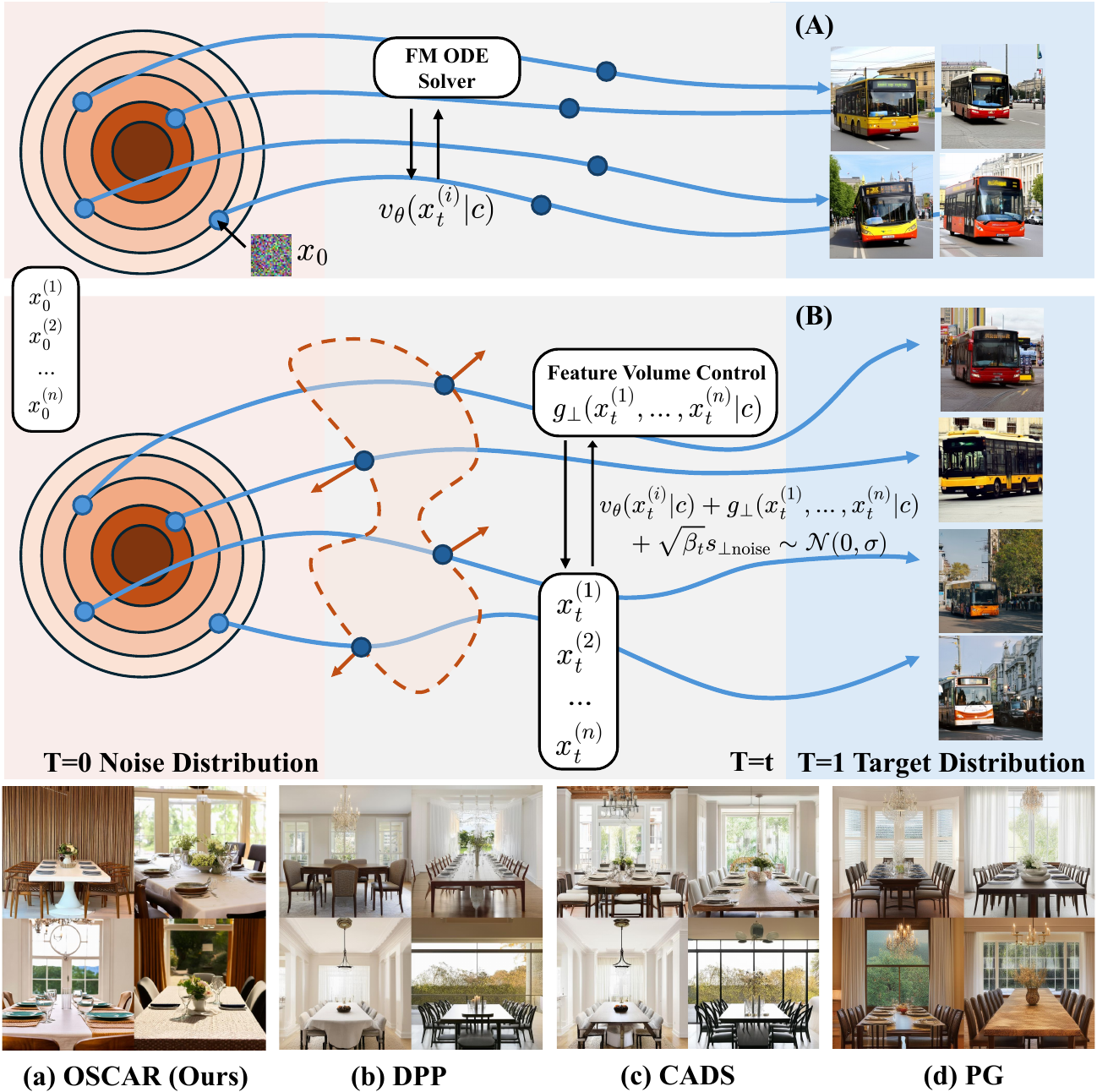}
\caption{
    \textbf{Up}: A conceptual comparison of the generation process. \textbf{(A)} The standard flow matching shows independent trajectories collapsing to similar modes. \textbf{(B)} \textbf{OSCAR} (ours) introduces an orthogonal control mechanism that forces the interacting trajectories to diverge and cover a wider semantic space. 
    \textbf{Down}: A qualitative comparison of generated images against strong baselines. The combined results illustrate that our method significantly increases the diversity of generations while retaining high output quality.
}
\vspace{-1em}
\label{fig:my_flow}
\end{figure}

This challenge manifests as an  ``illusion of variety”: while users can generate numerous images by changing random seeds, the outputs often collapse to a few high-probability modes, exploring only a narrow slice of the concept’s true semantic space. This tendency is not merely an incidental artifact. Still, it is exacerbated by the field's prevailing focus on aligning models with human preferences, a process that often implicitly rewards outputs that conform to common expectations at the expense of novelty \citep{hemmat2023feedback, hall2023dig}. This limitation is then deeply rooted in the models' mechanics, as the learned flows are often contractive, pulling different initial trajectories toward similar high-density modes of the target distribution. This effect is further amplified by strong Classifier-Free Guidance (CFG), which over-weights the conditional score, narrowing the explored region of the conditional manifold, improving prompt fidelity but reducing semantic diversity \citep{ho2022classifier}. Naively drawing more samples is an inefficient remedy, as the required Number of Function Evaluations (NFE) to find rare modes is prohibitive under finite computational budgets.

Existing approaches to mitigate this issue, whether applied during training or inference, often force a difficult compromise. Training-time solutions are often sensitive to the specific dataset and training parameters, and are typically computationally prohibitive, rendering them inapplicable to pre-trained models. While more practical, training-free methods for enhancing diversity, whether by modifying the sampling process or directly augmenting the generation dynamics, typically trade off sample quality for diversity, often producing artifacts. Consequently, a low-overhead, training-free method that can enhance diversity while rigorously preserving quality remains a key desideratum.

To address these limitations, we introduce \textbf{O}rthogonal \textbf{S}tochastic \textbf{C}ontrol for \textbf{A}lignment-\textbf{R}especting diversity (OSCAR), a novel, training-free control mechanism for continuous-time flow-matching inference. Our core idea is to reshape the sampling dynamics to encourage trajectories under the same conditions to naturally fan out toward complementary semantics. At each time step, we first employ a finite-difference endpoint extrapolation, inspired by Heun \citep{suli2003introduction}, to predict a local endpoint for the current state (see App.~\ref{sec:appendix_details} for details). We then maximize the feature-space volume of these predicted endpoints, defined via the log-determinant of their centered Gram matrix. The gradient of this volume potential is efficiently pulled back to the latent space. To further encourage exploration, we complement this deterministic guidance with a controllable, time-scheduled stochastic noise. 


The key to our method's success lies in a unifying geometric principle, which can be summarized as follows.
\textbf{(I) Orthogonal guidance.}
Both the deterministic control signal and the stochastic noise are projected to be orthogonal to the base flow velocity. This design ensures that our guidance provides only a ``lateral'' push for diversity, without opposing the model's forward, quality-seeking dynamics.
\textbf{(II) Resolved fidelity--diversity trade-off.}
Through comprehensive numerical comparisons against strong baselines, we demonstrate that this principle effectively resolves the trade-off between output diversity and generation quality in conditional flow matching.
\textbf{(III) Consistent empirical gains.}
As a result, our method achieves superior performance on diversity-centric metrics, including mode coverage and intra-class entropy, while consistently preserving generation quality.
The overall process and its effect on sampling trajectories are conceptually illustrated in Figure~\ref{fig:my_flow}.

\section{Related work}
\label{sec:related_work}

\noindent\textbf{From Diffusion to Continuous-Time Generative Flows.}
The development of modern generative models began with Denoising Diffusion Probabilistic Models (DDPM) \citep{ho2020denoising}, which achieved stable training and strong fidelity by simulating a progressive denoising process. A key drawback of the canonical sampler, however, is its high computational cost, requiring a large neural function evaluations (NFE) to produce a high-quality image. This limitation spurred two complementary research streams. On the quality and alignment side, researchers steadily improved realism and prompt adherence by scaling model capacity, strengthening text encoders, and incorporating preference alignment objectives \citep{meng2021sdedit, saharia2022photorealistic, esser2024scaling, xu2023imagereward}. On the efficiency side, few-step Ordinary Differential Equation (ODE) solvers and consistency-based models substantially reduced the NFE while preserving fidelity \citep{song2020denoising, lu2022dpm, song2023consistency, luo2023latent}. While continuous-time generative frameworks like Flow Matching (FM) and Rectified Flow (RF) have greatly improved the efficiency and fidelity of modern generative models \citep{lipman2022flow, liu2022flow}, the field's primary focus has remained on these aspects. Consequently, diversity and mode coverage have been overlooked, particularly under strong CFG, which compresses the solution space and causes generations to collapse to similar high-probability modes \citep{ho2022classifier, sadat2023cads}.

\noindent\textbf{Training-Time Diversity Enhancement.}
One line of work enhances diversity during training, often inspired by reinforcement learning. For instance, \citet{miao2024training} proposes a framework that employs a reward model to score the diversity of generated images. However, this method's effectiveness hinges on a pre-defined reference distribution of real images, a requirement that can be impractical to satisfy and limits the approach's generality. Other methods frame the reverse sampling process as a multi-step Markov Decision Process and apply policy gradient optimization to the entire sampling trajectory \citep{zhang2024large, black2023training, mcallister2025flow}. While powerful, this web-scale training paradigm is not only computationally prohibitive, but its effectiveness is also highly sensitive to the specific dataset and training parameters. Our work is therefore situated within the more practical and versatile paradigm of training-free, inference-time enhancement.

\noindent\textbf{Inference-Time Diversity: Sampling Strategies vs. Gradient-Based Guidance.}
Inference-time methods for diversity enhancement can be broadly categorized into two paradigms: sampling strategies and gradient-based guidance. The first class indirectly broadens exploration by modifying the sampling process. For instance, \citet{sehwag2022generating} proposes biasing sampling toward low-density regions of the data manifold, which may lead to a distribution shift. Building on this idea, Condition-annealed diffusion sampling (CADS) dynamically anneals the conditioning signal to encourage exploration in early sampling stages \citep{sadat2023cads}. Unlike the scheduling-based CADS, whose performance can be highly dependent on the precise schedule and scale of the injected noise, our method introduces noise strictly in the subspace orthogonal to the base flow velocity. This quality-preserving design makes our method far less dependent on the injection schedule and thus more robust across CFG scales. Another related method, adaptive projected guidance (APG)~\citep{sadat2024eliminating}, also uses an orthogonal projection form, but for a different purpose: it decomposes the classifier-free guidance update to reduce oversaturation and improve quality under strong guidance. In contrast, OSCAR derives its guidance from a set-level feature-volume objective over predicted endpoints, and uses orthogonal projection as a fidelity safeguard to prevent the diversity-inducing control from interfering with the base flow direction.

A second paradigm, gradient-based guidance, takes a more direct approach by actively pushing samples apart using diversity objectives. While early methods, such as Particle Guidance (PG), applied repulsion directly in the latent space, this can conflict with the model's quality-generating flow \citep{corso2023particle}. A significant evolution is DiverseFlow, which elevates the guidance objective to a semantic feature space operating on predicted endpoints \citep{morshed2025diverseflow}. Our work builds on this idea by introducing a more efficient $O(K^2)$ objective that largely reduces DiverseFlow's $O(K^3)$ complexity cost, and, more critically, a set of rigorous geometric constraints for fidelity preservation. Orthogonal to these deterministic guidance methods, another line of thought replaces the ODE solver with a Stochastic Differential Equation (SDE) solver inspired by diffusion models \citep{liu2025flow}. However, such generic stochasticity is not explicitly designed to enhance semantic diversity. Our method is unique in that it integrates the strengths of both approaches: we utilize principled semantic guidance while also injecting staged, controllable noise, both of which are constrained to be orthogonal to the base flow, ensuring a safe and effective diversity boost.
\section{Problem Formulation}

Our work is situated within the framework of continuous-time generative models, particularly those based on FM and RF. Specifically, the RF framework defines the path between a real data sample $x_1$ and a noise sample $x_0$ via a simple linear interpolation $x_t = (1-t)x_1 + t x_0$, for $t \in [0, 1]$. A model is then trained to learn a velocity field $v_\theta(x_t, t \mid c)$ by minimizing an objective function:
$$
    \mathcal{L}(\theta) = \mathbb{E}_{t, x_0, x_1} \left[ \left\| (x_0 - x_1) - v_\theta(x_t, t \mid c) \right\|^2 \right]
$$
This process drives $v_\theta$ to fit the constant target velocity field connecting the data and noise distributions. During inference, this learned velocity field $v_\theta$ guides a deterministic ODE from noise to data: $\frac{d x_t}{dt} = v_\theta(x_t,t \mid c), t\in[1,0]$.

However, the deterministic and often contractive nature of the inference ODE suffers from a significant drawback in the form of a striking lack of set-level diversity. To empirically characterize the lack of set-level diversity in deterministic ODEs, we compare standard FM and OSCAR on a 2D ring-structured GMM. Initial particles, sampled from an inner concentric disk, are evolved under a fixed NFE budget. As illustrated in Figure~\ref{fig:toy_example}, FM induces purely radial contraction, causing trajectories to cluster within sparse angular sectors and fail to explore more modes. In contrast, OSCAR preserves radial convergence while introducing a tangential diversification force that drives particles to spread uniformly across all modes. Consequently, OSCAR achieves superior mode coverage and a balanced angular distribution, demonstrating its capacity to mitigate mode collapse at no additional computational cost.

\begin{figure}[h]

    \includegraphics[width=1\linewidth]{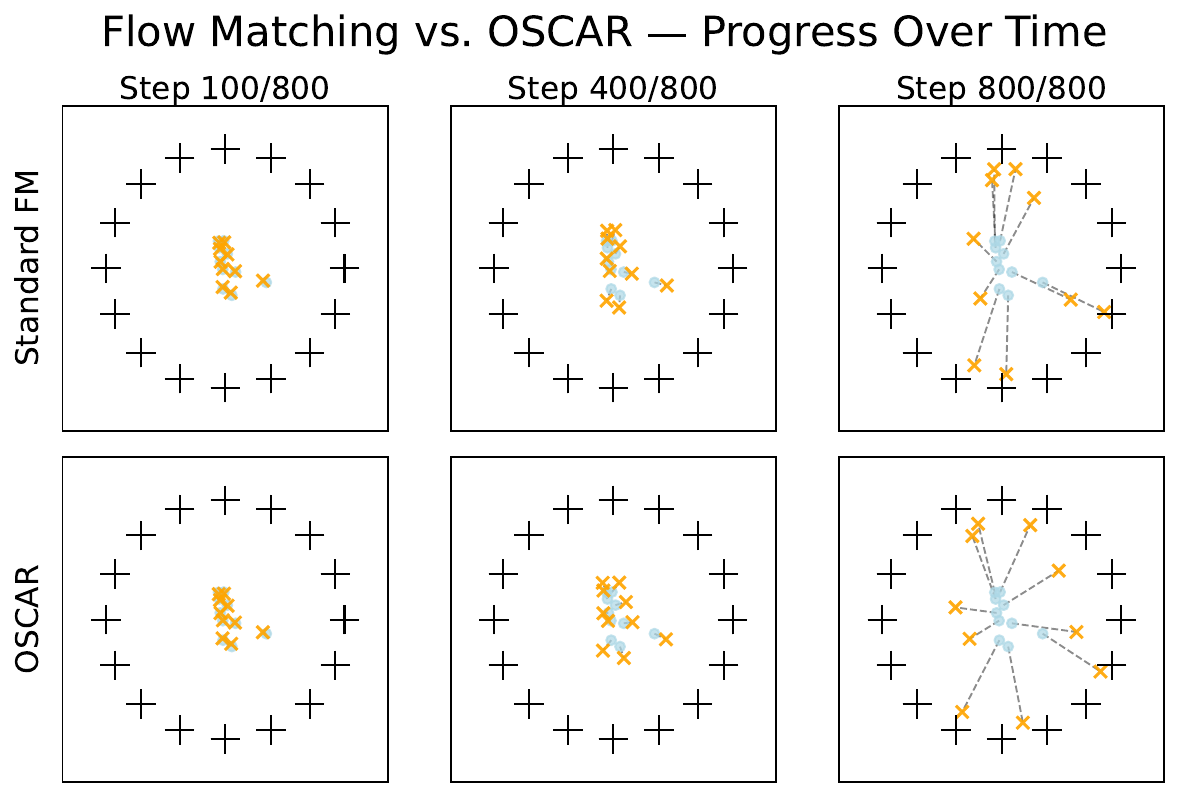}
    \caption{
        \textbf{Toy example on a ring of Gaussian modes illustrating mode exploration.} 
        Rows: Standard Flow Matching vs. OSCAR. Columns: snapshots at early, mid, and final generation steps under the same NFE. 
        Black plus signs (+) denote mode centers; light-blue circles represent initial particles; orange crosses ($\times$) mark particle locations, with dashed lines tracing individual trajectories. 
        While standard FM exhibits significant mode collapse due to purely contractive dynamics, OSCAR promotes \textbf{tangential diversification}, enabling particles to cover more modes earlier and more uniformly within the same sampling budget.
    }
    \label{fig:toy_example}
\end{figure}

The central challenge in overcoming this limitation lies in the well-known trade-off between diversity and quality. Naively augmenting the learned ODE dynamics, for instance, by adding arbitrary noise or simple repulsion forces, can easily disrupt the model's carefully calibrated path to high-fidelity samples, often introducing artifacts and degrading generation quality. This motivates the search for diversity-promoting interventions that encourage trajectories to spread without interfering with the model’s forward, quality-seeking dynamics.

\section{Methodology}

In this work, we evaluate set-level diversity, defined as the ability of a model to generate a semantically varied set of a given size for a fixed condition, purely by changing the initial random seed. Therefore, our goal is to enhance set-level diversity by augmenting the base dynamics without retraining the pretrained velocity field $v_\theta$. We achieve this by introducing two components: (1) a deterministic, geometry-aware control signal $g(x_t, t)$, designed to drive samples apart in a semantic feature space, and (2) a controllable stochastic noise term $\sqrt{\beta(t)}\, dW_t$ to inject quality-preserving uncertainty. This transforms the original ODE into a controlled SDE that governs our sampler:
\begin{equation}
    \label{eq:controlled_sde}
    dx_t = \big[v_\theta(x_t,t \mid c) - g(x_t,t \mid c)\big]\,dt + \sqrt{\beta(t)}\,dW_t,
\end{equation}
where $W_t$ is standard Brownian motion. The principled design of the control signal $g$ and its associated fidelity safeguards are detailed in the following sections.


\subsection{Diversity via Trajectory Spreading in Feature Space}
\label{sec:method-core}
To drive the set of trajectories $\{x^{(i)}_t\}$ towards diverse configurations, our control mechanism operates not on the current states, but on their local estimates of the final trajectory endpoint $\{x^{(i)}_0\}$ in a semantic feature space. The core idea is to define a differentiable, permutation-invariant set energy \footnote{The term ``energy" is used by analogy to physics: we define an objective function where lower values correspond to more diverse sets, and its negative gradient provides a 'force' that drives diversity.} $\mathcal{E}_s(Z)$ for a collection of endpoint features $Z = [z_1;\ldots;z_m]$. This energy is designed to be low when features are spread out and high when they are clustered. We then steer the features towards a lower energy state using a stochastic control law:
\begin{equation}
\label{eq:feature_sde}
dZ = -\gamma(t)\,\nabla_Z \mathcal{E}_s(Z)\,dt + \sqrt{\beta(t)}\,dW_t,
\end{equation}
where $Z \in \mathbb{R}^{m\times D}$ stacks the features, and schedules $\gamma(t)$ and $\beta(t)$ modulate the diversity-seeking drift and exploration. The key components of this control law are derived as follows.

\noindent\textbf{Endpoint-Aware Features.} At any time $t$, we form an endpoint-aware feature by first applying a one-step predictor $\hat\psi(x,t)$ and then a semantic encoder $\phi:\mathcal{X}\to\mathbb{R}^D$ (e.g., a pretrained image tower):
{\small
\[
z_i(t)=\phi(\hat\psi(x^{(i)}(t),t)),\qquad
Z(t)=[z_1(t);\ldots;z_m(t)].
\]
}

For the predictor $\hat\psi$, we use a finite-difference endpoint extrapolation, which provides an accurate local estimate of the trajectory's endpoint without any extra NFE.

\noindent\textbf{Set Energy as Feature Volume.} We instantiate the set energy $\mathcal{E}_s(Z)$ using a log-determinant objective that corresponds to the volume spanned by the features. To avoid manual centering, we use a trace-stabilized form:
\begin{equation}
\label{eq:logdet_energy}
\mathcal{E}_s(Z) = -\tfrac{1}{2}\,\log\det\!\Big(I + \tau\, ZZ^\top \Big),
\end{equation}
where we fix $\tau=1$ in practice across all experiments. Minimizing this energy is equivalent to maximizing the regularized volume of the parallelepiped formed by the feature vectors $\{z_i\}$.

\noindent\textbf{Pulling the Gradient Back to State Space.} The feature-space drift from Eq.\ref{eq:feature_sde} is mapped back to the sampler's state space $\mathcal{X}$ to form the control signal $g(x,t)$. We achieve this via the chain rule:
\begin{equation}
\label{eq:pullback}
\resizebox{0.9\linewidth}{!}{$ 
    g_i(x_i,t) = \Big(J_{x}\hat\psi(x_i,t)\Big)^\top \Big(J_{u}\phi(u)\Big)^\top \Big[ \nabla_Z \mathcal{E}_s(Z) \Big]_i \Big|_{u=\hat\psi(x_i,t)}
$}
\end{equation}
where $[ \cdot ]_i$ denotes the gradient component for the $i$-th sample. Equation~\ref{eq:pullback}, which defines the control signal $g(x,t)$ used in our final controlled SDE, is obtained by pulling back the feature-space gradient via the chain rule and implemented efficiently per sample using two reverse-mode vector–Jacobian products (VJPs) without explicitly forming Jacobians (see full derivation in Appendix~\ref{sec:appendix_theory}, Lemma~1).

\subsection{Fidelity Safeguards}
\label{sec:quality_safeguards}
A powerful diversity signal is not sufficient. It must be applied in a manner that respects the underlying quality manifold learned by the pretrained model. To this end, we introduce two principled, geometry-aware constraints that ensure our control mechanism is quality-preserving.

\begin{algorithm}[h]
\caption{\textbf{The OSCAR Sampling Algorithm}}
\label{alg:oscar_sampler}
\small
\begin{algorithmic}[1]
\STATE \textbf{Input:} Initial noise samples $\{x^{(i)}_0\}_{i=1}^m \sim \mathcal{N}(0,I)$; condition $c$; models: velocity field model $v_\theta$, feature encoder $\phi$, endpoint predictor $\hat\psi$; time grid $0=t_0 < \cdots < t_T=1$.
\STATE \textbf{Hyperparameters:} Stability $\varepsilon$, volume scale $\tau$, orthogonality $\lambda\!\in\![0,1]$, leverage exponent $\alpha\!\in\![0.5,1]$, noise exponent $p$.
\STATE \textbf{Output:} Final samples $\{x^{(i)}_1\}_{i=1}^m$.
\vspace{0.3em}

\FOR{$\ell=0$ \textbf{to} $T-1$}
    \STATE Let $x_i \leftarrow x^{(i)}_{t_\ell}$ be the current sample at time $t_\ell$.
    \STATE Set step size $\Delta t \leftarrow t_{\ell+1} - t_\ell$; evaluate base velocity $v_i \leftarrow v_\theta(x_i, t_\ell \mid c)$.
    \STATE \textbf{Predict endpoints:} Use extrapolation $\hat\psi$ to get $\hat x^{\mathrm{end}}_i$ from $(x_i, v_i, t_\ell)$.
    \STATE \textbf{Compute diversity gradient $g_i$ (Sec.~\ref{sec:method-core}):}
    \STATE \quad Encode endpoints $z_i \leftarrow \phi(\hat x^{\mathrm{end}}_i)$, stack into matrix $Z=[z_1;\ldots;z_m]$
    \STATE \quad Compute leverage scores $s_i$ and weights $w_i$ from Gram matrix $ZZ^\top$ (Eq.~\ref{eq:reweighted_gram}). 
    \STATE \quad Compute reweighted feature-space gradient $G^Z = \nabla_Z\mathcal{E}_s(Z)$.
    \STATE \quad Pull back to state space: $g_i \leftarrow (J\hat\psi)^\top (J\phi)^\top [G^Z]_i$ (Eq.~\ref{eq:pullback}).
    \vspace{0.3em}
    \STATE \textbf{Apply fidelity safeguards(Sec.~\ref{sec:quality_safeguards}):}
    \STATE \quad Project determin control: $g_i \leftarrow \left( g_i - \lambda\,\frac{\langle g_i,v_i\rangle}{\|v_i\|^2+\varepsilon}\,v_i \right) \times \min\left(1, \frac{\|v_i\|}{\|g_i\|+\varepsilon}\right)$. 
    \STATE \quad Project stochastic noise: $\xi_i \sim \mathcal N(0,I)$; $\xi_i^\perp \leftarrow \xi_i-\frac{\langle \xi_i,v_i\rangle}{\|v_i\|^2+\varepsilon}\,v_i$; $\eta_i \leftarrow \sqrt{t_\ell^p |\Delta t|}\,\xi_i^\perp$.
    \vspace{0.3em}
    \STATE \textbf{Perform controlled Step (App.~\ref{sec:extrapolation}):} 
    \STATE \quad Effective velocity: $v^{\mathrm{eff}}_i \leftarrow v_i - \gamma(t_\ell)\, g_i$.
    \STATE \quad Update: $x^{(i)}_{t_{\ell+1}} \leftarrow x_i + \Delta t\, v^{\mathrm{eff}}_i + \eta_i$.

\ENDFOR
\STATE Let final samples be $x^{(i)}_1 \leftarrow x^{(i)}_{t_T}$.
\end{algorithmic}
\end{algorithm}

\subsubsection{Orthogonal Projection}
\label{sec:safeguards_OP}
Simply adding the control signal $g(x_t, t)$ to the base velocity $v_\theta$ can create conflicts, as the direction for maximal diversity may oppose the direction for maximal fidelity. To resolve this, we enforce that the control operates strictly in a subspace that is orthogonal to the primary alignment/quality direction. We define this direction as the base velocity itself, $q(x,t) \approx v_\theta(x,t \mid c)$, which empirically points towards the conditional data manifold.

We then construct a projection operator $\Pi_\perp$ that nullifies any component of our control signal parallel to this quality direction:
\begingroup
\setlength{\abovedisplayskip}{6pt}
\setlength{\belowdisplayskip}{6pt}
\[
g_i^{\perp}(t)=g_i(t)-\lambda \tfrac{g_i(t)^\top v_i(t)}{\|v_i(t)\|^2+\delta}\,v_i(t).
\]
\endgroup

where $\lambda \in [0, 1]$ controls the strictness of the orthogonality. By applying this projection to both the deterministic control $g$ and the stochastic noise $dW_t$, we ensure our guidance only performs a ``lateral spread" to enhance diversity, without interfering with the model's primary, ``forward" trajectory towards a high-fidelity output (See theoretical analysis in Appendix~\ref{sec:appendix_theory}).

\subsubsection{Redundancy-Aware Reweighting}
\label{sec:safeguards_RR}
Instead of applying a uniform, hard cap to the control signal, we employ a more adaptive and fine-grained mechanism to modulate its strength on a per-sample basis. This is achieved by reweighting the set energy $\mathcal{E}_s$ based on the geometric arrangement of the endpoint features. Specifically, we compute per-sample leverage scores $s_i$ from the inverse of the feature Gram matrix $K = ZZ^\top$: $s_i = \big[(K + \varepsilon I)^{-1}\big]_{ii}.$ The score $s_i$ is inversely proportional to how ``central" or ``redundant" a sample is within the set. Geometrically, under-covered samples receive higher scores. These scores are then used to form normalized weights $w_i$, which reweight the Gram matrix used in our volume objective:
\begin{equation}
\small
    \label{eq:reweighted_gram}
    \tilde{K} = W^{1/2} K W^{1/2}, \quad \text{where} \quad W = \mathrm{diag}(w_1,\ldots,w_m).
\end{equation}
where $w_i = \frac{s_i^{\alpha}}{\sum_j s_j^{\alpha}}$. By using this reweighted Gram matrix $\tilde{K}$ inside the log-determinant energy (Eq.~\ref{eq:logdet_energy}), our method naturally allocates more guidance strength to the under-represented samples that need it most, while attenuating the influence of dominant, redundant samples. This ``push weak, not strong" principle provides a more stable and efficient control than a uniform cap. The complete, step-by-step implementation of our guided sampler is presented in Algorithm~\ref{alg:oscar_sampler}.

\section{Experiments and Results}
\label{sec:experiments}

\noindent\textbf{Setup.}
We evaluate three conditional generation settings:
\textbf{(i)} class-conditional generation on COCO \citep{lin2014microsoft};
\textbf{(ii)} text-to-image (T2I) using a pretrained Stable Diffusion backbone in Flow-matching framework;
and \textbf{(iii)} attribute-conditioned generation, where we assess the quality and diversity of images generated for specific attributes.  All methods are applied at inference time to the same pretrained and frozen backbone model, ensuring a fair comparison. More implementation details can be found in Appendix \ref{sec:appendix_details}.

\noindent\textbf{Baselines.}
We compare OSCAR with several training-free inference-time baselines. 
FM-SD3.5 denotes the original frozen flow-matching backbone. 
CADS~\citep{sadat2023cads} promotes diversity by annealing the conditioning signal, while Particle Guidance (PG)~\citep{corso2023particle} introduces repulsive interactions among samples. 
We also compare with DiverseFlow~\citep{morshed2025diverseflow}, which uses a DPP-inspired endpoint guidance objective, so we denote this baseline as DPP. 
Finally, we include APG~\citep{sadat2024eliminating}, an orthogonal-guidance method for improving classifier-free guidance. 

\noindent\textbf{Fidelity and Alignment.} To quantify performance across these setups, our evaluation is structured around three core axes. We first assess the fidelity and alignment of generated samples. This includes the standard Fréchet Inception Distance (FID)~\citep{heusel2017gans} and Kernel Inception Distance (KID)~\citep{binkowski2018demystifying} for measuring the discrepancy between generated samples and the reference distribution. For semantic and human-preference alignment, we measure CLIP-Score for text-prompt agreement and Image Reward~\citep{xu2023imagereward} for alignment with human aesthetic preferences. Finally, to assess perceptual quality and detect artifacts, we employ the no-reference Blind/Referenceless Image Spatial Quality Evaluator (BRISQUE)~\citep{mittal2012no} as well as the learned CLIP-IQA~\citep{wang2023exploring} metric.

\noindent\textbf{Distributional Coverage.}
Beyond the quality of individual images, we measure distributional coverage to understand how well the model captures the breadth of the true data manifold. We plot Improved Precision-Recall (IPR) curves~\citep{kynkaanniemi2019improved} and report the iso-precision $\Delta$Recall, which directly quantifies coverage expansion at a comparable fidelity level. This is complemented by Coverage@$\tau$ on pre-clustered real-data features to evaluate discrete mode discovery. The Does-It Metric (DIM)~\citep{teotia2025dimcim} further probes coverage by measuring the balanced generation of attributes from coarse prompts.

\begin{table*}[t]
\centering
\caption{ \small
    Quantitative comparison of our method against baselines across different CFG levels for the ``truck" concept, you can see other concept results in Appendix~\ref{sec:appendix_main_results}. Our method consistently improves diversity metrics while achieving state-of-the-art performance on quality and alignment scores.
}
\label{tab:all_metrics_comparison}
\footnotesize
\setlength{\tabcolsep}{5pt}
\renewcommand{\arraystretch}{0.65}
\setlength{\aboverulesep}{2pt}
\setlength{\belowrulesep}{2pt}
\begin{tabularx}{\textwidth}{l@{\extracolsep{\fill}}cccccc}
\toprule
& \multicolumn{3}{c}{\textbf{Guidance Scale (CFG)}} & \multicolumn{3}{c}{\textbf{Guidance Scale (CFG)}} \\
\cmidrule(lr){2-4}\cmidrule(lr){5-7}
\textbf{Method} & $3.0$ & $5.0$ & $7.5$ & $3.0$ & $5.0$ & $7.5$ \\
\midrule

& \multicolumn{3}{c}{\textbf{Brisque} $\downarrow$} & \multicolumn{3}{c}{$\mathbf{1-\text{MS-SSIM}}$(\%) $\uparrow$} \\
\cmidrule(lr){2-4}\cmidrule(lr){5-7}
FM-SD3.5  & 19.58 $\pm$ 1.95 & 23.40 $\pm$ 2.27 & 33.11 $\pm$ 1.82 & 83.93 $\pm$ 2.05 & 82.62 $\pm$ 2.63 & 80.46 $\pm$ 2.82 \\
PG        & 40.11 $\pm$ 5.83 & 40.34 $\pm$ 6.11 & 47.66 $\pm$ 3.92 & 84.60 $\pm$ 2.76 & 83.76 $\pm$ 2.37 & 81.07 $\pm$ 1.22 \\
CADS      & 21.15 $\pm$ 0.76 & 23.97 $\pm$ 1.59 & 32.73 $\pm$ 1.96 & \textbf{86.12 $\pm$ 2.76} & 83.82 $\pm$ 3.10 & 82.71 $\pm$ 2.53 \\
DPP       & 18.73 $\pm$ 0.91 & 24.51 $\pm$ 1.98 & 33.88 $\pm$ 0.98 & 85.75 $\pm$ 1.32 & 83.42 $\pm$ 1.86 & 81.90 $\pm$ 1.73 \\
APG       & 24.89 $\pm$ 2.23 & 27.97 $\pm$ 2.04 & 30.09 $\pm$ 2.20 & 85.16 $\pm$ 2.14 & 83.53 $\pm$ 1.81 & 82.93 $\pm$ 2.55 \\
\rowcolor{lightblue}\textbf{Ours} & \textbf{18.50 $\pm$ 1.62} & \textbf{21.72 $\pm$ 1.31} & \textbf{27.80 $\pm$ 0.85} & 84.80 $\pm$ 1.03 & \textbf{83.91 $\pm$ 1.04} & \textbf{83.22 $\pm$ 1.21} \\
\midrule

& \multicolumn{3}{c}{\textbf{Vendi Score Pixel} $\uparrow$} & \multicolumn{3}{c}{\textbf{Vendi Score Inception} $\uparrow$} \\
\cmidrule(lr){2-4}\cmidrule(lr){5-7}
FM-SD3.5  & 2.48 $\pm$ 0.19 & 2.31 $\pm$ 0.11 & 2.23 $\pm$ 0.11 & 4.55 $\pm$ 0.28 & 3.86 $\pm$ 0.13 & 3.76 $\pm$ 0.14 \\
PG        & 2.45 $\pm$ 0.18 & 2.30 $\pm$ 0.13 & 2.25 $\pm$ 0.08 & 4.80 $\pm$ 0.18 & 4.22 $\pm$ 0.14 & 3.81 $\pm$ 0.11 \\
CADS      & 2.63 $\pm$ 0.09 & 2.34 $\pm$ 0.12 & 2.25 $\pm$ 0.10 & 4.58 $\pm$ 0.28 & 3.88 $\pm$ 0.14 & 3.71 $\pm$ 0.14 \\
DPP       & 2.48 $\pm$ 0.09 & 2.30 $\pm$ 0.09 & 2.17 $\pm$ 0.03 & 4.24 $\pm$ 0.32 & 3.58 $\pm$ 0.23 & 3.15 $\pm$ 0.10 \\
APG       & 2.61 $\pm$ 0.28 & 2.40 $\pm$ 0.30 & 2.31 $\pm$ 0.30 & 4.76 $\pm$ 0.67 & 4.27 $\pm$ 0.40 & 4.05 $\pm$ 0.29 \\
\rowcolor{lightblue}\textbf{Ours} & \textbf{2.66 $\pm$ 0.18} & \textbf{2.47 $\pm$ 0.11} & \textbf{2.34 $\pm$ 0.10} & \textbf{4.93 $\pm$ 0.33} & \textbf{4.34 $\pm$ 0.13} & \textbf{4.08 $\pm$ 0.04} \\
\midrule

& \multicolumn{3}{c}{\textbf{CLIP-IQA} $\uparrow$} & \multicolumn{3}{c}{\textbf{CLIP Score} $\uparrow$} \\
\cmidrule(lr){2-4}\cmidrule(lr){5-7}
FM-SD3.5  & 6.26 $\pm$ 0.61 & 6.48 $\pm$ 0.44 & 6.52 $\pm$ 0.48 & 26.97 $\pm$ 0.58 & 26.16 $\pm$ 0.52 & 26.02 $\pm$ 0.54 \\
PG        & 6.22 $\pm$ 0.57 & 6.43 $\pm$ 0.47 & 6.48 $\pm$ 0.42 & 27.15 $\pm$ 0.61 & 26.57 $\pm$ 0.55 & 26.52 $\pm$ 0.53 \\
CADS      & \textbf{6.61 $\pm$ 0.57} & 6.65 $\pm$ 0.51 & 6.69 $\pm$ 0.46 & 26.89 $\pm$ 0.68 & 26.53 $\pm$ 0.62 & 26.55 $\pm$ 0.53 \\
DPP       & 6.29 $\pm$ 0.58 & 6.42 $\pm$ 0.50 & 6.44 $\pm$ 0.46 & 27.06 $\pm$ 0.78 & 26.68 $\pm$ 0.67 & 26.46 $\pm$ 0.59 \\
APG       & 6.49 $\pm$ 0.65 & 6.56 $\pm$ 0.49 & 6.62 $\pm$ 0.45 & \textbf{27.30 $\pm$ 0.55} & 26.73 $\pm$ 0.50 & \textbf{26.67 $\pm$ 0.47} \\
\rowcolor{lightblue}\textbf{Ours} & 6.57 $\pm$ 0.62 & \textbf{6.76 $\pm$ 0.51} & \textbf{6.79 $\pm$ 0.48} & 27.20 $\pm$ 0.55 & \textbf{26.76 $\pm$ 0.55} & 26.56 $\pm$ 0.48 \\
\midrule

& \multicolumn{3}{c}{\textbf{FID} $\downarrow$} & \multicolumn{3}{c}{\textbf{Image Reward} $\uparrow$} \\
\cmidrule(lr){2-4}\cmidrule(lr){5-7}
FM-SD3.5  & 166.18 $\pm$ 1.10 & 165.51 $\pm$ 0.91 & 166.08 $\pm$ 0.65 & 0.40 $\pm$ 0.32 & 0.52 $\pm$ 0.26 & 0.60 $\pm$ 0.27 \\
PG        & 167.34 $\pm$ 2.12 & 166.50 $\pm$ 2.39 & 165.39 $\pm$ 2.03 & 0.38 $\pm$ 0.38 & 0.54 $\pm$ 0.30 & 0.60 $\pm$ 0.29 \\
CADS      & 165.84 $\pm$ 0.95 & 166.13 $\pm$ 1.22 & 167.34 $\pm$ 1.03 & \textbf{0.45 $\pm$ 0.32} & 0.56 $\pm$ 0.27 & 0.64 $\pm$ 0.26 \\
DPP       & 166.51 $\pm$ 1.52 & 165.89 $\pm$ 0.72 & 166.21 $\pm$ 1.28 & 0.34 $\pm$ 0.35 & 0.49 $\pm$ 0.28 & 0.55 $\pm$ 0.27 \\
APG       & 166.62 $\pm$ 1.11 & 165.55 $\pm$ 1.48 & 165.77 $\pm$ 1.61 & 0.42 $\pm$  0.33 & 0.54 $\pm$ 0.27 & 0.62 $\pm$ 0.28 \\
\rowcolor{lightblue}\textbf{Ours} & \textbf{165.40 $\pm$ 0.94} & \textbf{164.75 $\pm$ 0.66} & \textbf{165.31 $\pm$ 0.79} & 0.43 $\pm$ 0.35 & \textbf{0.57 $\pm$ 0.29} & \textbf{0.65 $\pm$ 0.28} \\
\bottomrule
\end{tabularx}
\end{table*}

\noindent\textbf{Intra-Set Diversity.} 
Finally, for the crucial user-facing scenario of generating multiple candidates from a single prompt, we evaluate intra-set diversity. We report pairwise $1 - \text{MS-SSIM}$~\citep{wang2003multiscale} to capture perceptual and structural variation. Our primary metric here is the reference-free Vendi Score~\citep{friedman2022vendi}, which reflects the ``effective number" of distinct items in a set. We report both Inception and pixels representations to disentangle low-level from semantic diversity. The Can-It Metric (CIM)~\citep{teotia2025dimcim} confirms that this diversity is controllable, assessing the ability to produce distinct outputs when explicitly conditioned on different attributes.


\subsection{Main Results}

The following sections describe our main findings, and additional experiments are provided in Appendix~\ref{sec:appendix_additional_exp}.

\noindent\textbf{Fidelity-Coverage Trade-off.}
Our quantitative evaluation shows that our method improves the trade-off between generation fidelity and diversity, which is illustrated by the Precision-Recall for Distributions (PRD) curves in Figure~\ref{fig:prd_suitcase}. While baselines such as DPP and PG suffer from a sharp precision collapse as recall begins to increase, our method maintains a more robust curve, demonstrating a superior ability to expand sample diversity without a severe penalty to fidelity. This overall advantage is confirmed by our method achieving the highest Area Under the Curve (AUC) across all CFG scales. This result underscores our method's ability to effectively explore the data manifold for diverse samples while maintaining high-quality outputs. As further demonstrated by extensive PRD results in Appendix~\ref{sec:appendix_prd}, our method outperforms all baselines on the majority of evaluated concepts, indicating that the observed improvement is not limited to a specific class.

\begin{figure}[h]
  \centering
  \includegraphics[width=1\linewidth]{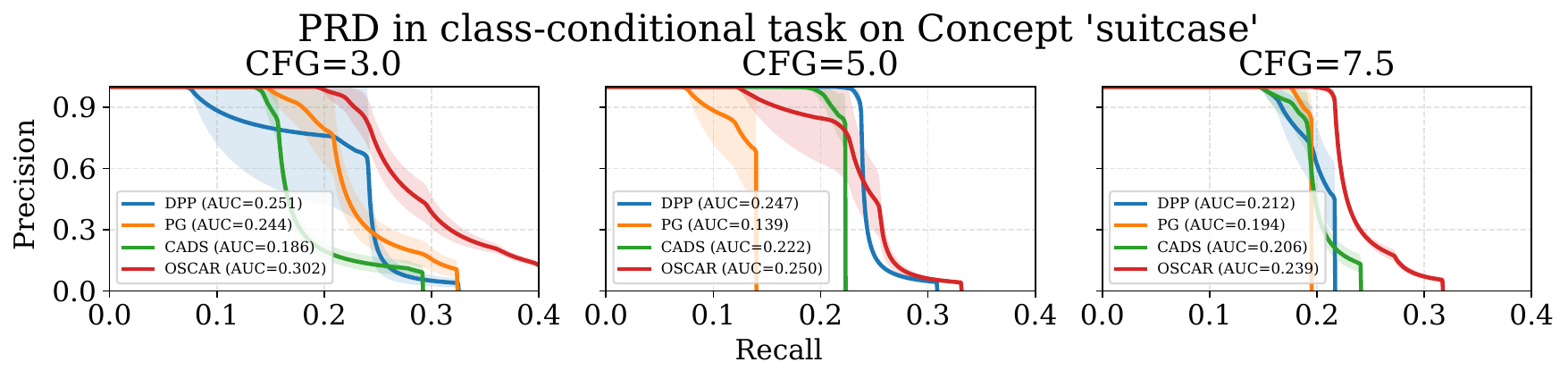}
  \caption{
PRD curves on the ``suitcase'' concept, comparing methods across different CFG levels. For most CFG settings, our method’s curves are shifted toward the top-right, indicating a superior precision–recall trade-off.}
  \label{fig:prd_suitcase}
  \vspace{-1em}
\end{figure}

\begin{figure}[h]
  \centering

  \includegraphics[width=0.8\linewidth]{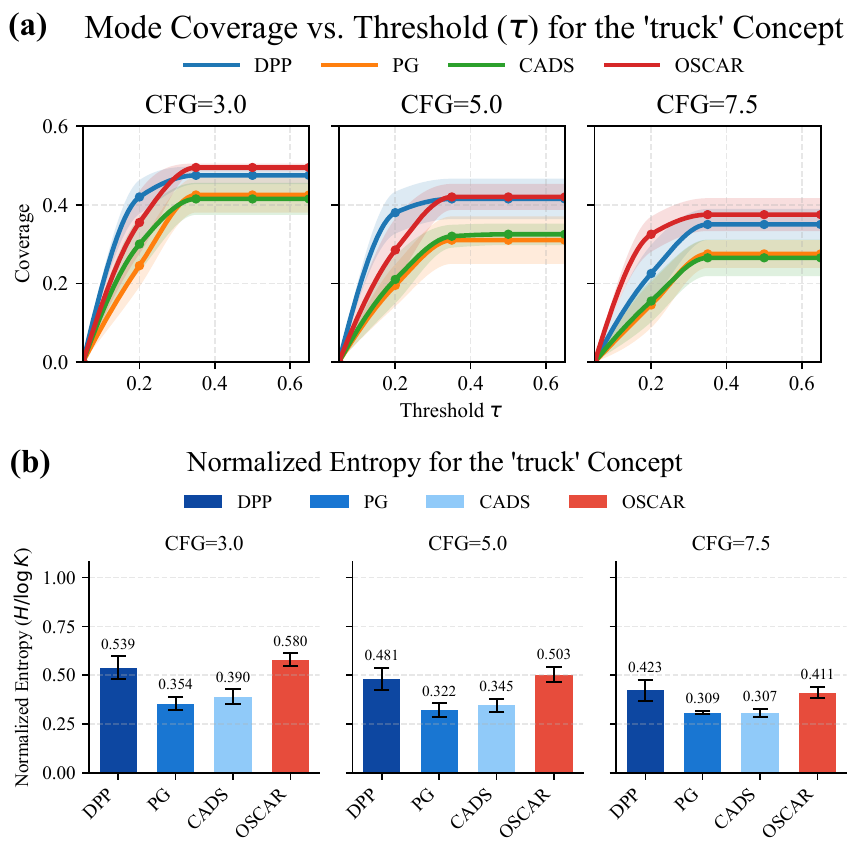}

  \caption{ 
  \textbf{(a)} Mode Coverage vs. Threshold ($\tau$) for the `truck' concept, comparing our method against baselines at three fixed CFG levels. Higher curves indicate a larger fraction of real-data clusters are covered, signifying superior mode discovery. Shaded bands denote the $\pm 1$ standard deviation across multiple seeds and prompts. \textbf{(b)} Normalized entropy for the `truck' concept, comparing methods at different CFG levels. Higher bars indicate a more uniform distribution of samples across discovered modes, signifying less redundancy. Error bars denote the $\pm 1$ standard deviation.}
  \label{fig:exp3}
  \vspace{-1em}
\end{figure}

\noindent\textbf{Reference-Free Diversity and Fidelity.}
As detailed in Table \ref{tab:all_metrics_comparison}, our approach enhances diversity while maintaining state-of-the-art sample quality and prompt adherence. Our method is the clear leader in diversity, achieving the highest scores across all CFG levels for both Vendi Score Pixel($\mathrm{VS}_{p}$), which measures low-level variation, and Vendi Score Inception($\mathrm{VS}_{I}$), which captures semantic differences. Crucially, this diversity is not an artifact of degraded quality. Our method delivers top-tier generation fidelity, evidenced by FID and Brisque scores, while CLIP Scores confirm that text-image alignment is rigorously maintained. To ensure the robustness of these findings, all reported metrics are aggregated over multiple random seeds and synonymous prompts for each concept (see more results and the example of our prompt structure in Appendix~\ref{sec:add_quantitative}).

\begin{table}[t]
    \centering
    \caption{\small
    Quantitative results on the \textit{spatial color} and \textit{complex} subsets of T2I-CompBench.
    }
    \label{tab:t2i_compbench_complex}
    \scriptsize
    \setlength{\tabcolsep}{2.5pt}
    \renewcommand{\arraystretch}{0.85}
    \resizebox{\columnwidth}{!}{%
    \begin{tabular}{@{}lccccc@{}}
        \toprule
        \textbf{Method} &
        \textbf{Vendi (Inc.)} $\uparrow$ &
        \textbf{Vendi (Pixel)} $\uparrow$ &
        \textbf{CLIP} $\uparrow$ &
        $\mathbf{1-\text{MS-SSIM}}$ $\uparrow$ &
        \textbf{KID} $\downarrow$ \\
        \midrule
        CADS      & 3.45 $\pm$ 0.11 & 2.43 $\pm$ 0.07 & 39.04 $\pm$ 0.20 & \textbf{0.891 $\pm$ 0.025} & 28.50 $\pm$ 0.80 \\
        FM-SD3.5  & 3.44 $\pm$ 0.22 & 2.42 $\pm$ 0.13 & 38.96 $\pm$ 0.23 & 0.889 $\pm$ 0.026 & 28.52 $\pm$ 0.78 \\
        DPP       & 3.41 $\pm$ 0.09 & 2.56 $\pm$ 0.09 & 38.96 $\pm$ 0.46 & 0.877 $\pm$ 0.017 & 29.17 $\pm$ 0.77 \\
        PG        & 3.42 $\pm$ 0.15 & 2.46 $\pm$ 0.07 & \textbf{39.17 $\pm$ 0.40} & 0.861 $\pm$ 0.029 & 30.30 $\pm$ 0.72 \\
        APG       & 3.34 $\pm$ 0.10 & 2.54 $\pm$ 0.12 & 39.08 $\pm$ 0.28 & 0.882 $\pm$ 0.027 & 30.34 $\pm$ 0.91 \\
        \rowcolor{lightblue}
        \textbf{OSCAR} & \textbf{3.96 $\pm$ 0.13} & \textbf{2.92 $\pm$ 0.09} & 39.15 $\pm$ 0.30 & 0.890 $\pm$ 0.029 & \textbf{25.15 $\pm$ 0.58} \\
        \bottomrule
    \end{tabular}%
    }
    \vspace{-2.5em}
\end{table}

\noindent\textbf{Generalization to Complex Prompts.}
To further evaluate whether our method remains effective under more challenging text conditions, we conduct additional experiments on the \textit{spatial color} and \textit{complex} subsets of T2I-CompBench~\citep{huang2023t2i}. For each subset, we randomly sample 100 prompts and generate 16 images per prompt, following the same evaluation protocol as in the main experiments. As detailed in Table~\ref{tab:t2i_compbench_complex}, our method achieves the best overall diversity--quality trade-off on these compositional prompts. It consistently improves sample diversity while preserving strong quality and prompt adherence, indicating that the advantage of our approach is not limited to simple prompts. In contrast, although Mix ODE/SDE is competitive on certain diversity measurements, it exhibits a noticeable degradation in quality-related scores. Visual comparisons are provided in Appendix~\ref{sec:appendix_more_visuals}.

\noindent\textbf{Intra-Class Mode Discovery.} The robust performance stems from our method's ability to discover and represent fine-grained, intra-class modes more effectively than baselines. This is demonstrated through a two-fold analysis. First, the mode coverage results in Figure \ref{fig:exp3}(a) show that while our method is competitive at stricter thresholds, it consistently converges to a higher final coverage plateau across all CFG scales. This indicates that our method ultimately identifies a broader set of modes. Furthermore, the normalized entropy results in Figure \ref{fig:exp3}(b) highlight our method's clear superiority in distributing samples among these modes. It achieves a substantially and consistently higher entropy, providing strong evidence that our generated samples are more uniformly distributed, leading to a less redundant and more semantically rich output set (see additional results in Appendix~\ref{sec:appendix_coverage_entropy}).

\begin{figure}
    \centering
    \includegraphics[width=0.7\linewidth]{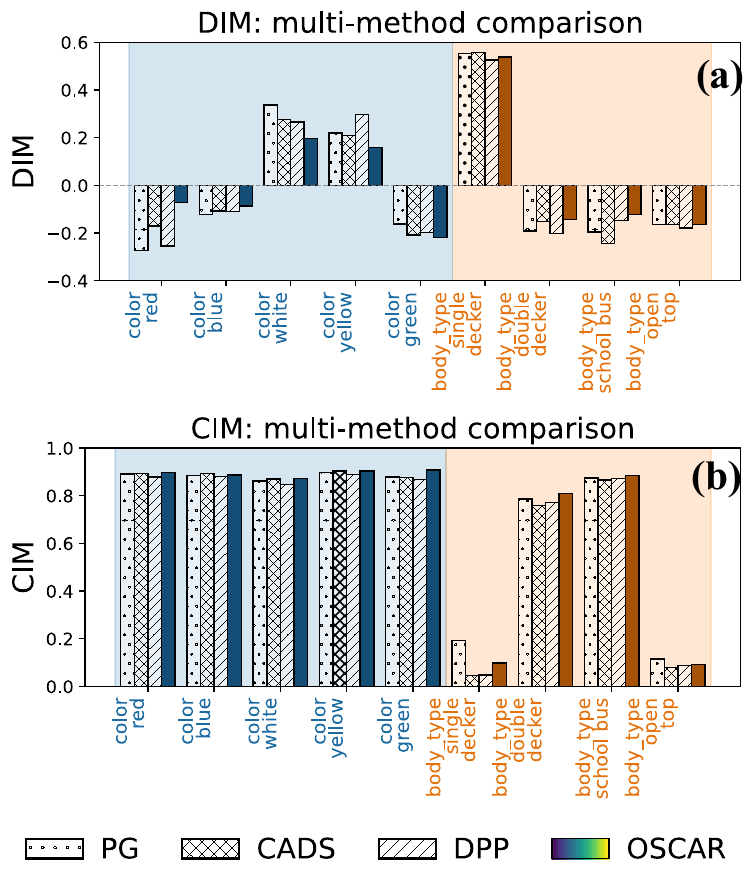}
    \caption{Evaluation of attribute-level diversity and generalization on the \textit{bus} concept. DIM quantifies the balance of attributes generated from coarse prompts (a score closer to 0 indicates better balance), while CIM assesses the ability to follow explicit attribute requests (a higher score is better).}
    \label{fig:DIM_CIM_bus}
    \vspace{-1em}
\end{figure}

\noindent\textbf{Attribute-Level Diversity.}
At the semantic level, we analyze our method's ability to balance default-mode diversity with explicit controllability. For this evaluation on the \textit{bus} concept, we generated images for each core attribute using the structured prompt system detailed in Appendix~\ref{sec:appendix_dim_cim_details}. The results in Figure~\ref{fig:DIM_CIM_bus}(a) show that while baseline methods exhibit strong biases, our method consistently achieves scores closer to zero, signifying a more balanced generation. Crucially, this improved balance does not compromise instruction-following, as Figure \ref{fig:DIM_CIM_bus}(b) shows our method maintains high, competitive CIM scores. This demonstrates that our method successfully overcomes the typical trade-off between default-mode diversity and explicit control.

\subsection{Ablation Study}

This section explores the role of the most important safeguards and parameters in our method on the final quality and diversity of generated samples. For a more detailed analysis of our method's robustness to different noise distributions, please see Appendix~\ref{sec:appendix_robust}, where we show that our approach remains effective across different sampling strategies, backbone architectures, and feature encoders.

\begin{figure}[h]
    \centering
    \includegraphics[width=1\linewidth]{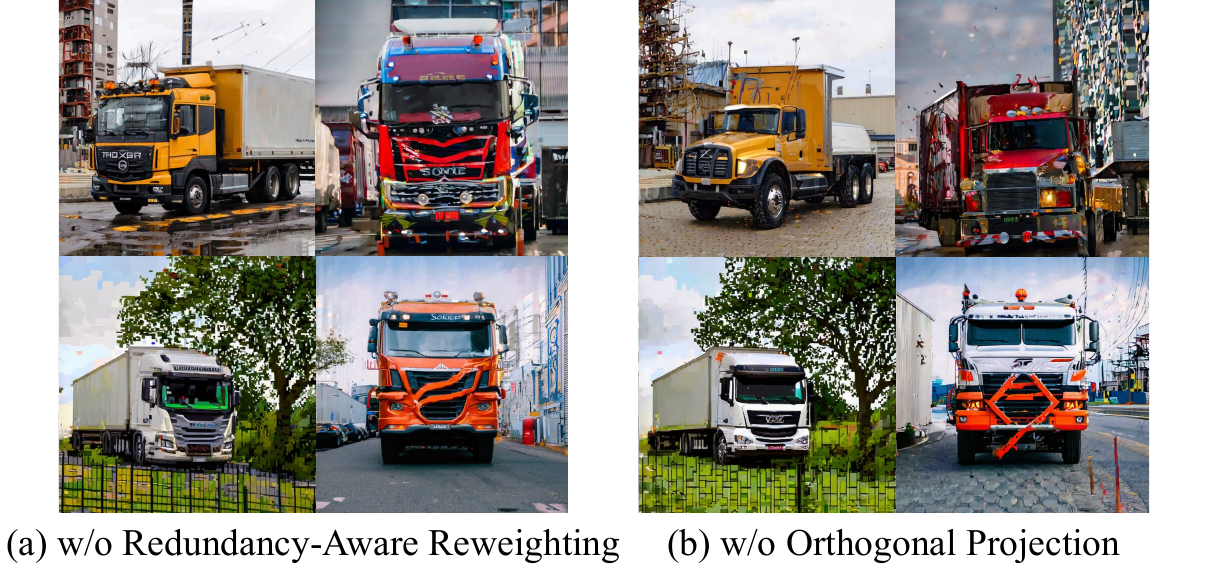}
    \caption{Ablation study of two fidelity safeguards.}
    \label{fig:ablation_wo_visuals}
\end{figure}

\begin{table}[t]
    \centering
    \caption{\small
    Comparison with baselines and stepwise ablation of our fidelity safeguards at a fixed guidance $CFG{=}5.0$.
    Our full model outperforms both the foundational baseline and prior methods.
    The ablation study shows that each component contributes to the final performance.
    }
    \label{tab:ablation_quality}
    
    \renewcommand{\arraystretch}{0.75} 
    \setlength{\aboverulesep}{2pt} 
    \setlength{\belowrulesep}{2pt}
    
    \setlength{\tabcolsep}{1.5pt} 
    
    \resizebox{\columnwidth}{!}{%
        \begin{tabular}{lcccccc}
            \toprule
            \textbf{Method} &
            \makecell{\textbf{CLIP}\\\textbf{Score}$\uparrow$} & 
            \makecell{\textbf{Vendi}\\\textbf{(Pixel)} $\uparrow$} &
            \makecell{\textbf{Vendi}\\\textbf{(Inc.)} $\uparrow$} & 
            \makecell{\textbf{FID}$\downarrow$} &
            \makecell{\textbf{BRISQUE}$\downarrow$} \\
            
            \specialrule{\heavyrulewidth}{\aboverulesep}{\belowrulesep}
            FM-SD3.5
            & 28.24 $\pm$ 0.18 & 2.45 $\pm$ 0.13 & 5.37 $\pm$ 0.27 & 164.4 $\pm$ 1.8 & 23.4 $\pm$ 1.4 \\
            
            \specialrule{\heavyrulewidth}{\aboverulesep}{\belowrulesep}
            DPP
            & 28.84 $\pm$ 0.22 & 2.55 $\pm$ 0.23 & 5.49 $\pm$ 0.16 & 171.2 $\pm$ 1.9 & 26.1 $\pm$ 3.0 \\
            CADS
            & 27.32 $\pm$ 0.21 & 2.54 $\pm$ 0.28 & 5.58 $\pm$ 0.19 & 169.1 $\pm$ 2.5 & 24.0 $\pm$ 1.9 \\
            PG
            & 28.27 $\pm$ 0.27 & 2.54 $\pm$ 0.13 & 5.43 $\pm$ 0.18 & 163.7 $\pm$ 1.3 & 24.2 $\pm$ 3.3 \\
            
            \specialrule{\heavyrulewidth}{\aboverulesep}{\belowrulesep}
            w/o OP \& RR
            & 26.70 $\pm$ 0.23 & 2.82 $\pm$ 0.20 & 4.86 $\pm$ 0.24 & 185.9 $\pm$ 1.8 & 50.1 $\pm$ 2.8 \\
            w/o RR
            & 27.26 $\pm$ 0.39 & 2.77 $\pm$ 0.16 & 5.23 $\pm$ 0.28 & 174.4 $\pm$ 3.0 & 35.0 $\pm$ 1.5 \\
            w/o OP
            & 26.59 $\pm$ 0.25 & 2.75 $\pm$ 0.19 & 5.06 $\pm$ 0.16 & 177.8 $\pm$ 3.2 & 38.8 $\pm$ 1.5 \\
            
            \rowcolor{lightblue}
            \textbf{Oscar}
            & \textbf{28.26 $\pm$ 0.22} & \textbf{2.86 $\pm$ 0.05} & \textbf{5.63 $\pm$ 0.22} & \textbf{163.3 $\pm$ 1.6} & \textbf{21.2 $\pm$ 1.5} \\
            \bottomrule
        \end{tabular}%
    }
\end{table}

\noindent\textbf{The Core Role of Fidelity Safeguards.}
We validate our fidelity safeguards by comparing the full OSCAR model against two ablated variants: one without Orthogonal Projection (w/o OP), and one without Redundancy-Aware (w/o RR). The quantitative results in Table~\ref{tab:ablation_quality} show that removing these components consistently harms all metrics, with variants that drop OP suffering the most severe degradation. The visual results in Figure~\ref{fig:ablation_wo_visuals} further confirm this degradation, illustrating that these ablated variants are prone to chaotic textures and implausible artifacts, whereas our method maintains both diversity and realism. More comprehensive qualitative comparisons are provided in Appendix~\ref{sec:appendix_more_visuals}.

\subsection{Sensitive Analysis}
This section will explore the role of the most important parameters in our method on the final quality and diversity of generated samples.

\begin{figure*}[t]
    \centering
    \begin{minipage}[t]{0.32\textwidth}
        \centering
        \textbf{(a) Influence of $\boldsymbol{\lambda}$} \\ \vspace{2pt}
        \begin{tabular}{@{}c@{\hspace{2pt}}c@{}}
             \footnotesize $\lambda=0.3$ & \footnotesize $\lambda=0.9$ \\
             \includegraphics[width=0.48\linewidth]{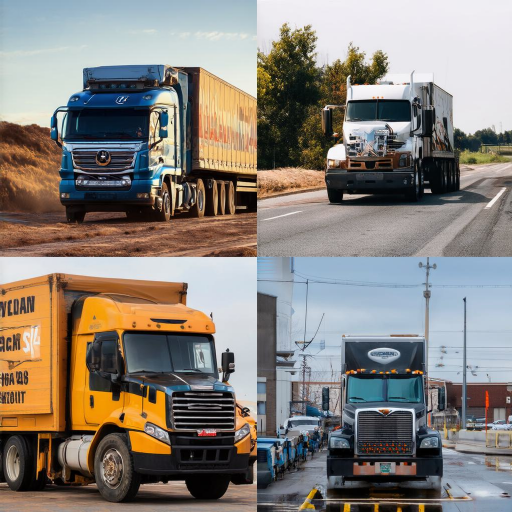} &
             \includegraphics[width=0.48\linewidth]{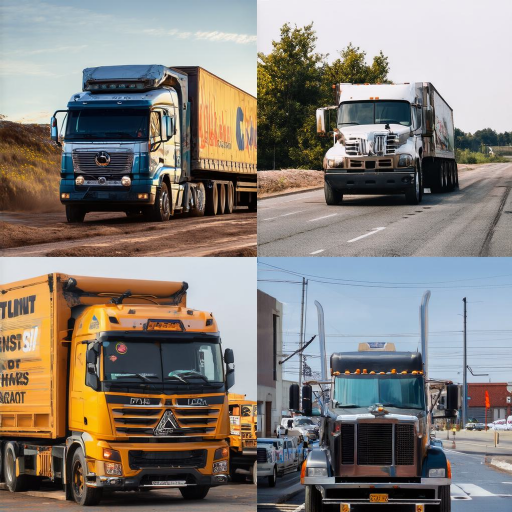}
        \end{tabular}
    \end{minipage}
    \hfill
    \begin{minipage}[t]{0.32\textwidth}
        \centering
        \textbf{(b) Influence of $\boldsymbol{\alpha}$} \\ \vspace{2pt}
        \begin{tabular}{@{}c@{\hspace{2pt}}c@{}}
             \footnotesize $\alpha=0.1$ & \footnotesize $\alpha=1.0$ \\
             \includegraphics[width=0.48\linewidth]{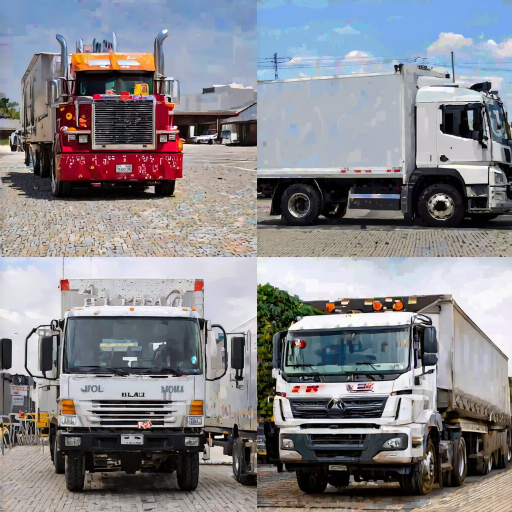} &
             \includegraphics[width=0.48\linewidth]{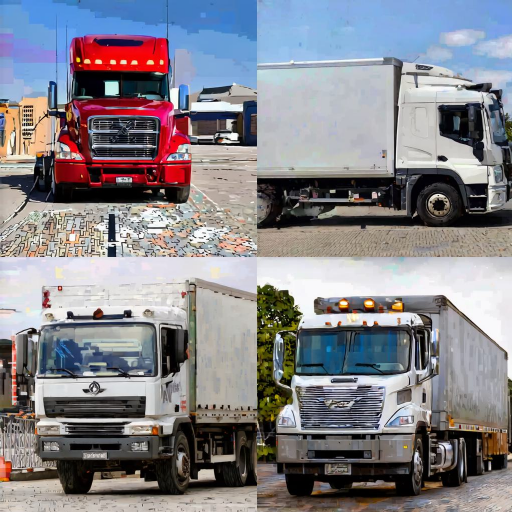}
        \end{tabular}
    \end{minipage}
    \hfill
    \begin{minipage}[t]{0.32\textwidth}
        \centering
        \textbf{(c) Influence of $\boldsymbol{t_{\text{gate}}}$} \\ \vspace{2pt}
        \begin{tabular}{@{}c@{\hspace{2pt}}c@{}}
             \footnotesize $t=0.70$ & \footnotesize $t=1.00$ \\
             \includegraphics[width=0.48\linewidth]{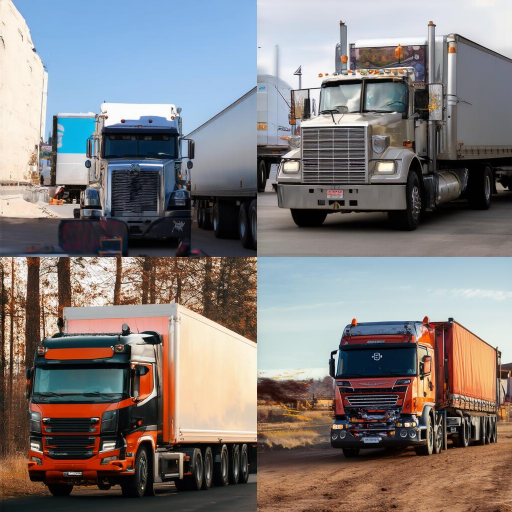} &
             \includegraphics[width=0.48\linewidth]{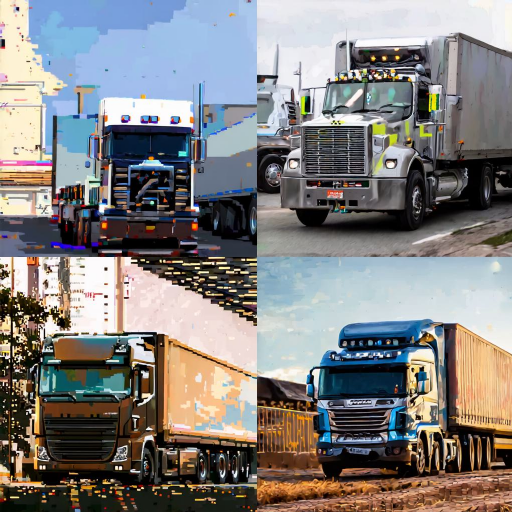}
        \end{tabular}
    \end{minipage}
    \vspace{-0.5em}
    \caption{Visual ablation study. We compare visual outcomes for varying $\lambda$ (a), $\alpha$ (b), and $t_{\text{gate}}$ (c). The corresponding quantitative metrics are detailed in Table~\ref{tab:ablation_combined}.}
    \label{fig:ablation_visuals}
    \vspace{-1em}
\end{figure*}

\noindent\textbf{Orthogonality $\boldsymbol{\lambda}$.}
The influence of the orthogonality strictness $\lambda$ is detailed in Figure~\ref{fig:ablation_visuals}(a). This parameter has a critical impact on generation quality but a minor effect on diversity. As the data shows, relaxing the constraint by decreasing $\lambda$ leads to a noticeable degradation in fidelity, confirming that a strict orthogonal projection ($\lambda \approx 0.9$) is essential for preserving image quality.

\noindent\textbf{Leverage exponent $\boldsymbol{\alpha}$.}
The effect of the leverage exponent $\alpha$ is shown in Figure~\ref{fig:ablation_visuals}(b). The results reveal that an intermediate value (e.g., $\alpha=0.5$) is better, as both fully adaptive ($\alpha=1.0$) and uniform ($\alpha=0.0$) weighting are found to degrade sample quality and reduce diversity.

\noindent\textbf{Noise gate $\boldsymbol{t_{\text{gate}}}$.}
Figure~\ref{fig:ablation_visuals}(c) illustrates the impact of the noise injection window by comparing extreme scenarios. While a conservative, late-stage injection is effective, extending the window to the entire generation process degrades sample quality as expected. However, these extremes belie the method's true stability, as our approach is in fact highly robust across a wide range of reasonable settings. As detailed in Appendix~\ref{sec:add_noise}, we further demonstrate robustness with respect to the noise gate, the noise scale, and the choice of noise schedule.

\begin{table}[t]
    \centering
    \caption{Quantitative ablation results for key parameters.}
    \label{tab:ablation_combined}
    \resizebox{\columnwidth}{!}{%
        \begin{tabular}{ccc @{\hspace{1.5em}} ccc @{\hspace{1.5em}} ccc}
            \toprule
            \multicolumn{3}{c}{\textbf{(a) Influence of $\boldsymbol{\lambda}$}} & 
            \multicolumn{3}{c}{\textbf{(b) Influence of $\boldsymbol{\alpha}$}} & 
            \multicolumn{3}{c}{\textbf{(c) Influence of $\boldsymbol{t_{\text{gate}}}$}} \\
            \cmidrule(r){1-3} \cmidrule(lr){4-6} \cmidrule(l){7-9} 
            
            $\boldsymbol{\lambda}$ & \textbf{Brisque} $\downarrow$ & \textbf{$\mathrm{VS}_{p}$} $\uparrow$ &
            $\boldsymbol{\alpha}$  & \textbf{Brisque} $\downarrow$ & \textbf{$\mathrm{VS}_{p}$} $\uparrow$ &
            $\boldsymbol{t_{\text{gate}}}$ & \textbf{Brisque} $\downarrow$ & \textbf{$\mathrm{VS}_{p}$} $\uparrow$ \\ 
            \midrule
            
            0.9 & 21.56 & 2.61 & 1.0 & 40.39 & 2.50 & 0.15 & 23.13 & 2.60 \\
            0.6 & 21.69 & 2.30 & 0.5 & 20.63 & 2.66 & 0.70 & 33.67 & 2.48 \\
            0.3 & 23.64 & 2.34 & 0.1 & 44.28 & 2.49 & 1.00 & 53.88 & 2.21 \\ 
            \bottomrule
        \end{tabular}%
    }
\end{table}
\vspace{-1mm}

\noindent\textbf{Overall robustness to hyperparameters.}
While OSCAR introduces several hyperparameters, an important empirical observation is that the method is not sensitive to their precise tuning. Across all three axes above, we find that performance degradation only occurs under deliberately extreme configurations. In contrast, within a broad and practically reasonable range of values, the diversity--quality balance remains remarkably stable. This behavior reflects the role of these parameters as structural safeguards rather than fine-grained tuning knobs. The orthogonality factor $\lambda$ primarily determines whether diversity guidance interferes with the base flow, the leverage exponent $\alpha$ controls the degree of redundancy suppression among samples, and the noise gate $t_{\text{gate}}$ specifies when stochastic exploration is permitted. Once these mechanisms operate within their intended regimes, their exact magnitudes have only a minimal effect on the outcome. This robustness allows us to use a single, fixed set of hyperparameters across all experiments (see details in Appendix~\ref{sec:hyper_setup}). As a result, OSCAR does not require careful hyperparameter tuning to achieve strong performance, and its gains in diversity are consistently realized without sacrificing fidelity across a wide range of settings.

\section{Conclusion and Broader Discussion}

In this work, we addressed the critical challenge of the fidelity-diversity trade-off in flow-matching models by introducing a training-free, geometry-consistent control framework. Our method combines a deterministic guidance signal with controllable stochastic noise to enhance semantic diversity. The key innovation is that both components are projected to be orthogonal to the model's base velocity field, which decouples the diversity-seeking signal from the quality-generating flow. Our experiments show that this design yields a stronger precision-recall trade-off and improves a suite of diversity metrics while preserving image quality and prompt alignment. Beyond the inference-time setting studied in this paper, our geometry-consistent control principle also suggests a broader perspective on exploration in flow-based generative models.

Recent RL post-training methods for flow-based generative models often introduce stochasticity through ODE-to-SDE conversion or mixed ODE/SDE sampling to improve exploration during online optimization~\citep{liu2025flow,li2025mixgrpo,xue2025dancegrpo}. This shares a similar motivation with OSCAR, but our results highlight a key distinction between unconstrained stochastic exploration and quality-preserving exploration. As shown in Appendix~\ref{sec:appendix_mix_ode_sde}, a simple Mix ODE/SDE sampler can improve certain diversity-related metrics, but without an explicit fidelity-preserving mechanism, it often degrades visual quality and prompt alignment. This suggests that exploration in flow-based generation should not be reduced to injecting randomness into the trajectory.

From this perspective, OSCAR provides a geometry-constrained form of exploration. Its set-level endpoint objective encourages trajectories to spread across semantic modes, while orthogonal projection decouples both deterministic control and stochastic perturbation from the base flow direction. This makes OSCAR complementary to RL post-training: it can serve as a training-free alternative when post-training is unavailable, and its quality-preserving control principle may provide a structured exploration prior for future RL-based optimization. Incorporating such geometry-aware diversity control into RL post-training could help broaden semantic exploration while reducing the risk of fidelity degradation.

\section*{Acknowledgements}
This research/project is supported by the National Research Foundation, Singapore under its National Large Language Models Funding Initiative (AISG Award No: AISG-NMLP-2024-003). Any opinions, findings and conclusions or recommendations expressed in this material are those of the author(s) and do not reflect the views of National Research Foundation, Singapore. 

\section*{Impact Statement}

This paper presents work whose goal is to advance the field of Machine Learning, specifically in the area of inference-time control for generative models. The proposed method aims to improve semantic diversity in conditional generation while preserving alignment and output quality. There are many potential societal consequences of generative models, including both beneficial applications in creative and scientific domains and potential risks related to misuse or biased content generation. These broader considerations are well-studied and are not unique to the techniques introduced in this work. We do not identify any additional ethical concerns or societal impacts that require specific discussion beyond those commonly associated with modern generative modeling research.

\bibliography{main}
\bibliographystyle{icml2026}

\newpage
\appendix
\onecolumn



\section{Use of LLMs}

The use of LLMs in this study was strictly limited to their function as writing and editing assistants. Their role involved improving the manuscript's grammar, spelling, and stylistic consistency. LLMs did not contribute to any core scientific aspects of the work, such as research ideation, experimental design, data analysis, or interpretation of results. All substantive content and intellectual contributions are solely the work of the authors.

\section{Theoretical Properties: Quality-Preserving Diversity via Orthogonal Stochastic Control}
\label{sec:appendix_theory}

\paragraph{Notation \& Setup.}
Let $\hat\psi:\,\mathcal X\times[0,1]\to U$ be an endpoint predictor and
$\phi:\,U\to\mathbb R^{D}$ a semantic feature encoder. For $m$ concurrent trajectories, define the
endpoint features
\[
z_i(t)\;=\;\phi\!\big(\hat\psi(x_i(t),t)\big),\qquad i=1,\dots,m,
\]
stack rows $Z(t)=[z_1(t);\ldots;z_m(t)]\in\mathbb R^{m\times D}$, and let the Gram matrix be
$K(t)=Z(t)Z(t)^\top\in\mathbb R^{m\times m}$. We use the trace–stabilized log–det set energy
\begin{equation}
\label{eq:logdet_energy_app}
\mathcal E_s(Z)\;=\;-\tfrac{1}{2}\,\log\det\!\Big(I+\tau\,K \Big),
\qquad
\tau,\varepsilon>0.
\end{equation}

Let $v_\theta(x,t\,|\,c)$ denote the base drift with optional conditioning $c$ and define the
regularized unit base direction
\[
\hat q(x,t)\;=\;\frac{v_\theta(x,t\,|\,c)}{\|v_\theta(x,t\,|\,c)\|+\delta},\qquad \delta>0,
\]
together with the quality–orthogonal projector
\[
\Pi_\perp\;=\;I-\lambda\,\hat q\,\hat q^\top,\qquad \lambda\in[0,1].
\]
Our feature–space controlled SDE is
\begin{equation}
\label{eq:sde}
dZ_t\;=\;-\Pi_\perp\,\nabla_Z \mathcal E_s(Z_t)\,dt\;+\;\sqrt{\beta(t)}\,\Pi_\perp\,dW_t,
\end{equation}
where $\beta:[0,1]\to\mathbb R_+$ is a bounded, nonincreasing noise schedule and $W_t$ is
standard Brownian motion in $\mathbb R^{m\times D}$.

\begin{assumption}[Local smoothness and bounded curvature on the orthogonal subspace]
\label{asp:A-smooth}
(i) $\phi$ and $\hat\psi$ are $C^2$ with bounded Jacobians/Hessians on the sampling region.
(ii) $\mathcal{E}_s$ is $C^2$ on $\{Z: I+\tau ZZ^\top+\varepsilon_{\mathrm{tr}}I\succ0\}$.
(iii) There exists $L_\perp>0$ such that
$\mathrm{tr}\!\big(\Pi_\perp^\top \nabla^2\mathcal{E}_s(Z)\,\Pi_\perp\big)\le L_\perp$
for all $Z$ encountered during sampling.
\end{assumption}

\begin{lemma}[Pullback identity]
\label{lem:A-pullback}
Let $\Phi(x,t)=\phi(\hat\psi(x,t))$ and $Z=[\Phi(x^{(1)},t);\ldots;\Phi(x^{(m)},t)]$.
Then for each $i$,
\[
\frac{\partial}{\partial x^{(i)}} \mathcal{E}_s(Z)
=\Big(J_x\hat\psi(x^{(i)},t)\Big)^\top \Big(J_u\phi(u)\Big)^\top
\Big[\nabla_Z \mathcal{E}_s(Z)\Big]_i\Big|_{u=\hat\psi(x^{(i)},t)} .
\]
\end{lemma}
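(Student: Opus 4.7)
The lemma is essentially a bookkeeping exercise in the multivariate chain rule, but it needs to be carried out carefully because $\mathcal{E}_s$ is a scalar function of a matrix whose rows are independent functions of the $m$ samples. The plan is to first isolate the dependency of $Z$ on a single argument $x^{(i)}$, then compose the two Jacobians that arise from $\Phi=\phi\circ\hat\psi$, and finally transpose to put the statement in the ``gradient as a column vector'' form used in the lemma.

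The first step is to note that, by construction, only the $i$-th row $Z_i=\Phi(x^{(i)},t)$ depends on $x^{(i)}$, while $Z_j$ for $j\neq i$ is independent of $x^{(i)}$. Hence, treating $\mathcal{E}_s:\mathbb{R}^{m\times D}\to\mathbb{R}$ as a function of the rows of $Z$, the chain rule gives
\[
\frac{\partial \mathcal{E}_s(Z)}{\partial x^{(i)}}
\;=\;\Big(J_{x^{(i)}} Z_i\Big)^\top\,\Big[\nabla_Z \mathcal{E}_s(Z)\Big]_i,
\]
where $[\nabla_Z\mathcal{E}_s(Z)]_i\in\mathbb{R}^D$ denotes the $i$-th row of the matrix gradient, viewed as a column vector.

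The second step is to expand $J_{x^{(i)}} Z_i$ using the definition $Z_i=\phi(\hat\psi(x^{(i)},t))$. The standard chain rule for composed maps yields
\[
J_{x^{(i)}} Z_i \;=\; \big(J_u\phi(u)\big)\big|_{u=\hat\psi(x^{(i)},t)} \cdot J_x\hat\psi(x^{(i)},t).
\]
Substituting this back and using the identity $(AB)^\top = B^\top A^\top$ recovers exactly the expression in the statement, with the two Jacobian transposes appearing in the correct ``inside-out'' order. Finally, smoothness of $\phi$ and $\hat\psi$ on the sampling region, and differentiability of $\mathcal{E}_s$ on $\{Z:I+\tau ZZ^\top+\varepsilon_{\mathrm{tr}}I\succ 0\}$ (both assumed in Assumption~\ref{asp:A-smooth}), justify the applications of the chain rule.

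The only real care needed, which is the closest thing to an obstacle, is a conventions check: one must keep track of whether gradients are rows or columns, whether $J_u\phi$ is $D\times \dim U$ or its transpose, and that the trace stabilizer $\varepsilon_{\mathrm{tr}}=\varepsilon\,\mathrm{tr}(K)/m$ inside $\mathcal{E}_s$ is itself a function of $Z$, but this only changes the internal computation of $\nabla_Z\mathcal{E}_s$ and does not affect the pullback identity, which is purely compositional in $x^{(i)}\mapsto Z_i$.
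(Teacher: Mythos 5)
Your proof is correct: the paper states this lemma without any proof, treating it as an immediate consequence of the multivariate chain rule applied to $\mathcal{E}_s\circ(\phi\circ\hat\psi)$, which is exactly the argument you give (isolate the $i$-th row, compose the Jacobians, transpose). Your added care about the $Z$-dependence of the trace stabilizer $\varepsilon_{\mathrm{tr}}$ being absorbed into $\nabla_Z\mathcal{E}_s$ rather than affecting the pullback itself is a correct and worthwhile observation that the paper leaves implicit.
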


\begin{proof}[Proof of Lemma~\ref{lem:A-pullback}]
Let $f(Z)\coloneqq \mathcal{E}_s(Z)$ and define the row-stacking map
$G(x^{(1)},\ldots,x^{(m)};t)=Z=[z_1;\ldots;z_m]$ with
$z_i=\Phi(x^{(i)},t)=\phi(\hat\psi(x^{(i)},t))$.
Fix $i$ and a direction $v\in\mathbb{R}^{\dim(x)}$. By the chain rule for
directional derivatives (Frobenius inner product), we have
\[
\frac{d}{d\epsilon}\, f\!\big(G(x^{(1)},\ldots,x^{(i)}+\epsilon v,\ldots,x^{(m)};t)\big)\Big|_{\epsilon=0}
= \big\langle \nabla_Z f(Z),\, \dot Z \big\rangle_F ,
\]
where $\dot Z$ is the first‐order variation of $Z$ induced by the perturbation
$x^{(i)}\mapsto x^{(i)}+\epsilon v$. Only the $i$‐th row varies, hence
\[
\dot z_i
= \frac{d}{d\epsilon}\, \Phi(x^{(i)}+\epsilon v,t)\Big|_{\epsilon=0}
= J_u\phi(u_i)\, J_x\hat\psi(x^{(i)},t)\, v, \qquad u_i=\hat\psi(x^{(i)},t).
\]
Using $\langle A,B\rangle_F=\sum_{j}\langle A_j,B_j\rangle$ (rowwise),
\[
\frac{d}{d\epsilon} f(\cdot)\Big|_{\epsilon=0}
= \big\langle [\nabla_Z f(Z)]_i,\; \dot z_i \big\rangle
= v^\top \Big(J_x\hat\psi(x^{(i)},t)\Big)^\top
          \Big(J_u\phi(u_i)\Big)^\top
          \big[\nabla_Z \mathcal{E}_s(Z)\big]_i .
\]
Since this holds for every $v$, we obtain
\[
\frac{\partial}{\partial x^{(i)}} \mathcal{E}_s(Z)
= \Big(J_x\hat\psi(x^{(i)},t)\Big)^\top \Big(J_u\phi(u)\Big)^\top
\Big[\nabla_Z \mathcal{E}_s(Z)\Big]_i\Big|_{u=\hat\psi(x^{(i)},t)} ,
\]
which is the desired identity. The $C^2$ conditions in
Assumption~\ref{asp:A-smooth} ensure differentiability and justify exchanging
differentiation with the Frobenius inner product. \qedhere
\end{proof}

\begin{remark}[Scope of assumptions]
We do not assume global boundedness for deep networks. Our analysis is restricted to the
operational domain $\mathcal D$ traced by the sampler (finite horizon, bounded step sizes,
regularization), where neural networks are empirically locally smooth. Such local assumptions
are standard in analyses of learned dynamics and control. In practice, weight decay/normalization
and gradient clipping further bound the effective Jacobians on $\mathcal D$.
\end{remark}

\begin{theorem}[Expected energy descent \& marginal quality preservation]
\label{thm:A-main1}
Under Assumption~\ref{asp:A-smooth}, along the SDE \eqref{eq:controlled_sde} we have
\begin{equation}
\label{eq:A-descent}
\frac{d}{dt}\,\mathbb{E}\,\mathcal{E}_s(Z_t)
\ \le\
-\,\mathbb{E}\,\big\|\Pi_\perp\nabla_Z\mathcal{E}_s(Z_t)\big\|_F^2
\ +\ \beta(t)\,L_\perp .
\end{equation}
Consequently, since $\beta(t)\!\downarrow\!0$ as $t\uparrow 1$, there exists $t^\star\!\in(0,1)$ such that
for all $t\ge t^\star$, $\tfrac{d}{dt}\mathbb{E}\,\mathcal{E}_s(Z_t)<0$ (late-stage monotonicity).
Moreover, because both the control drift and diffusion act in $\mathrm{range}(\Pi_\perp)$,
the alignment coordinate $y=\langle x,\hat q\rangle$ has the same marginal evolution as the base FM ODE
to first order in the step size: no diffusion and no control drift along $y$.
\end{theorem}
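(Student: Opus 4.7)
The plan is to attack the two claims in \ref{thm:A-main1} separately. For the expected energy descent \eqref{eq:A-descent}, I would apply Itô's formula to $\mathcal{E}_s(Z_t)$ along the feature-space SDE \eqref{eq:A-sde}. Under Assumption~\ref{asp:A-smooth}(ii), $\mathcal{E}_s$ is $C^2$ on the relevant region, so Itô produces a drift term $-\langle \nabla_Z \mathcal{E}_s,\, \Pi_\perp \nabla_Z \mathcal{E}_s\rangle_F$ plus a diffusion-induced second-order term $\tfrac12 \beta(t)\,\mathrm{tr}(\Pi_\perp^\top \nabla^2 \mathcal{E}_s\,\Pi_\perp)$ arising from the Brownian quadratic variation, with the martingale part vanishing in expectation.

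The key step is to turn the drift inner product into the squared Frobenius norm that appears on the right-hand side of \eqref{eq:A-descent}. Since $\Pi_\perp = I - \lambda\hat q\hat q^\top$ with $\|\hat q\|=1$ and $\lambda\in[0,1]$ is symmetric with eigenvalues in $\{1-\lambda,\,1\}\subset[0,1]$, one has $\Pi_\perp \succeq \Pi_\perp^2$, so $\langle g,\Pi_\perp g\rangle \ge \|\Pi_\perp g\|_F^2$ for every $g$. Combining this with the curvature bound $\mathrm{tr}(\Pi_\perp^\top \nabla^2 \mathcal{E}_s\,\Pi_\perp)\le L_\perp$ from Assumption~\ref{asp:A-smooth}(iii) (and absorbing the $\tfrac12$ factor into $L_\perp$) gives \eqref{eq:A-descent}. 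Late-stage monotonicity then drops out: once $\beta(t) < \mathbb{E}\|\Pi_\perp\nabla_Z\mathcal{E}_s(Z_t)\|_F^2/L_\perp$, the bound is strictly negative, and \eqref{eq:A-beta} together with the assumed decay $\beta(t)\!\downarrow\!0$ guarantees such a $t^\star$ exists provided the projected gradient is not almost-surely zero on a neighborhood of $t=1$.

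For the marginal quality preservation, the approach is to pull the analysis back to state space and track the alignment coordinate $y^{(i)}=\langle x^{(i)}, \hat q(x^{(i)},t)\rangle$. After applying the safeguards of Section~\ref{sec:safeguards_OP}, the state-space SDE has drift $v_\theta - \Pi_\perp g$ and diffusion $\sqrt{\beta(t)}\,\Pi_\perp\,dW$. Projecting onto $\hat q$ and using the identity $\Pi_\perp \hat q = (1-\lambda)\hat q$, both the control-drift component and the diffusion component along $\hat q$ are scaled by $(1-\lambda)$; in the strict orthogonal regime ($\lambda=1$) they vanish exactly, so $dy \approx \langle v_\theta,\hat q\rangle\,dt$, matching the base ODE.

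The main obstacle I anticipate is making the ``to first order in the step size'' clause rigorous: $\hat q$ depends on $x$ through $v_\theta$, so an exact pointwise cancellation only holds infinitesimally and a Heun step evaluates the projector at a slightly displaced state. I would control this via a Taylor expansion of $\hat q(x,t)$ using the bounded Jacobians from Assumption~\ref{asp:A-smooth}(i), showing that the residual drift and Itô correction along $y$ are $O(\Delta t^{2})$ while the base-ODE contribution is $O(\Delta t)$, which is exactly the sense of ``first-order preservation'' claimed. A secondary delicate point is ensuring the local-smoothness domain remains invariant under the controlled dynamics; the gradient clipping and trace stabilization already built into Algorithm~\ref{alg:oscar_sampler} provide the natural mechanism for closing that gap.
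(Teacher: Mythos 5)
Your proposal is correct and follows essentially the same route as the paper's own proof: Itô's lemma on $\mathcal{E}_s(Z_t)$ with the curvature bound of Assumption~\ref{asp:A-smooth}(iii) for the descent inequality, and a decomposition of the state into the alignment coordinate $y=\langle x,\hat q\rangle$ and its orthogonal complement (with $\hat q$ frozen at the Heun midpoint) for the marginal-preservation claim. Your additional observations --- that $\Pi_\perp\succeq\Pi_\perp^2$ is needed to convert the drift inner product into $\|\Pi_\perp\nabla_Z\mathcal{E}_s\|_F^2$ when $\lambda<1$, and that late-stage strict negativity requires the projected gradient not to vanish near $t=1$ --- are refinements the paper leaves implicit, not deviations.
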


\begin{proof}
Apply Itô's lemma to the twice-differentiable energy $\mathcal{E}_s(Z_t)$ under
\eqref{eq:sde}. Writing $G(Z)=\nabla_Z\mathcal{E}_s(Z)$ and $H(Z)=\nabla_Z^2\mathcal{E}_s(Z)$,
with drift $b(Z_t)=-\Pi_\perp G(Z_t)$ and diffusion matrix
$\Sigma_t=\sqrt{\beta(t)}\,\Pi_\perp$, we obtain
\[
\frac{d}{dt}\mathbb{E}\,\mathcal{E}_s(Z_t)
= \mathbb{E}\big[\langle G(Z_t),\, b(Z_t)\rangle_F\big]
  + \tfrac{1}{2}\,\mathbb{E}\!\big[\!\langle H(Z_t),\, \Sigma_t\Sigma_t^\top\rangle_F\big].
\]
The first term equals $-\mathbb{E}\,\|\Pi_\perp G(Z_t)\|_F^2$. For the second term, using
$\Sigma_t\Sigma_t^\top=\beta(t)\,\Pi_\perp$ and Assumption~\ref{asp:A-smooth}(iii),
\[
\tfrac{1}{2}\,\mathbb{E}\!\big[\!\langle H(Z_t),\, \Sigma_t\Sigma_t^\top\rangle_F\big]
= \tfrac{\beta(t)}{2}\,\mathbb{E}\!\big[\mathrm{tr}(\Pi_\perp^\top H(Z_t)\Pi_\perp)\big]
\ \le\ \beta(t)\,L_\perp ,
\]
absorbing the factor $\tfrac12$ into $L_\perp$ if desired. Combining the two gives
\eqref{eq:A-descent}. Since $\beta(t)$ is bounded and $\beta(t)\!\downarrow\!0$ as $t\uparrow 1$,
there exists $t^\star$ such that the negative first term dominates for all $t\ge t^\star$,
hence $\tfrac{d}{dt}\mathbb{E}\,\mathcal{E}_s(Z_t)<0$.

For the alignment coordinate $y=\langle x,\hat q\rangle$, note that both the control drift
$-\Pi_\perp G$ and the diffusion $\sqrt{\beta(t)}\,\Pi_\perp dW_t$ lie in $\mathrm{range}(\Pi_\perp)$,
which is orthogonal to $\hat q$. Therefore, to first order in the step size, there is neither
additional drift nor diffusion along $y$ beyond that of the base FM ODE, establishing marginal
quality preservation. \qedhere
\end{proof}

\begin{remark}[Discretization and redundancy-aware reweighting]
\label{rem:disc-rew}
\leavevmode
\begin{enumerate}[label=\textnormal{(\roman*)}, leftmargin=1.5em, itemsep=2pt, topsep=2pt]

\item \textbf{Heun-style discretization.}
With step size $\Delta t$, the update reads
\begin{equation}
\label{eq:heun-step}
x_{k+1}
= x_k + \Delta t\!\left[\tfrac12\!\left(v_k + v_{k+1}\right) + u_k\right],
\qquad
u_k = \Pi_\perp g(x_k,t_k)\;+\;\sqrt{\beta(t_k)\,\Delta t}\;\Pi_\perp \xi_k,
\ \ \xi_k \sim \mathcal N(0,I).
\end{equation}
This inherits the descent certificate \eqref{eq:A-descent} up to an $O(\Delta t^2)$ truncation error.
In practice, we use $\Delta t \le 0.05$, making the discretization error negligible relative to the stochastic variability.

\item \textbf{Redundancy-aware reweighting.}
Let $s_i = \big[(K+\varepsilon I)^{-1}\big]_{ii}$ and set $w_i \propto s_i^{\alpha}$ ($\alpha>0$).
With $W=\mathrm{diag}(w_1,\dots,w_m)$, replace $K$ by $\tilde K = W^{1/2} K W^{1/2}$ in the energy
\eqref{eq:logdet_energy_app}. This preconditioning boosts underrepresented samples and attenuates
redundant ones, and it preserves the descent inequality \eqref{eq:A-descent} with a possibly different constant $L_\perp$, thereby stabilizing the set gradient without an explicit trust region.
\end{enumerate}
\end{remark}

\noindent\emph{Link to Theorem~\ref{thm:A-main2}.}
Replacing $K$ by $\tilde K$ keeps the identity
$-\mathcal E_s(Z)=\tfrac12\log\det\!\big(I+\tau\,\tilde K+\varepsilon_{\mathrm{tr}}I\big)$,
so Theorem~\ref{thm:A-main2} applies verbatim to the \emph{weighted} set as a weighted
volume/diversity measure.


\begin{assumption}[Feature regularity for geometric interpretation]
\label{asp:feat-regular}
(\emph{Approx.~centering}) $\sum_{i=1}^m z_i \approx 0$;\quad
(\emph{Norm stabilization}) $\|z_i\|^2 \in [1-\rho,\,1+\rho]$ for a small $\rho$.
\end{assumption}

\begin{remark}[Practical scope]
CLIP-style encoders output $\ell_2$-normalized features and we further apply mini-batch centering.
The objective \eqref{eq:A-volume} does \emph{not} require strict centering; Assumption~\ref{asp:feat-regular}
is only used to interpret log-det as an isotropic ``volume'' and to link it to diversity measures.
\end{remark}

\begin{theorem}[Energy $\downarrow$ $\Longleftrightarrow$ Volume $\uparrow$ $\Longrightarrow$ Diversity $\uparrow$]
\label{thm:A-main2}
Let $K=ZZ^\top$ and define the set volume
\[
\mathcal V(Z)\;=\;\det\!\big(I+\tau K+\varepsilon_{\mathrm{tr}}I\big)^{1/2}.
\]
Then
\begin{equation}
\label{eq:A-volume}
-\mathcal E_s(Z)\;=\;\tfrac12\log\det\!\big(I+\tau K+\varepsilon_{\mathrm{tr}}I\big)\;=\;\log \mathcal V(Z).
\end{equation}
Consequently, by Theorem~\ref{thm:A-main1}, $\;\mathbb{E}\,\log\mathcal V(Z_t)$ is (late-stage) increasing.
Moreover, under Assumption~\ref{asp:feat-regular}:
\begin{enumerate}\itemsep4pt
\item[\textnormal{(i)}] (\emph{Angles / correlations}) With approximately unit-norm rows, increasing $\mathcal V(Z)$
reduces average pairwise correlations and increases pairwise angles among $\{z_i\}$.
\item[\textnormal{(ii)}] (\emph{Spectrum uniformity}) Since $\mathrm{tr}(K)=\sum_i\|z_i\|^2\approx m$ is (approximately) fixed,
the concavity of $\log\det(\cdot)$ implies that a larger $\log\det(I+\tau K+\varepsilon_{\mathrm{tr}}I)$ corresponds to a more
uniform spectrum of $K$ (equivalently, more balanced singular values of $Z$), indicating higher coverage/diversity.
\end{enumerate}
\end{theorem}

\begin{proof}[Proof]
The identity \eqref{eq:A-volume} is immediate from the definition of $\mathcal E_s$.
Late-stage monotonicity of $\mathbb{E}\log\mathcal V(Z_t)$ follows from Theorem~\ref{thm:A-main1}.

For (ii) (\emph{spectrum uniformity}), write $A=\alpha I+\tau K$ with $\alpha=1+\varepsilon_{\mathrm{tr}}>0$.
Let $(\lambda_j)$ be the eigenvalues of $K$. Then
\[
\log\det A=\sum_{j=1}^m \log(\alpha+\tau\lambda_j).
\]
As a symmetric concave function of $(\lambda_j)$, this sum is \emph{Schur-concave}; under the (approximate)
trace constraint $\sum_j\lambda_j=\mathrm{tr}(K)\approx m$, it is maximized when $(\lambda_j)$ is more balanced
(Jensen/Karamata). Hence larger $\log\det A$ implies a more uniform spectrum of $K$ (and thus more balanced singular
values of $Z$).

For (i) (\emph{correlations/angles}), under Assumption~\ref{asp:feat-regular} we have $\operatorname{diag}(K)\approx\mathbf 1$,
so the diagonals are (approximately) fixed. Using
\[
\|K\|_F^2=\sum_{j=1}^m\lambda_j^2=\sum_{i=1}^m K_{ii}^2 + 2\sum_{i<j}K_{ij}^2\;\approx\; m + 2\sum_{i<j}K_{ij}^2,
\]
we see that, for fixed diagonals, the average squared off-diagonal magnitude is monotone in $\sum_j\lambda_j^2$.
Among spectra with fixed trace, $\sum_j\lambda_j^2$ is minimized when the spectrum is most uniform; thus the same
majorization argument used in (ii) implies $\sum_{i<j}K_{ij}^2$ decreases as $\log\det A$ increases. Since, with
(unit-norm) rows, $K_{ij}=\langle z_i,z_j\rangle$ are cosines, smaller off-diagonals mean reduced average correlations
and hence larger pairwise angles. This establishes (i). \qedhere
\end{proof}

\begin{theorem}[Noise robustness via budget and step size]
\label{thm:noise-robust}
Assume (i) $\hat q$ is $L_q$-Lipschitz on the operational domain; (ii) along the trajectory
$\|\nabla_Z \mathcal{E}_s(Z)\|_F \le G$; and (iii) Assumption~\ref{asp:A-smooth} holds so that
\eqref{eq:A-descent} is valid with curvature constant $L_\perp$. Let
$y_t=\langle z_t,\hat q(z_t,t)\rangle$ and $y_t^{\mathrm{base}}$ be the alignment coordinate
under the base flow (no control, no orthogonal diffusion). For an integrator with step
$\Delta t$ and any bounded, nonincreasing schedule $\beta:[0,1]\to\mathbb{R}_+$,
\begin{equation}
\label{eq:noise-bound}
\Big|\mathbb{E}\,y_1-\mathbb{E}\,y_1^{\mathrm{base}}\Big|
\ \le\ C_1\,L_q\,G\,\Delta t \;+\; C_2\,L_\perp \int_0^1 \beta(t)\,dt,
\end{equation}
for universal constants $C_1,C_2=O(1)$. Hence the deviation is $O(\Delta t)+O(B)$ with
$B:=\int_0^1 \beta(t)\,dt$.
\end{theorem}

\begin{proof}
Work with the feature-space SDE and its discretization.
Because both the control drift $-\Pi_\perp \nabla_Z \mathcal{E}_s$ and the diffusion
$\sqrt{\beta(t)}\,\Pi_\perp dW_t$ lie in $\mathrm{range}(\Pi_\perp)$, the alignment
coordinate $y=\langle z,\hat q\rangle$ receives no first-order contribution from the control
when $\hat q$ is locally frozen; the residual change is controlled by the variation of $\hat q$
and by the stochastic term. The Lipschitz property of $\hat q$ and the bound
$\|\nabla_Z \mathcal{E}_s\|_F\!\le\!G$ yield a discretization remainder bounded by
$C_1 L_q G\,\Delta t$. For the stochastic contribution,
apply Itô's isometry together with the descent certificate \eqref{eq:A-descent}:
the quadratic variation seen through the curvature bound in
Assumption~\ref{asp:A-smooth}(iii) accumulates as
$\tfrac12\,\mathbb{E}\langle \nabla_Z^2 \mathcal{E}_s,\, \beta(t)\Pi_\perp\rangle_F
\le \beta(t)\,L_\perp$, which integrates to $C_2 L_\perp \!\int_0^1\!\beta(t)\,dt$.
Combining the two parts gives \eqref{eq:noise-bound}. \qedhere
\end{proof}

\begin{proposition}[Schedule generality: budget controls the bound]
\label{prop:schedule}
Let $\beta_1,\beta_2$ be bounded, nonincreasing schedules with equal budget
$B=\int_0^1 \beta_j(t)\,dt$. Under the assumptions of Theorem~\ref{thm:noise-robust},
the bound \eqref{eq:noise-bound} is the same for $\beta_1$ and $\beta_2$ (up to constants),
and for any $a>0$, replacing $\beta$ by $\beta_a(t)=a\,\beta(t)$ scales the bound linearly
with $a$ via $B_a=a\,B$.
\end{proposition}

\begin{proof}
Immediate from \eqref{eq:noise-bound}, which depends on $\beta$ only through the integral
$\int_0^1 \beta(t)\,dt$. \qedhere
\end{proof}

\begin{remark}[Schedules and matching a target budget]
\label{rem:schedules}
A convenient closed form is
$\displaystyle \beta(t)=\frac{1-t}{1-t+\varepsilon_\beta}$ with $\varepsilon_\beta\in(0,1)$,
which decreases from $\tfrac{1}{1+\varepsilon_\beta}\!\approx\!1$ at $t{=}0$ to $0$ at $t{=}1$.
Common monotone decay choices include:
\[
\beta_0^{\text{cos}}(t)=\cos^2\!\Big(\frac{\pi t}{2}\Big),\quad
\beta_0^{\text{poly}}(t)=(1-t)^p\ (p\!\ge\!1),\quad
\beta_0^{\exp}(t)=\frac{e^{-\kappa t}-e^{-\kappa}}{1-e^{-\kappa}}\ (\kappa\!>\!0).
\]
To match a desired noise budget $B$, set $\tilde\beta(t)=a\,\beta_0(t)$ with
$a=B\big/\!\int_0^1 \beta_0(t)\,dt$.
We provide empirical comparisons of these schedules in
Appendix~\ref{sec:appendix_noise_schedules}.
\end{remark}

\begin{theorem}[Girsanov Representation of OSCAR Path Measure]
Let $\mathbb{P}$ be the path measure induced by the reference process (with orthogonal noise but no guidance) and $\mathbb{Q}$ be the path measure induced by the full OSCAR process on the time interval $[0, 1]$. Assuming the guidance signal $g(x, t)$ and the noise schedule satisfy Novikov's condition, the Radon-Nikodym derivative (likelihood ratio) of the generated trajectories is given by:

$$
\frac{d\mathbb{Q}}{d\mathbb{P}}(X_{[0,1]}) = \exp \left( \int_{0}^{1} \frac{1}{\sqrt{\beta(t)}} \langle g(X_{t}, t), dW_{t} \rangle - \frac{1}{2} \int_{0}^{1} \frac{\|g(X_{t}, t)\|^{2}}{\beta(t)} dt \right)
$$

where the inner products are defined in the subspace defined by $\Pi_{\perp}$.

Furthermore, the KL divergence between the OSCAR trajectory distribution and the reference distribution is exactly the energy cost of the guidance:

$$
\mathcal{D}_{KL}(\mathbb{Q} \| \mathbb{P}) = \frac{1}{2} \mathbb{E}_{\mathbb{Q}} \left[ \int_{0}^{1} \frac{\|g(X_{t}, t)\|^{2}}{\beta(t)} dt \right]
$$
\end{theorem}

\begin{theorem}[Process-level coupling bound]
\label{thm:proc-coupling}
Let the baseline process follow the probability-flow ODE
$\,dX_t=v_\theta(X_t,t)\,dt\,$ and the controlled process follow
\[
d\tilde X_t=\Big(v_\theta(\tilde X_t,t)-\Pi_\perp(\tilde X_t,t)\,g(\tilde X_t,t)\Big)\,dt
            +\sqrt{\beta(t)}\,\Pi_\perp(\tilde X_t,t)\,dW_t,
\]
driven by the \emph{same} Brownian motion $W_t$ (synchronous coupling).
Assume:
(i) $v_\theta(\cdot,t)$ is $L_v$-Lipschitz for all $t\in[0,1]$;
(ii) $\Pi_\perp(\cdot,t)$ is $L_\Pi$-Lipschitz and $\|\Pi_\perp\|\le 1$;
(iii) $\beta$ is bounded and nonincreasing; and
(iv) the drift budget is finite:
$M_g:=\int_0^1\|\Pi_\perp(\cdot,t)\,g(\cdot,t)\|_\infty\,dt<\infty$.
Let $D_t:=\tilde X_t-X_t$. Then there exists $C=C(L_v,L_\Pi)$ such that
\begin{equation}
\label{eq:proc-ode-ineq}
\frac{d}{dt}\,\mathbb{E}\|D_t\|^2\ \le\ C\,\mathbb{E}\|D_t\|^2\ +\ C\,\|\Pi_\perp g(\cdot,t)\|_\infty^2\ +\ C\,\beta(t),
\end{equation}
and hence, by Grönwall,
\begin{equation}
\label{eq:proc-coupling-final}
\mathbb{E}\|D_1\|^2\ \le\ C\Bigg(\Big(\int_0^1\!\|\Pi_\perp g(\cdot,t)\|_\infty dt\Big)^2\ +\ \int_0^1\!\beta(t)\,dt\Bigg).
\end{equation}
\end{theorem}

\begin{proof}
By Itô for $D_t=\tilde X_t-X_t$,
\[
dD_t=\underbrace{\big(v_\theta(\tilde X_t,t)-v_\theta(X_t,t)\big)}_{\text{Lipschitz in }D_t}\,dt
      -\Pi_\perp(\tilde X_t,t)g(\tilde X_t,t)\,dt
      +\sqrt{\beta(t)}\,\Pi_\perp(\tilde X_t,t)\,dW_t .
\]
Applying Itô to $\|D_t\|^2$ and taking expectations,
\[
\frac{d}{dt}\mathbb{E}\|D_t\|^2
=2\,\mathbb{E}\!\left\langle D_t,\ v_\theta(\tilde X_t,t)-v_\theta(X_t,t)\right\rangle
 -2\,\mathbb{E}\!\left\langle D_t,\ \Pi_\perp g(\tilde X_t,t)\right\rangle
 +\beta(t)\,\mathbb{E}\|\Pi_\perp(\tilde X_t,t)\|_F^2 .
\]
The first term is bounded by $2L_v\,\mathbb{E}\|D_t\|^2$. For the second, use Young’s inequality:
for any $\varepsilon>0$,
\(
-2\langle D,\Pi_\perp g\rangle \le \varepsilon\|D\|^2 + \varepsilon^{-1}\|\Pi_\perp g\|^2.
\)
Choosing $\varepsilon$ to absorb into $2L_v$ yields
\[
\frac{d}{dt}\mathbb{E}\|D_t\|^2 \ \le\ C\,\mathbb{E}\|D_t\|^2\ +\ C\,\|\Pi_\perp g(\cdot,t)\|_\infty^2\ +\ C\,\beta(t),
\]
since $\|\Pi_\perp\|_F^2\le C$ by (ii). Grönwall then gives
\eqref{eq:proc-coupling-final}. \qedhere
\end{proof}

\section{Implementation Details}
\label{sec:appendix_details}

\subsection{Finite-difference endpoint extrapolation.}
\label{sec:extrapolation}
Let $z_k$ denote the latent at step $k$ (after applying OSCAR at step $k-1$), and let $\Delta z^{\mathrm{ctrl}}_{k-1}$ be the control displacement added at step $k-1$ with step size $\Delta t_{k-1}$. We estimate a local velocity by a finite difference
\[
v_k \;\approx\; \frac{z_k - z_{k-1} - \Delta z^{\mathrm{ctrl}}_{k-1}}{\Delta t_{k-1}},
\]
and extrapolate a local endpoint as
\[
z_{\mathrm{ep}} \;=\; z_k + \alpha(t_k)\, v_k,
\]
where $\alpha(t_k)$ is a scalar schedule. This extrapolation reuses already computed latents and does not require an extra evaluation of $v_\theta$, so the predictor NFE is unchanged. We initially experimented with a classical Heun-style predictor that evaluates $v_\theta$ twice per step, but found that, under matched compute, it did not yield consistent improvements while doubling the predictor NFE \citep{karras2022elucidating}. For this reason, all reported results use the finite-difference extrapolation above, which captures a similar average velocity at no additional predictor cost.

\subsection{General Experimental Setup}
All experiments are conducted on the frozen, pretrained Stable Diffusion v-3.5 Medium model \citep{esser2024scaling}, with all generations performed at a resolution of 512x512 pixels and using bfloat16 precision. All runs use the model's default flow-matching Euler scheduler
(\texttt{FlowMatchEulerDiscreteScheduler}) with 30 sampling steps. A consistent negative prompt, ``low quality, blurry", is used across all experiments. The main prompts used for generation are derived from the captions of the COCO dataset. To ensure robustness, all reported metrics are aggregated over at least four distinct random seeds. For each unique setting (i.e., a specific prompt and CFG scale), we generate a set of 64 images for general class-conditional tasks, and a set of 20 images for the DIM/CIM evaluations. All experiments were performed on a system with two NVIDIA A40 GPUs.


\subsection{Framework and Baseline Implementation Details}
Our flow-matching backbone, which serves as the foundation for our method and all baselines, is adapted from the official implementation at \url{https://github.com/facebookresearch/flow_matching}. For \textbf{Particle Guidance} \citep{corso2023particle}, which was originally designed for diffusion models, we adapted the official author implementation, publicly available at \url{https://github.com/gcorso/particle-guidance}, to the flow-matching framework. For \textbf{CADS} \citep{sadat2023cads} and \textbf{Diverse Flow} \citep{morshed2025diverseflow}, official code was not provided by the authors. Therefore, we carefully re-implemented their methodologies, strictly adhering to the descriptions and formulations presented in their respective papers. For all baselines, we performed a thorough hyperparameter search for their key parameters to ensure each method was evaluated at its strongest possible performance, guaranteeing a fair and rigorous comparison.


\subsection{Baseline Implementation Details}
All baselines are implemented as inference-time methods on the same frozen pretrained backbone, using the same sampling configuration, prompts, random seeds, and evaluation protocol as OSCAR whenever applicable. For the base model, we directly use the original flow-matching sampling procedure without additional diversity guidance. For CADS, we set $\tau_1=0.6$, $\tau_2=0.9$, $\texttt{cads\_s}=0.10$, and $\texttt{noise\_scale}=1.0$. We align its noise-related settings with those of OSCAR as much as possible for a fair comparison. In practice, we found CADS to be sensitive to the noise magnitude; for example, increasing $\texttt{noise\_scale}$ to $2.0$ often produced colored spots and visible artifacts. We therefore use the above configuration as a stable and faithful implementation. For DiverseFlow, we use $\texttt{gamma\_sched}=\texttt{sqrt}$, $\texttt{kernel\_spread}=3.0$, and $\texttt{gamma\_max}=0.12$. This follows the intended design of applying stronger diversification early in the sampling process and weaker guidance near the end, with a moderate kernel width and conservative guidance strength to avoid fidelity degradation. These settings are used consistently across experiments unless otherwise specified.

\subsection{Evaluation Setup}
Our evaluation metrics are computed using standard pretrained models. Specifically, we use OpenAI's ViT-B/32 model for calculating CLIP Scores~\citep{radford2021learning}, and a standard InceptionV3 model with ImageNet weights for all Inception-based metrics~\citep{DBLP:journals/corr/SzegedyVISW15}. For perceptual quality and human-preference alignment, we additionally report ImageReward scores using the public ImageReward model~\citep{xu2023imagereward}, and CLIP-IQA, a CLIP-based no-reference image quality metric. All these models are kept fixed and are never finetuned on our generated data. For distributional metrics that require a reference set of real images, such as FID and KID, we construct a concept-specific reference dataset to ensure a fair comparison. Since our prompts are based on concepts from the COCO dataset~\citep{lin2014microsoft}, our reference set for a given concept is composed exclusively of all images corresponding to that concept's class from the COCO training set. For example, when evaluating generated ``truck'' images, the real reference dataset consists solely of the ``truck'' images from the COCO training data. This domain-aligned setup provides a more accurate measure of distributional fidelity. For the additional complex-prompt evaluation on T2I-CompBench~\citep{huang2023t2i}, we use 300 prompts sampled from the \textit{spatial color} and \textit{complex} subsets, and generate 32 images per prompt for each method. For KID computation in this setting, we use 5,000 COCO training images as the reference set to obtain a stable estimate.

\subsection{Hyperparameters Setup}
\label{sec:hyper_setup}
Table~\ref {tab:sampler_hparams} details the key hyperparameters for our method, which govern the core components of our diversity guidance and orthogonal noise injection.

\begin{table}[t]
\centering
\begin{threeparttable}
\caption{Hyperparameters used by our sampler, with default values and brief descriptions.}
\label{tab:sampler_hparams}
\setlength{\tabcolsep}{5pt} 
\renewcommand{\arraystretch}{1.15}
\begin{tabularx}{\textwidth}{l c l X}
\toprule
Name & Symbol & Value & Meaning \\
\midrule
gamma0 & $\gamma_0$ & 0.12 & Global strength for the diversity/volume term. \\
gamma-max-ratio & $\gamma_{\max}$ & 0.3 & Trust-region ratio: caps diversity displacement by a fraction of the base flow displacement norm. \\
partial-ortho & $p_{\perp}$ & 0.95 & Proportion of projection orthogonal to the base velocity. \\
noise-partial-ortho & $p_{\perp}^{\text{noise}}$ & 0.95 & Proportion of orthogonalization for the noise with respect to the base velocity. \\
t-gate & ${t}_{\text{gate}}$ & 0.4 & Time gate $[0.05,t_{gate}]$ where the diversity term \& noise is active. \\
sched-shape & $noise(t)$ & cos2 & Time schedule shape for $\gamma$ (cos2 or t1mt).\tnote{*} \\
tau & $\tau$ & 1.0 & Scale for the volume/log-det related term. \\
eta-sde & $\eta$ & 1.0 & Global scale for SDE/Brownian noise. \\
vnorm-threshold & $\delta_{v}$ & $1\times10^{-4}$ & Skip velocity-based projections when the base velocity norm is below this threshold. \\
\bottomrule
\end{tabularx}
\begin{tablenotes}[flushleft]
\footnotesize
\item[*] The cosine-squared (cos2) schedule yields a smooth, bell-shaped profile that ramps up from zero, peaks at the midpoint, and then ramps down. 
The parabolic (t1mt) schedule exhibits a similar rise–fall pattern shaped like an inverted parabola.
\end{tablenotes}
\end{threeparttable}
\end{table}

\section{Additional Main Experiments Results}
\FloatBarrier
\label{sec:appendix_main_results}

\subsection{Additional Precision-Recall Distribution Curves}
\label{sec:appendix_prd}

To demonstrate that the superior fidelity-coverage trade-off of our method is not limited to a single class, this section provides additional PRD curves for two more concepts from the COCO dataset: ``truck", ``apple", ``pizza" ``bus" and ``bicycle". As shown in Figure~\ref{fig:appendix_prd_curves}, the results are consistent with the findings for the ``truck" concept presented in the main paper. Across all concepts and guidance scales, our method's PRD curve consistently lies to the top-right of the baselines, achieving a higher recall for any given level of precision and thus a superior AUC.

\begin{figure}[h]
    \centering

    \begin{subfigure}{0.9\textwidth}
        \centering
        \includegraphics[width=\linewidth]{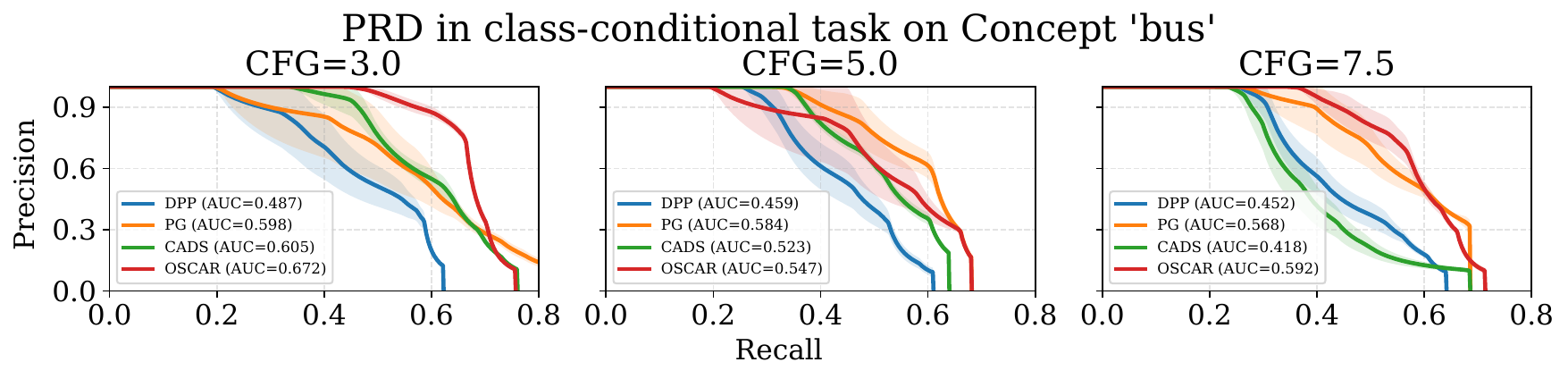}
        \caption{PRD curves for the bus concept}
        \label{fig:prd_bus}
    \end{subfigure}

    \vspace{0.8em} 

    \begin{subfigure}{0.9\textwidth}
        \centering
        \includegraphics[width=\linewidth]{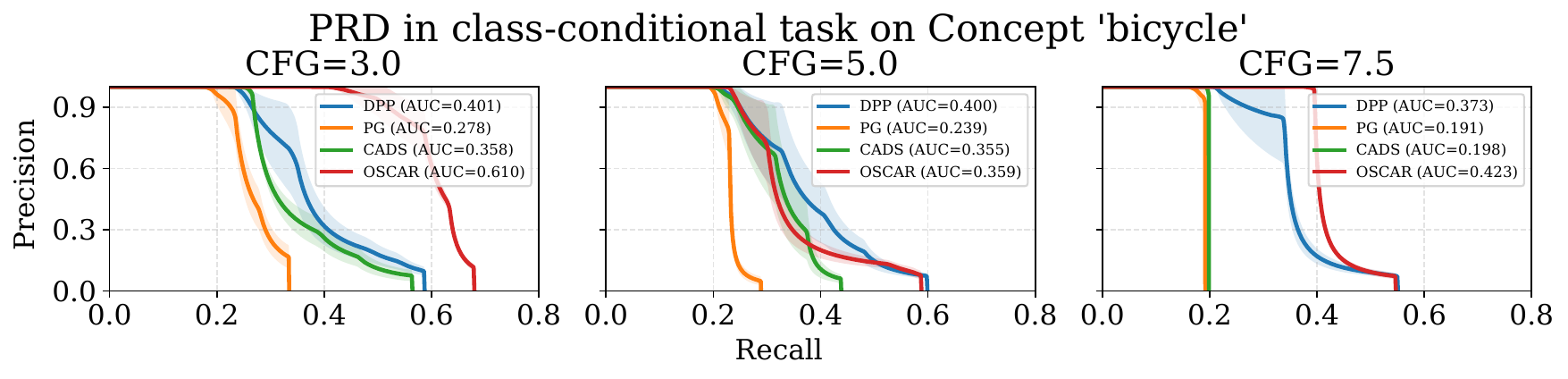}
        \caption{PRD curves for the bicycle concept}
        \label{fig:prd_bicycle}
    \end{subfigure}

    \begin{subfigure}{0.9\textwidth}
        \centering
        \includegraphics[width=\linewidth]{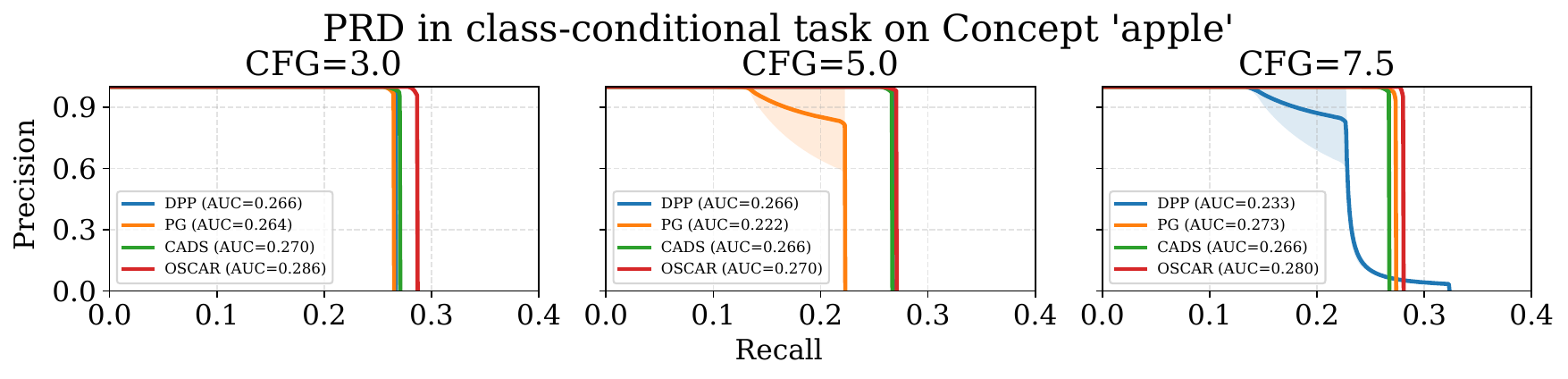}
        \caption{PRD curves for the apple concept}
        \label{fig:prd_apple}
    \end{subfigure}

    \begin{subfigure}{0.9\textwidth}
        \centering
        \includegraphics[width=\linewidth]{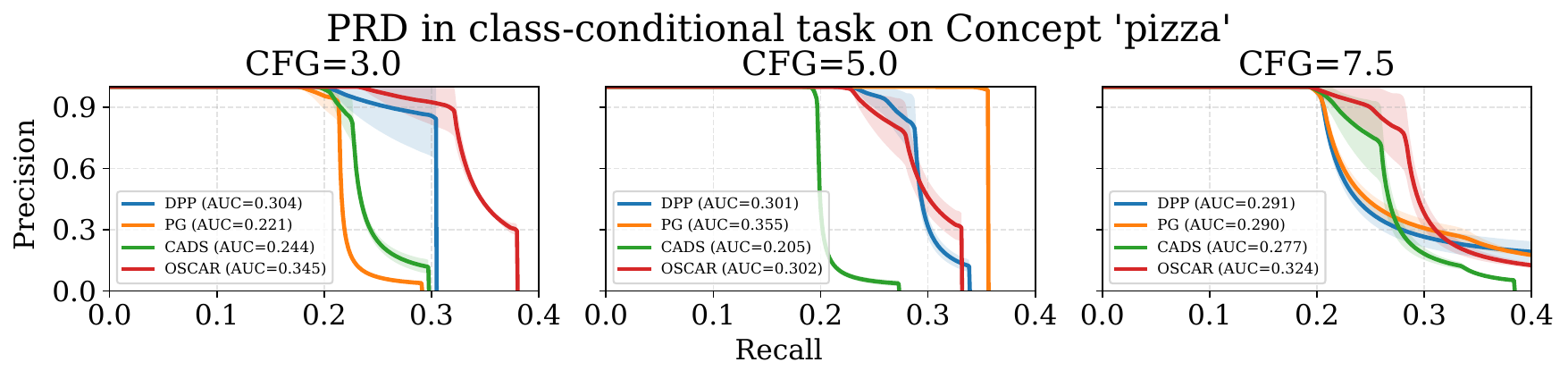}
        \caption{PRD curves for the pizza concept}
        \label{fig:prd_pizza}
    \end{subfigure}

    \begin{subfigure}{0.9\textwidth}
        \centering
        \includegraphics[width=\linewidth]{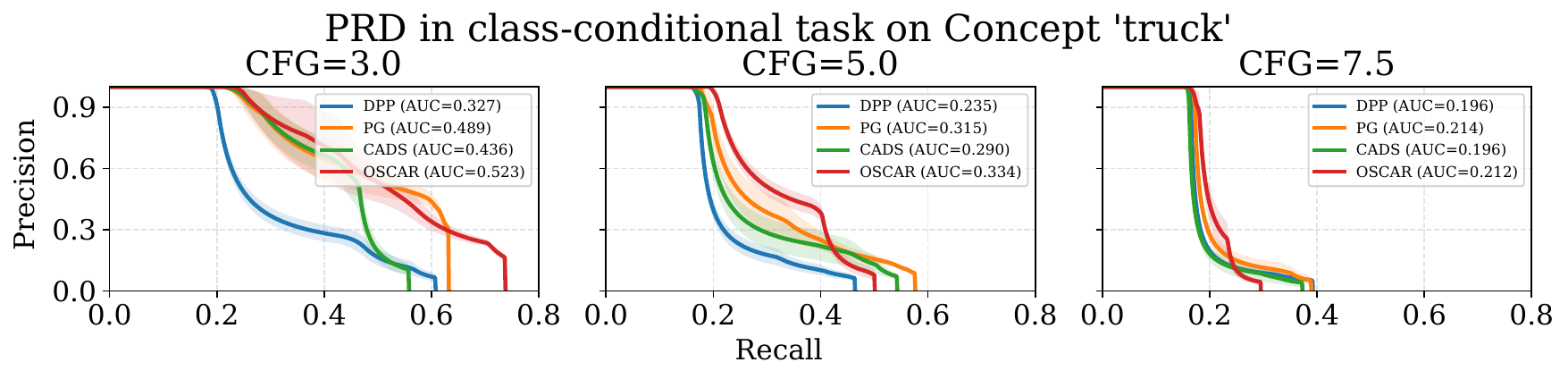}
        \caption{PRD curves for the truck concept}
        \label{fig:prd_truck}
    \end{subfigure}
        \caption{Additional PRD curves for the (a) bus, (b) bicycle, (c) apple, (d) pizza and (e) truck concepts. 
    The results confirm that our method consistently achieves a better fidelity–coverage trade-off compared to baselines across various semantic classes.}
    \label{fig:appendix_prd_curves}
\end{figure}

\subsection{Extended Mode Coverage and Entropy Analysis}
\label{sec:appendix_coverage_entropy}

To further demonstrate our method's ability to discover and evenly represent fine-grained, intra-class modes, we extend the mode coverage and normalized entropy analysis to the ``bus" and ``bicycle" concepts. The results, shown in Figure~\ref{fig:appendix_coverage_bus} and Figure~\ref{fig:appendix_coverage_bicycle}, are fully consistent with the findings for the ``truck" concept presented in the main paper.

For both additional concepts, our method consistently achieves a higher final cluster coverage plateau, indicating that it successfully identifies a broader set of unique sub-types compared to the baselines. Furthermore, the normalized entropy scores for our method are substantially and consistently higher. This provides strong evidence that our generated samples are more uniformly distributed across the discovered modes, leading to a less redundant and more semantically rich output for the end-user.

\begin{figure}[h!]
    \centering

    \begin{subfigure}{0.8\textwidth}
        \centering
        \includegraphics[width=\linewidth]{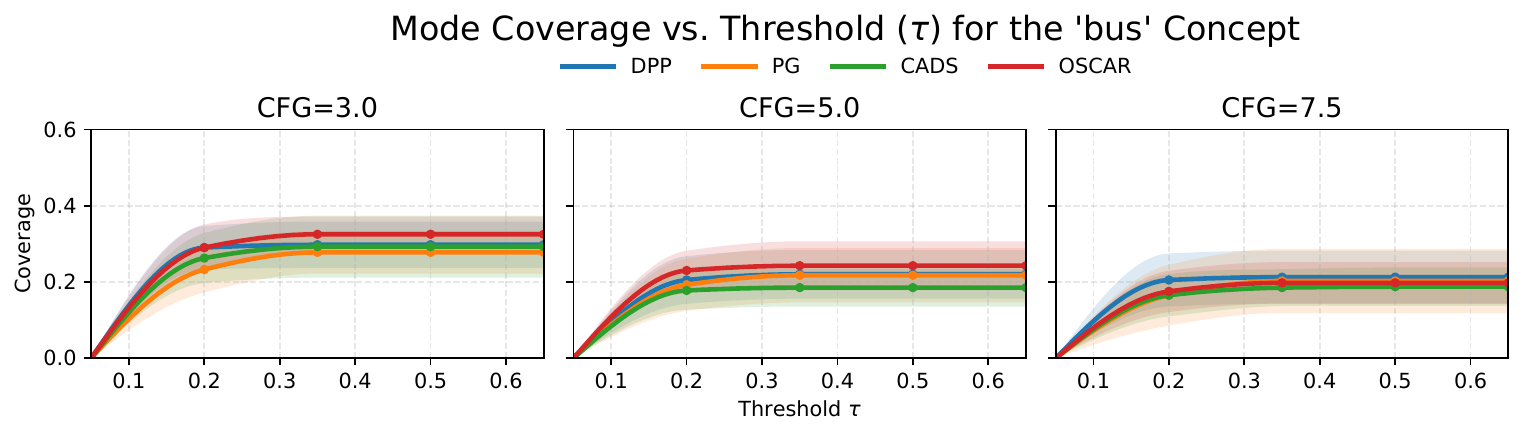}
        \caption{Mode Coverage vs. Threshold ($\tau$) for `bus'}
        \label{fig:coverage_bus}
    \end{subfigure}
    \hfill
    \begin{subfigure}{0.8\textwidth}
        \centering
        \includegraphics[width=\linewidth]{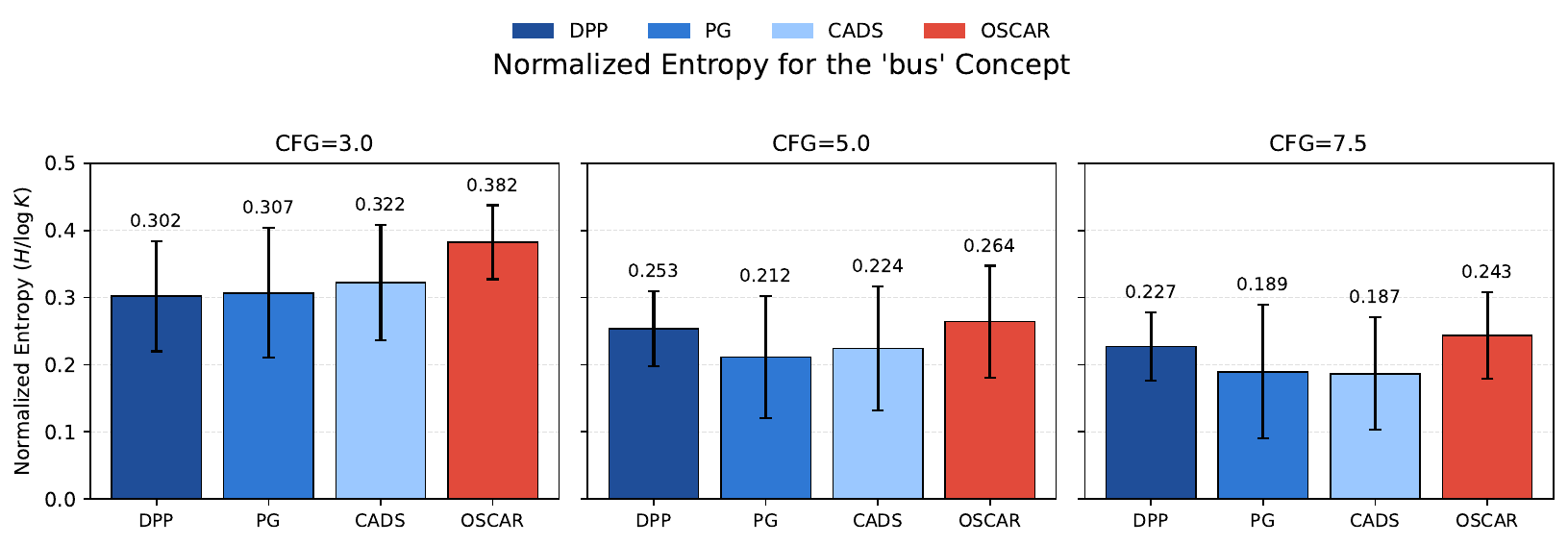}
        \caption{Normalized Entropy for `bus'}
        \label{fig:entropy_bus}
    \end{subfigure}
        \caption{Mode Coverage and Normalized Entropy for the `bus' concept. The results confirm our method's superior ability to both discover more intra-class modes and distribute samples more uniformly among them.}
        \label{fig:appendix_coverage_bus}
\end{figure}

\begin{figure}[h!]
    \centering

    \begin{subfigure}{0.8\textwidth}
        \centering
        \includegraphics[width=\linewidth]{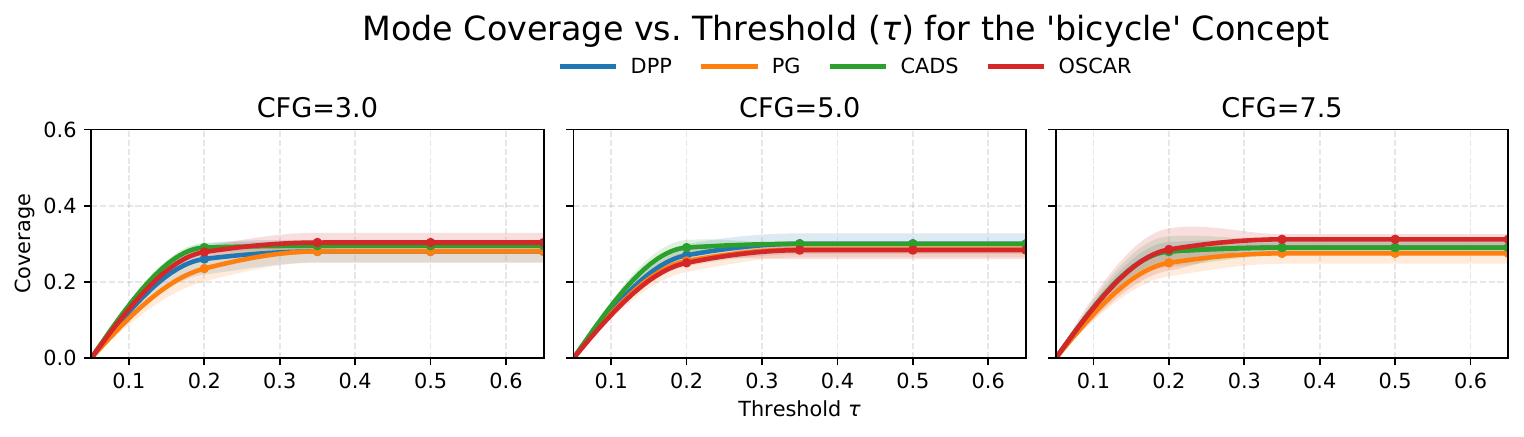}
        \caption{Mode Coverage vs. Threshold ($\tau$) for `bicycle'}
        \label{fig:coverage_bicycle}
    \end{subfigure}
    \hfill
    \begin{subfigure}{0.8\textwidth}
        \centering
        \includegraphics[width=\linewidth]{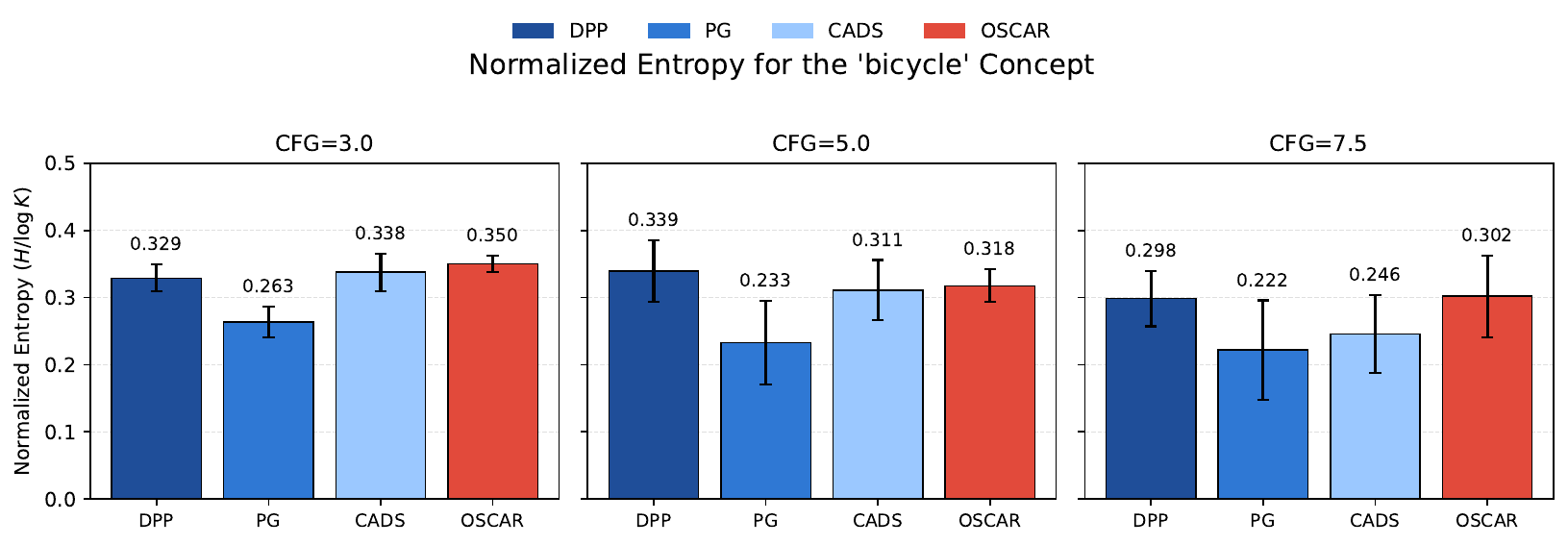}
        \caption{Normalized Entropy for `bicycle'}
        \label{fig:entropy_bicycle}
    \end{subfigure}
        \caption{Mode Coverage and Normalized Entropy for the `bicycle' concept, reinforcing the consistent outperformance of our method.}
        \label{fig:appendix_coverage_bicycle}
\end{figure}

\subsection{Quantitative Results for Additional Concepts}
\label{sec:add_quantitative}

To further validate the generalizability of our method's performance, this section provides a comprehensive extension of the main experimental results. For the sake of brevity and clarity, the quantitative and qualitative comparisons presented in the main body of the paper focused primarily on the ``truck'' concept. Here, we present the same set of comparisons for several additional concepts from the COCO dataset, with detailed quantitative results for the ``bus" and ``bicycle" concepts shown in Table~\ref{tab:bus_all_metrics_comparison} and Table~\ref{tab:bicycle_all_metrics_comparison}. These extended results demonstrate that the conclusions drawn in the main paper are not specific to a single class, and confirm that our method's advantages in enhancing diversity while preserving quality hold across a wider range of semantic categories.

\subsubsection{Classification Prompts Example}
To ensure our evaluation is robust and not biased by specific prompt phrasing, we use a set of synonymous prompts for each tested concept. The following listing shows an excerpt from our prompt file, illustrating the structure used for the ``truck" and ``bus" concepts discussed in the main paper.

\begin{tcolorbox}[
    enhanced,
    title={Example of the JSON structure used to store prompts for different concepts},
    label={lst:calss-truck-bus-prompts},
    colback=gray!5,
    colframe=gray!75,
    fonttitle=\bfseries,
    colbacktitle=gray!20,
    coltitle=black
]

\begin{lstlisting}[
    language=json,
    basicstyle=\small\ttfamily, % 使用等宽字体 (关键!)
    columns=flexible,           % 调整字符间距，防止过宽
    keepspaces=true,            % 保持缩进
    aboveskip=0pt,              % 移除顶部多余空白
    belowskip=0pt               % 移除底部多余空白
]
"truck": [
    "a truck",
    "a photo of a truck",
    "a photo of one truck"
],
"bus": [
    "a bus",
    "a photo of a bus",
    "a photo of one bus"
]
\end{lstlisting}
\end{tcolorbox}


\begin{table*}[h!]
\centering
\caption{
    Comprehensive quantitative comparison of our method against all baselines for the ``bus" concept. The results demonstrate the performance of different methods across various guidance scales and evaluation metrics.
}
\label{tab:bus_all_metrics_comparison}
\footnotesize
\setlength{\tabcolsep}{5pt}
\renewcommand{\arraystretch}{0.8}
\begin{tabularx}{\textwidth}{l@{\extracolsep{\fill}}cccccc}
\toprule
& \multicolumn{3}{c}{\textbf{Guidance Scale (CFG)}} & \multicolumn{3}{c}{\textbf{Guidance Scale (CFG)}} \\
\cmidrule(lr){2-4}\cmidrule(lr){5-7}
\textbf{Method} & $3.0$ & $5.0$ & $7.5$ & $3.0$ & $5.0$ & $7.5$ \\
\midrule
& \multicolumn{3}{c}{\textbf{Brisque} $\downarrow$} & \multicolumn{3}{c}{$\mathbf{1-\text{MS-SSIM}}$(\%) $\uparrow$} \\
\cmidrule(lr){2-4}\cmidrule(lr){5-7}
FM-SD3.5  & 24.92 $\pm$ 2.41 & 27.93 $\pm$ 2.18 & 32.07 $\pm$ 2.03 & 91.05 $\pm$ 0.58 & 91.21 $\pm$ 0.67 & 90.85 $\pm$ 0.99 \\
PG & 38.02 $\pm$ 6.00 & 34.76 $\pm$ 3.92 & 36.18 $\pm$ 2.45 & 90.89 $\pm$ 1.57 & 90.96 $\pm$ 2.07 & 90.96 $\pm$ 2.94 \\
CADS         & 23.49 $\pm$ 2.17 & 27.91 $\pm$ 2.51 & 34.93 $\pm$ 3.37 & 93.11 $\pm$ 0.77 & 92.80 $\pm$ 1.13 & 91.79 $\pm$ 1.89 \\
DPP      & 23.00 $\pm$ 3.12 & 28.31 $\pm$ 2.68 & 34.86 $\pm$ 2.52 & 90.82 $\pm$ 1.24 & 90.96 $\pm$ 1.93 & 90.10 $\pm$ 2.75 \\
\rowcolor{lightblue}\textbf{Ours}      & 22.71 $\pm$ 5.03 & 26.55 $\pm$ 4.72 & 33.55 $\pm$ 4.78 & 92.38 $\pm$ 1.23 & 91.91 $\pm$ 1.77 & 91.12 $\pm$ 2.26 \\
\midrule
& \multicolumn{3}{c}{\textbf{Vendi Score Pixel} $\uparrow$} & \multicolumn{3}{c}{\textbf{Vendi Score Inception} $\uparrow$} \\
\cmidrule(lr){2-4}\cmidrule(lr){5-7}
FM-SD3.5  & 3.49 $\pm$ 0.33 & 3.60 $\pm$ 0.44 & 3.62 $\pm$ 0.46 & 3.74 $\pm$ 0.51 & 3.28 $\pm$ 0.44 & 3.07 $\pm$ 0.42 \\
PG & 3.53 $\pm$ 0.74 & 3.68 $\pm$ 1.06 & 3.63 $\pm$ 1.12 & 4.06 $\pm$ 1.08 & 3.35 $\pm$ 0.89 & 3.21 $\pm$ 0.81 \\
CADS         & 3.95 $\pm$ 0.81 & 3.93 $\pm$ 1.13 & 3.87 $\pm$ 1.24 & 3.91 $\pm$ 1.11 & 3.40 $\pm$ 0.96 & 3.27 $\pm$ 0.85 \\
DPP      & 3.59 $\pm$ 0.80 & 3.58 $\pm$ 1.09 & 3.46 $\pm$ 1.10 & 3.78 $\pm$ 1.40 & 3.31 $\pm$ 1.31 & 3.13 $\pm$ 1.15 \\
\rowcolor{lightblue}\textbf{Ours}      & 4.08 $\pm$ 0.80 & 3.98 $\pm$ 1.02 & 3.78 $\pm$ 1.05 & 4.03 $\pm$ 1.12 & 3.47 $\pm$ 0.99 & 3.32 $\pm$ 0.92 \\
\midrule
& \multicolumn{3}{c}{\textbf{CLIP-IQA} $\uparrow$} & \multicolumn{3}{c}{\textbf{CLIP Score} $\uparrow$} \\
\cmidrule(lr){2-4}\cmidrule(lr){5-7}
FM-SD3.5  & 6.08 $\pm$ 0.32 & 6.06 $\pm$ 0.37 & 6.14 $\pm$ 0.37 & 28.08 $\pm$ 0.17 & 27.96 $\pm$ 0.18 & 27.92 $\pm$ 0.17 \\
PG & 6.11 $\pm$ 0.59 & 6.20 $\pm$ 0.49 & 6.20 $\pm$ 0.47 & 28.05 $\pm$ 0.38 & 27.68 $\pm$ 0.36 & 27.72 $\pm$ 0.30 \\
CADS        & 6.23 $\pm$ 0.53 & 6.21 $\pm$ 0.49 & 6.19 $\pm$ 0.49 & 27.81 $\pm$ 0.42 & 27.65 $\pm$ 0.34 & 27.76 $\pm$ 0.25 \\
DPP     & 6.09 $\pm$ 0.58 & 6.09 $\pm$ 0.54 & 6.04 $\pm$ 0.48 & 28.28 $\pm$ 0.40 & 28.12 $\pm$ 0.33 & 28.17 $\pm$ 0.37 \\
\rowcolor{lightblue}\textbf{Ours}        & 6.24 $\pm$ 0.58 & 6.26 $\pm$ 0.52 & 6.27 $\pm$ 0.46 & 27.94 $\pm$ 0.37 & 27.84 $\pm$ 0.36 & 27.91 $\pm$ 0.37 \\
\midrule
& \multicolumn{3}{c}{\textbf{FID} $\downarrow$} & \multicolumn{3}{c}{\textbf{Image Reward} $\uparrow$} \\
\cmidrule(lr){2-4}\cmidrule(lr){5-7}
FM-SD3.5  & 113.25 $\pm$ 1.60 & 111.22 $\pm$ 1.28 & 110.61 $\pm$ 1.13 & 0.26 $\pm$ 0.31 & 0.38 $\pm$ 0.32 & 0.42 $\pm$ 0.26 \\
PG & 116.18 $\pm$ 2.93 & 114.41 $\pm$ 2.35 & 113.71 $\pm$ 2.07 & 0.23 $\pm$ 0.48 & 0.39 $\pm$ 0.31 & 0.44 $\pm$ 0.29 \\
CADS        & 115.37 $\pm$ 2.53 & 114.40 $\pm$ 2.12 & 114.04 $\pm$ 1.75 & 0.32 $\pm$ 0.32 & 0.43 $\pm$ 0.24 & 0.48 $\pm$ 0.23 \\
DPP     & 114.65 $\pm$ 3.87 & 113.39 $\pm$ 3.18 & 112.66 $\pm$ 2.61 & 0.22 $\pm$ 0.44 & 0.31 $\pm$ 0.35 & 0.35 $\pm$ 0.31 \\
\rowcolor{lightblue}\textbf{Ours}        & 114.33 $\pm$ 3.50 & 112.44 $\pm$ 2.38 & 111.98 $\pm$ 2.50 & 0.31 $\pm$ 0.34 & 0.42 $\pm$ 0.30 & 0.46 $\pm$ 0.30 \\
\bottomrule
\end{tabularx}
\end{table*}

\begin{table*}[h!]
\centering
\caption{
    Comprehensive quantitative comparison of our method against all baselines for the ``bicycle" concept. The results demonstrate our method's performance in enhancing diversity while maintaining fidelity and prompt alignment across different guidance scales.
}
\label{tab:bicycle_all_metrics_comparison}
\footnotesize
\setlength{\tabcolsep}{5pt}
\renewcommand{\arraystretch}{0.8}
\begin{tabularx}{\textwidth}{l@{\extracolsep{\fill}}cccccc}
\toprule
& \multicolumn{3}{c}{\textbf{Guidance Scale (CFG)}} & \multicolumn{3}{c}{\textbf{Guidance Scale (CFG)}} \\
\cmidrule(lr){2-4}\cmidrule(lr){5-7}
\textbf{Method} & $3.0$ & $5.0$ & $7.5$ & $3.0$ & $5.0$ & $7.5$ \\
\midrule
& \multicolumn{3}{c}{\textbf{Brisque} $\downarrow$} & \multicolumn{3}{c}{$\mathbf{1-\text{MS-SSIM}}$(\%) $\uparrow$} \\
\cmidrule(lr){2-4}\cmidrule(lr){5-7}
FM-SD3.5  & 25.84 $\pm$ 1.97 & 27.03 $\pm$ 2.04 & 30.76 $\pm$ 1.89 & 88.40 $\pm$ 1.06 & 89.33 $\pm$ 1.34 & 89.57 $\pm$ 1.51 \\
PG & 49.33 $\pm$ 2.20 & 63.71 $\pm$ 1.86 & 69.05 $\pm$ 4.21 & 86.83 $\pm$ 2.12 & 88.93 $\pm$ 2.30 & 89.26 $\pm$ 2.90 \\
CADS         & 28.48 $\pm$ 3.58 & 32.54 $\pm$ 2.67 & 34.00 $\pm$ 2.58 & 91.48 $\pm$ 2.59 & 92.99 $\pm$ 2.24 & 92.51 $\pm$ 2.35 \\
DPP      & 21.19 $\pm$ 2.77 & 23.75 $\pm$ 2.40 & 26.44 $\pm$ 3.99 & 87.02 $\pm$ 4.41 & 89.19 $\pm$ 3.95 & 90.11 $\pm$ 3.67 \\
\rowcolor{lightblue}\textbf{Ours}      & 20.84 $\pm$ 3.25 & 24.84 $\pm$ 3.09 & 26.14 $\pm$ 2.80 & 90.74 $\pm$ 2.25 & 91.81 $\pm$ 2.27 & 91.48 $\pm$ 1.49 \\
\midrule
& \multicolumn{3}{c}{\textbf{Vendi Score Pixel} $\uparrow$} & \multicolumn{3}{c}{\textbf{Vendi Score Inception} $\uparrow$} \\
\cmidrule(lr){2-4}\cmidrule(lr){5-7}
FM-SD3.5  & 2.26 $\pm$ 0.24 & 2.38 $\pm$ 0.25 & 2.46 $\pm$ 0.25 & 3.59 $\pm$ 0.32 & 3.10 $\pm$ 0.30 & 2.90 $\pm$ 0.25 \\
PG & 2.06 $\pm$ 0.11 & 2.18 $\pm$ 0.07 & 2.29 $\pm$ 0.05 & 3.50 $\pm$ 0.07 & 3.06 $\pm$ 0.10 & 2.97 $\pm$ 0.07 \\
CADS         & 2.30 $\pm$ 0.04 & 2.37 $\pm$ 0.06 & 2.48 $\pm$ 0.04 & 3.76 $\pm$ 0.28 & 3.19 $\pm$ 0.10 & 2.94 $\pm$ 0.07 \\
DPP      & 2.28 $\pm$ 0.02 & 2.34 $\pm$ 0.01 & 2.43 $\pm$ 0.03 & 3.83 $\pm$ 0.12 & 3.15 $\pm$ 0.18 & 2.86 $\pm$ 0.06 \\
\rowcolor{lightblue}\textbf{Ours}      & 2.55 $\pm$ 0.17 & 2.51 $\pm$ 0.07 & 2.61 $\pm$ 0.03 & 3.72 $\pm$ 0.13 & 3.24 $\pm$ 0.15 & 2.99 $\pm$ 0.11 \\
\midrule
& \multicolumn{3}{c}{\textbf{CLIP-IQA} $\uparrow$} & \multicolumn{3}{c}{\textbf{CLIP Score} $\uparrow$} \\
\cmidrule(lr){2-4}\cmidrule(lr){5-7}
FM-SD3.5  & 0.43 $\pm$ 0.41 & 0.51 $\pm$ 0.38 & 0.48 $\pm$ 0.38 & 29.53 $\pm$ 0.30 & 29.26 $\pm$ 0.29 & 29.06 $\pm$ 0.26 \\
PG & 0.42 $\pm$ 0.42 & 0.51 $\pm$ 0.33 & 0.49 $\pm$ 0.31 & 29.86 $\pm$ 0.22 & 29.09 $\pm$ 0.24 & 28.77 $\pm$ 0.13 \\
CADS        & 0.47 $\pm$ 0.35 & 0.52 $\pm$ 0.32 & 0.53 $\pm$ 0.34 & 29.43 $\pm$ 0.26 & 29.10 $\pm$ 0.27 & 28.85 $\pm$ 0.15 \\
DPP     & 0.42 $\pm$ 0.38 & 0.55 $\pm$ 0.32 & 0.62 $\pm$ 0.29 & 29.85 $\pm$ 0.16 & 29.56 $\pm$ 0.15 & 29.45 $\pm$ 0.11 \\
\rowcolor{lightblue}\textbf{Ours}        & 0.48 $\pm$ 0.38 & 0.55 $\pm$ 0.33 & 0.58 $\pm$ 0.31 & 29.81 $\pm$ 0.17 & 29.31 $\pm$ 0.22 & 28.97 $\pm$ 0.06 \\
\midrule
& \multicolumn{3}{c}{\textbf{FID} $\downarrow$} & \multicolumn{3}{c}{\textbf{Image Reward} $\uparrow$} \\
\cmidrule(lr){2-4}\cmidrule(lr){5-7}
FM-SD3.5  & 149.21 $\pm$ 1.85 & 148.28 $\pm$ 1.45 & 147.06 $\pm$ 3.90 & 6.25 $\pm$ 0.70 & 6.35 $\pm$ 0.58 & 6.29 $\pm$ 0.66 \\
PG & 149.34 $\pm$ 3.05 & 151.45 $\pm$ 1.68 & 149.44 $\pm$ 4.26 & 6.18 $\pm$ 0.70 & 6.27 $\pm$ 0.68 & 6.22 $\pm$ 0.68 \\
CADS        & 149.27 $\pm$ 1.59 & 149.06 $\pm$ 1.20 & 149.24 $\pm$ 3.90 & 6.28 $\pm$ 0.67 & 6.31 $\pm$ 0.27 & 6.25 $\pm$ 0.70 \\
DPP     & 146.59 $\pm$ 2.06 & 148.14 $\pm$ 0.93 & 145.49 $\pm$ 3.87 & 6.14 $\pm$ 0.69 & 6.27 $\pm$ 0.69 & 6.29 $\pm$ 0.68 \\
\rowcolor{lightblue}\textbf{Ours}        & 146.36 $\pm$ 3.48 & 148.50 $\pm$ 1.33 & 146.99 $\pm$ 4.80 & 6.30 $\pm$ 0.54 & 6.44 $\pm$ 0.51 & 6.42 $\pm$ 0.63 \\
\bottomrule
\end{tabularx}
\end{table*}

\subsubsection{Prompt-level analysis within a single concept}
We also notice that, even within a fixed semantic concept, different textual prompts can induce large variations in standard evaluation metrics. When aggregating over all prompts belonging to the same concept, this high intra-concept variance tends to blur the performance differences between methods and makes the results harder to interpret. To better disentangle these effects, in this part we therefore report metrics for two representative prompts under the ``truck'' concept, namely ``a truck'' and ``a photo of a truck'' (see Tab.~\ref{tab:all_metrics_comparison_a_truck} and Tab.~\ref{tab:all_metrics_comparison_photo_truck}). The former is intentionally short and under-specified, while the latter provides a slightly richer semantic context. Comparing methods under these two prompts side by side allows us to more clearly separate the impact of our diversity-enhancing control from the prompt-induced variability, and we observe similar behaviors for other concepts.

\begin{table*}[h]
\centering
\caption{\small
Quantitative comparison of our method against baselines across different CFG levels for the ``a photo of a truck'' prompt under the ``truck'' concept. Our method consistently improves diversity metrics while maintaining competitive or better image quality and alignment.
}
\label{tab:all_metrics_comparison_photo_truck}
\footnotesize
\setlength{\tabcolsep}{5pt}
\renewcommand{\arraystretch}{0.8}
\begin{tabularx}{\textwidth}{l@{\extracolsep{\fill}}cccccc}
\toprule
& \multicolumn{3}{c}{\textbf{Guidance Scale (CFG)}} & \multicolumn{3}{c}{\textbf{Guidance Scale (CFG)}} \\
\cmidrule(lr){2-4}\cmidrule(lr){5-7}
\textbf{Method} & $3.0$ & $5.0$ & $7.5$ & $3.0$ & $5.0$ & $7.5$ \\
\midrule
& \multicolumn{3}{c}{\textbf{Brisque} $\downarrow$} & \multicolumn{3}{c}{$\mathbf{1-\text{MS-SSIM}}$(\%) $\uparrow$} \\
\cmidrule(lr){2-4}\cmidrule(lr){5-7}
PG & 34.45 $\pm$ 5.69 & 34.18 $\pm$ 1.05 & 35.77 $\pm$ 3.94 & 89.29 $\pm$ 1.48 & 91.64 $\pm$ 1.31 & 91.60 $\pm$ 1.34 \\
CADS         & 31.10 $\pm$ 2.25 & 33.65 $\pm$ 1.12 & 37.59 $\pm$ 1.42 & 88.25 $\pm$ 0.77 & 90.19 $\pm$ 0.55 & 92.01 $\pm$ 0.55 \\
DPP      & 23.84 $\pm$ 1.24 & 29.56 $\pm$ 1.63 & 36.76 $\pm$ 1.11 & 89.82 $\pm$ 1.19 & 91.11 $\pm$ 1.34 & 91.26 $\pm$ 1.15 \\
\rowcolor{lightblue}\textbf{Ours}       & 26.98 $\pm$ 0.95 & 29.85 $\pm$ 0.85 & 37.57 $\pm$ 1.51 & 90.75 $\pm$ 2.15 & 91.80 $\pm$ 2.51 & 91.80 $\pm$ 1.79 \\
\midrule
& \multicolumn{3}{c}{\textbf{Vendi Score Pixel} $\uparrow$} & \multicolumn{3}{c}{\textbf{Vendi Score Inception} $\uparrow$} \\
\cmidrule(lr){2-4}\cmidrule(lr){5-7}
PG & 3.33 $\pm$ 0.25 & 3.56 $\pm$ 0.27 & 3.64 $\pm$ 0.24 & 5.53 $\pm$ 0.13 & 4.92 $\pm$ 0.20 & 4.68 $\pm$ 0.12 \\
CADS         & 3.71 $\pm$ 0.24 & 3.72 $\pm$ 0.17 & 3.73 $\pm$ 0.11 & 5.90 $\pm$ 0.17 & 4.72 $\pm$ 0.24 & 4.43 $\pm$ 0.18 \\
DPP      & 3.65 $\pm$ 0.14 & 3.83 $\pm$ 0.17 & 3.79 $\pm$ 0.13 & 5.86 $\pm$ 0.34 & 4.93 $\pm$ 0.16 & 4.30 $\pm$ 0.29 \\
\rowcolor{lightblue}\textbf{Ours}       & 3.79 $\pm$ 0.11 & 3.86 $\pm$ 0.15 & 4.01 $\pm$ 0.15 & 5.89 $\pm$ 0.34 & 5.12 $\pm$ 0.35 & 4.77 $\pm$ 0.19 \\
\midrule
& \multicolumn{3}{c}{\textbf{FID} $\downarrow$} & \multicolumn{3}{c}{\textbf{CLIP Score} $\uparrow$} \\
\cmidrule(lr){2-4}\cmidrule(lr){5-7}
PG & 143.51 $\pm$ 3.35 & 141.83 $\pm$ 1.83 & 143.63 $\pm$ 2.18 & 27.82 $\pm$ 0.21 & 27.45 $\pm$ 0.20 & 27.46 $\pm$ 0.13 \\
CADS        & 130.65 $\pm$ 1.24 & 129.41 $\pm$ 1.58 & 129.38 $\pm$ 0.88 & 27.54 $\pm$ 0.15 & 27.22 $\pm$ 0.09 & 27.04 $\pm$ 0.11 \\
DPP     & 143.87 $\pm$ 3.69 & 139.43 $\pm$ 1.63 & 141.05 $\pm$ 3.98 & 27.63 $\pm$ 0.09 & 27.14 $\pm$ 0.10 & 27.01 $\pm$ 0.12 \\
\rowcolor{lightblue}\textbf{Ours}        & 130.13 $\pm$ 0.96 & 128.18 $\pm$ 0.20 & 126.89 $\pm$ 1.58 & 27.75 $\pm$ 0.04 & 27.50 $\pm$ 0.11 & 27.49 $\pm$ 0.12 \\
\bottomrule
\end{tabularx}
\end{table*}

\begin{table*}[h]
\centering
\caption{\small
Quantitative comparison of our method against baselines across different CFG levels for the ``a truck'' prompt under the ``truck'' concept. Our method consistently improves diversity metrics while maintaining competitive or better image quality and alignment.
}
\label{tab:all_metrics_comparison_a_truck}
\footnotesize
\setlength{\tabcolsep}{5pt}
\renewcommand{\arraystretch}{0.8}
\begin{tabularx}{\textwidth}{l@{\extracolsep{\fill}}cccccc}
\toprule
& \multicolumn{3}{c}{\textbf{Guidance Scale (CFG)}} & \multicolumn{3}{c}{\textbf{Guidance Scale (CFG)}} \\
\cmidrule(lr){2-4}\cmidrule(lr){5-7}
\textbf{Method} & $3.0$ & $5.0$ & $7.5$ & $3.0$ & $5.0$ & $7.5$ \\
\midrule
& \multicolumn{3}{c}{\textbf{Brisque} $\downarrow$} & \multicolumn{3}{c}{$\mathbf{1-\text{MS-SSIM}}$(\%) $\uparrow$} \\
\cmidrule(lr){2-4}\cmidrule(lr){5-7}
PG & 43.88 $\pm$ 3.47 & 36.40 $\pm$ 3.20 & 40.30 $\pm$ 2.48 & 86.44 $\pm$ 2.63 & 86.08 $\pm$ 1.96 & 83.81 $\pm$ 1.97 \\
CADS & 22.52 $\pm$ 1.61 & 23.59 $\pm$ 1.61 & 28.37 $\pm$ 1.48 & 86.29 $\pm$ 2.15 & 85.71 $\pm$ 2.26 & 84.46 $\pm$ 2.65 \\
DPP & 22.18 $\pm$ 1.57 & 25.77 $\pm$ 1.33 & 32.05 $\pm$ 0.92 & 88.33 $\pm$ 0.91 & 87.47 $\pm$ 1.86 & 85.58 $\pm$ 1.32 \\
\rowcolor{lightblue}\textbf{Ours} & 19.50 $\pm$ 2.04 & 22.31 $\pm$ 1.62 & 27.65 $\pm$ 1.72 & 90.20 $\pm$ 1.94 & 88.60 $\pm$ 1.24 & 87.73 $\pm$ 1.38 \\
\midrule
& \multicolumn{3}{c}{\textbf{Vendi Score Pixel} $\uparrow$} & \multicolumn{3}{c}{\textbf{Vendi Score Inception} $\uparrow$} \\
\cmidrule(lr){2-4}\cmidrule(lr){5-7}
PG & 4.63 $\pm$ 0.27 & 4.21 $\pm$ 0.13 & 4.09 $\pm$ 0.07 & 2.49 $\pm$ 0.17 & 2.42 $\pm$ 0.14 & 2.38 $\pm$ 0.11 \\
CADS & 4.63 $\pm$ 0.41 & 3.95 $\pm$ 0.24 & 3.63 $\pm$ 0.11 & 2.69 $\pm$ 0.16 & 2.53 $\pm$ 0.09 & 2.47 $\pm$ 0.08 \\
DPP & 4.61 $\pm$ 0.26 & 3.97 $\pm$ 0.20 & 3.72 $\pm$ 0.07 & 2.59 $\pm$ 0.07 & 2.42 $\pm$ 0.06 & 2.31 $\pm$ 0.06 \\
\rowcolor{lightblue}\textbf{Ours} & 4.79 $\pm$ 0.28 & 4.29 $\pm$ 0.16 & 4.14 $\pm$ 0.19 & 2.82 $\pm$ 0.11 & 2.66 $\pm$ 0.11 & 2.51 $\pm$ 0.06 \\
\midrule
& \multicolumn{3}{c}{\textbf{FID} $\downarrow$} & \multicolumn{3}{c}{\textbf{CLIP Score} $\uparrow$} \\
\cmidrule(lr){2-4}\cmidrule(lr){5-7}
PG & 142.82 $\pm$ 3.12 & 142.88 $\pm$ 2.71 & 142.07 $\pm$ 5.60 & 27.65 $\pm$ 0.15 & 26.70 $\pm$ 0.12 & 26.48 $\pm$ 0.17 \\
CADS & 126.75 $\pm$ 1.23 & 125.96 $\pm$ 0.52 & 125.17 $\pm$ 0.64 & 27.19 $\pm$ 0.09 & 26.57 $\pm$ 0.07 & 26.57 $\pm$ 0.14 \\
DPP & 139.64 $\pm$ 4.07 & 138.31 $\pm$ 2.76 & 138.62 $\pm$ 4.79 & 27.82 $\pm$ 0.10 & 27.30 $\pm$ 0.10 & 26.88 $\pm$ 0.09 \\
\rowcolor{lightblue}\textbf{Ours} & 128.80 $\pm$ 1.27 & 127.64 $\pm$ 1.30 & 126.17 $\pm$ 1.45 & 27.65 $\pm$ 0.12 & 27.18 $\pm$ 0.11 & 26.80 $\pm$ 0.10 \\
\bottomrule
\end{tabularx}
\end{table*}

In addition, to further demonstrate that our conclusions are not specific to a single concept, we report prompt-level results on three other concepts,``apple'', ``suitcase'', and ``pizza'', using a single representative prompt for each concept (see Tab.~\ref{tab:all_metrics_comparison_apple}, Tab.~\ref{tab:all_metrics_comparison_suitcase}, and Tab.~\ref{tab:all_metrics_comparison_pizza}). On these concepts, our method achieves consistently larger improvements on the diversity metrics (Vendi Score Pixel and Vendi Score Inception) compared to all baselines, while preserving comparable image quality.

\begin{table*}[h]
\centering
\caption{\small
Quantitative comparison of our method against baselines across different CFG levels for the ``a photo of a apple'' prompt under the ``apple'' concept. 
}
\label{tab:all_metrics_comparison_apple}
\footnotesize
\setlength{\tabcolsep}{5pt}
\renewcommand{\arraystretch}{0.8}
\begin{tabularx}{\textwidth}{l@{\extracolsep{\fill}}cccccc}
\toprule
& \multicolumn{3}{c}{\textbf{Guidance Scale (CFG)}} & \multicolumn{3}{c}{\textbf{Guidance Scale (CFG)}} \\
\cmidrule(lr){2-4}\cmidrule(lr){5-7}
\textbf{Method} & $3.0$ & $5.0$ & $7.5$ & $3.0$ & $5.0$ & $7.5$ \\
\midrule
& \multicolumn{3}{c}{\textbf{Brisque} $\downarrow$} & \multicolumn{3}{c}{$\mathbf{1-\text{MS-SSIM}}$(\%) $\uparrow$} \\
\cmidrule(lr){2-4}\cmidrule(lr){5-7}
PG & 20.50 $\pm$ 0.65 & 25.12 $\pm$ 1.63 & 30.74 $\pm$ 3.28 & 56.00 $\pm$ 2.08 & 55.55 $\pm$ 2.53 & 56.60 $\pm$ 1.89 \\
CADS & 23.76 $\pm$ 4.87 & 25.64 $\pm$ 5.68 & 30.82 $\pm$ 2.27 & 58.16 $\pm$ 3.29 & 58.92 $\pm$ 3.01 & 59.89 $\pm$ 3.40 \\
DPP & 19.69 $\pm$ 2.54 & 26.36 $\pm$ 2.72 & 31.08 $\pm$ 4.26 & 56.97 $\pm$ 3.76 & 57.45 $\pm$ 4.96 & 56.51 $\pm$ 5.69 \\
\rowcolor{lightblue}\textbf{Ours} & 17.71 $\pm$ 1.81 & 25.53 $\pm$ 0.51 & 27.96 $\pm$ 1.89 & 57.97 $\pm$ 3.56 & 55.79 $\pm$ 3.00 & 57.56 $\pm$ 2.90 \\
\midrule
& \multicolumn{3}{c}{\textbf{Vendi Score Pixel} $\uparrow$} & \multicolumn{3}{c}{\textbf{Vendi Score Inception} $\uparrow$} \\
\cmidrule(lr){2-4}\cmidrule(lr){5-7}
PG & 2.06 $\pm$ 0.06 & 2.04 $\pm$ 0.09 & 2.12 $\pm$ 0.11 & 2.12 $\pm$ 0.08 & 1.91 $\pm$ 0.02 & 1.99 $\pm$ 0.18 \\
CADS & 1.87 $\pm$ 0.30 & 1.78 $\pm$ 0.22 & 1.77 $\pm$ 0.22 & 2.19 $\pm$ 0.17 & 1.96 $\pm$ 0.03 & 1.93 $\pm$ 0.12 \\
DPP & 1.87 $\pm$ 0.27 & 1.85 $\pm$ 0.22 & 1.86 $\pm$ 0.25 & 2.06 $\pm$ 0.09 & 2.03 $\pm$ 0.03 & 1.99 $\pm$ 0.13 \\
\rowcolor{lightblue}\textbf{Ours} & 2.19 $\pm$ 0.06 & 2.13 $\pm$ 0.08 & 2.22 $\pm$ 0.08 & 2.27 $\pm$ 0.04 & 2.07 $\pm$ 0.08 & 2.07 $\pm$ 0.13 \\
\midrule
& \multicolumn{3}{c}{\textbf{FID} $\downarrow$} & \multicolumn{3}{c}{\textbf{CLIP Score} $\uparrow$} \\
\cmidrule(lr){2-4}\cmidrule(lr){5-7}
PG & 152.83 $\pm$ 0.83 & 149.68 $\pm$ 0.12 & 151.91 $\pm$ 0.70 & 31.84 $\pm$ 0.09 & 31.79 $\pm$ 0.07 & 31.83 $\pm$ 0.23 \\
CADS & 154.05 $\pm$ 3.25 & 151.76 $\pm$ 1.39 & 150.88 $\pm$ 2.44 & 31.79 $\pm$ 0.01 & 31.81 $\pm$ 0.15 & 31.69 $\pm$ 0.07 \\
DPP & 151.97 $\pm$ 1.72 & 149.80 $\pm$ 1.30 & 149.99 $\pm$ 2.25 & 31.87 $\pm$ 0.09 & 31.84 $\pm$ 0.12 & 31.88 $\pm$ 0.14 \\
\rowcolor{lightblue}\textbf{Ours} & 152.95 $\pm$ 1.00 & 150.35 $\pm$ 2.84 & 151.53 $\pm$ 1.48 & 31.88 $\pm$ 0.12 & 31.81 $\pm$ 0.14 & 31.87 $\pm$ 0.19 \\
\bottomrule
\end{tabularx}
\end{table*}

\begin{table*}[h]
\centering
\caption{\small
Quantitative comparison of our method against baselines across different CFG levels for the ``a photo of a suitcase'' prompt under the ``suitcase'' concept. 
}
\label{tab:all_metrics_comparison_suitcase}
\footnotesize
\setlength{\tabcolsep}{5pt}
\renewcommand{\arraystretch}{0.8}
\begin{tabularx}{\textwidth}{l@{\extracolsep{\fill}}cccccc}
\toprule
& \multicolumn{3}{c}{\textbf{Guidance Scale (CFG)}} & \multicolumn{3}{c}{\textbf{Guidance Scale (CFG)}} \\
\cmidrule(lr){2-4}\cmidrule(lr){5-7}
\textbf{Method} & $3.0$ & $5.0$ & $7.5$ & $3.0$ & $5.0$ & $7.5$ \\
\midrule
& \multicolumn{3}{c}{\textbf{Brisque} $\downarrow$} & \multicolumn{3}{c}{$\mathbf{1-\text{MS-SSIM}}$(\%) $\uparrow$} \\
\cmidrule(lr){2-4}\cmidrule(lr){5-7}
PG & 58.07 $\pm$ 4.75 & 43.32 $\pm$ 2.48 & 36.44 $\pm$ 3.20 & 74.11 $\pm$ 4.74 & 75.69 $\pm$ 1.39 & 77.33 $\pm$ 1.73 \\
CADS & 54.98 $\pm$ 8.24 & 83.71 $\pm$ 2.57 & 100.60 $\pm$ 7.59 & 84.36 $\pm$ 1.97 & 87.02 $\pm$ 5.40 & 86.12 $\pm$ 3.97 \\
DPP & 27.66 $\pm$ 6.52 & 29.07 $\pm$ 1.85 & 33.48 $\pm$ 5.31 & 76.76 $\pm$ 5.98 & 78.10 $\pm$ 5.06 & 80.45 $\pm$ 2.46 \\
\rowcolor{lightblue}\textbf{Ours} & 20.38 $\pm$ 1.83 & 17.35 $\pm$ 1.51 & 14.97 $\pm$ 1.55 & 82.99 $\pm$ 3.49 & 83.36 $\pm$ 2.41 & 84.77 $\pm$ 1.85 \\
\midrule
& \multicolumn{3}{c}{\textbf{Vendi Score Pixel} $\uparrow$} & \multicolumn{3}{c}{\textbf{Vendi Score Inception} $\uparrow$} \\
\cmidrule(lr){2-4}\cmidrule(lr){5-7}
PG & 2.72 $\pm$ 0.20 & 2.83 $\pm$ 0.08 & 2.84 $\pm$ 0.14 & 5.53 $\pm$ 0.28 & 5.06 $\pm$ 0.30 & 4.79 $\pm$ 0.49 \\
CADS & 2.75 $\pm$ 0.29 & 2.93 $\pm$ 0.12 & 3.01 $\pm$ 0.11 & 5.50 $\pm$ 0.13 & 5.04 $\pm$ 0.51 & 4.46 $\pm$ 0.39 \\
DPP & 3.01 $\pm$ 0.29 & 3.26 $\pm$ 0.23 & 3.18 $\pm$ 0.10 & 5.17 $\pm$ 0.13 & 4.57 $\pm$ 0.13 & 4.24 $\pm$ 0.29 \\
\rowcolor{lightblue}\textbf{Ours} & 3.18 $\pm$ 0.36 & 3.43 $\pm$ 0.06 & 3.61 $\pm$ 0.04 & 5.73 $\pm$ 0.13 & 5.45 $\pm$ 0.27 & 4.87 $\pm$ 0.42 \\
\midrule
& \multicolumn{3}{c}{\textbf{FID} $\downarrow$} & \multicolumn{3}{c}{\textbf{CLIP Score} $\uparrow$} \\
\cmidrule(lr){2-4}\cmidrule(lr){5-7}
PG & 158.69 $\pm$ 3.64 & 156.19 $\pm$ 2.06 & 154.75 $\pm$ 1.88 & 30.12 $\pm$ 0.11 & 29.85 $\pm$ 0.34 & 29.76 $\pm$ 0.14 \\
CADS & 166.04 $\pm$ 4.10 & 161.58 $\pm$ 0.62 & 158.54 $\pm$ 4.39 & 29.37 $\pm$ 0.82 & 29.07 $\pm$ 0.36 & 29.10 $\pm$ 0.20 \\
DPP & 156.42 $\pm$ 2.06 & 156.26 $\pm$ 1.94 & 155.99 $\pm$ 3.09 & 30.33 $\pm$ 0.28 & 29.84 $\pm$ 0.15 & 29.72 $\pm$ 0.12 \\
\rowcolor{lightblue}\textbf{Ours} & 159.20 $\pm$ 2.90 & 153.91 $\pm$ 3.99 & 151.90 $\pm$ 1.63 & 29.54 $\pm$ 0.62 & 29.51 $\pm$ 0.19 & 29.59 $\pm$ 0.05 \\
\bottomrule
\end{tabularx}
\end{table*}

\begin{table*}[h]
\centering
\caption{\small
Quantitative comparison of our method against baselines across different CFG levels for the ``a photo of a pizza'' prompt under the ``pizza'' concept. 
}
\label{tab:all_metrics_comparison_pizza}
\footnotesize
\setlength{\tabcolsep}{5pt}
\renewcommand{\arraystretch}{0.8}
\begin{tabularx}{\textwidth}{l@{\extracolsep{\fill}}cccccc}
\toprule
& \multicolumn{3}{c}{\textbf{Guidance Scale (CFG)}} & \multicolumn{3}{c}{\textbf{Guidance Scale (CFG)}} \\
\cmidrule(lr){2-4}\cmidrule(lr){5-7}
\textbf{Method} & $3.0$ & $5.0$ & $7.5$ & $3.0$ & $5.0$ & $7.5$ \\
\midrule
& \multicolumn{3}{c}{\textbf{Brisque} $\downarrow$} & \multicolumn{3}{c}{$\mathbf{1-\text{MS-SSIM}}$(\%) $\uparrow$} \\
\cmidrule(lr){2-4}\cmidrule(lr){5-7}
PG & 31.07 $\pm$ 4.46 & 19.28 $\pm$ 3.16 & 21.19 $\pm$ 1.74 & 90.90 $\pm$ 0.95 & 91.04 $\pm$ 1.75 & 92.11 $\pm$ 1.14 \\
CADS & 18.23 $\pm$ 0.35 & 18.24 $\pm$ 1.18 & 25.06 $\pm$ 8.83 & 93.29 $\pm$ 0.73 & 91.18 $\pm$ 1.74 & 90.22 $\pm$ 2.64 \\
DPP & 20.36 $\pm$ 0.76 & 20.46 $\pm$ 0.41 & 32.35 $\pm$ 7.69 & 93.61 $\pm$ 0.50 & 92.77 $\pm$ 0.39 & 93.54 $\pm$ 1.04 \\
\rowcolor{lightblue}\textbf{Ours} & 18.13 $\pm$ 0.81 & 17.39 $\pm$ 0.20 & 27.25 $\pm$ 1.58 & 94.45 $\pm$ 0.69 & 93.76 $\pm$ 1.37 & 94.05 $\pm$ 1.69 \\
\midrule
& \multicolumn{3}{c}{\textbf{Vendi Score Pixel} $\uparrow$} & \multicolumn{3}{c}{\textbf{Vendi Score Inception} $\uparrow$} \\
\cmidrule(lr){2-4}\cmidrule(lr){5-7}
PG & 1.79 $\pm$ 0.01 & 1.99 $\pm$ 0.04 & 2.44 $\pm$ 0.10 & 2.31 $\pm$ 0.05 & 2.18 $\pm$ 0.11 & 2.66 $\pm$ 0.11 \\
CADS & 1.84 $\pm$ 0.04 & 1.86 $\pm$ 0.05 & 1.92 $\pm$ 0.08 & 2.30 $\pm$ 0.04 & 2.16 $\pm$ 0.08 & 2.33 $\pm$ 0.46 \\
DPP & 1.91 $\pm$ 0.04 & 2.11 $\pm$ 0.08 & 2.35 $\pm$ 0.13 & 2.36 $\pm$ 0.08 & 2.19 $\pm$ 0.11 & 2.60 $\pm$ 0.31 \\
\rowcolor{lightblue}\textbf{Ours} & 2.05 $\pm$ 0.02 & 2.24 $\pm$ 0.05 & 2.60 $\pm$ 0.06 & 2.88 $\pm$ 0.09 & 2.41 $\pm$ 0.10 & 2.70 $\pm$ 0.22  \\
\midrule
& \multicolumn{3}{c}{\textbf{FID} $\downarrow$} & \multicolumn{3}{c}{\textbf{CLIP Score} $\uparrow$} \\
\cmidrule(lr){2-4}\cmidrule(lr){5-7}
PG & 136.71 $\pm$ 2.75 & 136.36 $\pm$ 2.40 & 141.19 $\pm$ 2.80 & 30.32 $\pm$ 0.16 & 30.29 $\pm$ 0.07 & 29.80 $\pm$ 0.17 \\
CADS & 134.44 $\pm$ 0.66 & 135.62 $\pm$ 0.49 & 138.73 $\pm$ 3.63 & 29.69 $\pm$ 0.09 & 29.58 $\pm$ 0.34 & 29.32 $\pm$ 0.07 \\
DPP & 134.22 $\pm$ 3.54 & 135.69 $\pm$ 2.56 & 140.73 $\pm$ 4.90 & 29.86 $\pm$ 0.18 & 29.86 $\pm$ 0.11 & 29.27 $\pm$ 0.27 \\
\rowcolor{lightblue}\textbf{Ours} & 131.93 $\pm$ 1.74 & 132.46 $\pm$ 3.04 & 139.88 $\pm$ 2.13 & 29.89 $\pm$ 0.25 & 29.81 $\pm$ 0.16 & 29.44 $\pm$ 0.23 \\
\bottomrule
\end{tabularx}
\end{table*}

\subsection{Detailed Analysis of Attribute-Level Diversity}
\label{sec:appendix_dim_cim_details}

While global metrics like FID and Vendi Score are useful, they do not fully capture two critical aspects of a generative model's behavior: its inherent biases and its fine-grained controllability. To address this, we adopt the DIM/CIM framework proposed by \citet{teotia2025dimcim}. This framework is designed to evaluate the crucial trade-off between a model's spontaneous creativity and its ability to follow specific instructions.

To analyze the trade-off between spontaneous diversity and explicit control, we employ two complementary metrics: Default-mode Diversity (DIM) and Conditional Generalization (CIM). The DIM metric quantifies a model's default-mode bias when given a coarse, underspecified prompt (e.g., ``a photo of a bus"), where a score closer to zero indicates a more balanced and diverse output. In contrast, the CIM metric evaluates controllability by measuring whether a model can reliably produce a specific, requested attribute from a dense, explicit prompt (e.g., ``a photo of a red bus"), where a high score is better. By evaluating both metrics, we can comprehensively assess whether our method's improvement in diversity comes at the cost of controllability, directly addressing the core trade-off.

\paragraph{Prompt Generation Methodology} To evaluate attribute-level performance using the DIM and CIM metrics, we created a structured set of prompts for each concept. The process begins with a real-world caption from the COCO dataset, which we term the \texttt{coco\_seed\_caption}. From this, we derive a generic \texttt{coarse\_prompt} by removing specific attributes (e.g., ``yellow''). This coarse prompt is used to evaluate the model's default-mode bias. Subsequently, we programmatically insert a variety of attributes (e.g., different colors and body types) back into the coarse prompt template to generate a list of \texttt{dense\_prompts}. This set of specific prompts is used to evaluate the model's conditional generalization and instruction-following ability. Listing below shows one complete example of this structure for the 'bus' concept, demonstrating how one seed caption yields an entire set of evaluation prompts.

\begin{tcolorbox}[
    enhanced,
    title={Example of the JSON structure used to store prompts for different concepts},
    label={lst:calss-truck-bus-prompts},
    colback=gray!5,
    colframe=gray!75,
    fonttitle=\bfseries,
    colbacktitle=gray!20,
    coltitle=black
]
\begin{lstlisting}[
    language=json,
    basicstyle=\small\ttfamily, % 使用等宽字体 (关键!)
    columns=flexible,           % 调整字符间距，防止过宽
    keepspaces=true,            % 保持缩进
    aboveskip=0pt,              % 移除顶部多余空白
    belowskip=0pt               % 移除底部多余空白
]
{
    "coco_seed_caption": "A yellow bus stopping at a bus stop in a city.",
    "coarse_prompt": "A bus stopping at a bus stop in a city.",
    "dense_prompts": [
        {
            "attribute_type": "color",
            "attribute": "red",
            "dense_prompt": "A red bus stopping at a bus stop in a city."
        },
        {
            "attribute_type": "body_type",
            "attribute": "double-decker",
            "dense_prompt": "A double-decker bus stopping at a bus stop in a city."
        }
    ]
\end{lstlisting}
\end{tcolorbox}

\paragraph{Qualitative Comparison.}
To provide a qualitative sense of the performance differences, we present a visual comparison against all baselines in Figure~\ref{fig:visualize_DIMCIM}. The images in this figure were generated using the dense prompt ``A single-decked bus is parked on the street" for the first row, and other representative prompts for the subsequent rows. The results visually confirm our method's superior ability to generate a more diverse set of outputs while maintaining high fidelity, in line with the quantitative findings. You can see more results in Sec~\ref{sec:appendix_more_visuals}.

\begin{figure*}[htbp]
    \centering

    \begin{subfigure}[t]{0.24\linewidth}
        \centering
        \caption{CADS}
        \includegraphics[width=\linewidth]{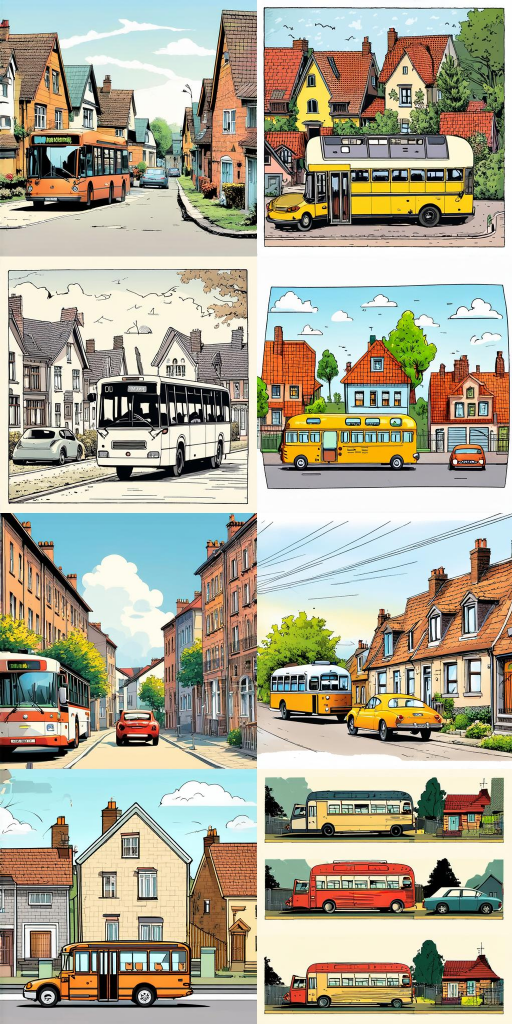}
    \end{subfigure}
    \hfill
    \begin{subfigure}[t]{0.24\linewidth}
        \centering
        \caption{DPP}
        \includegraphics[width=\linewidth]{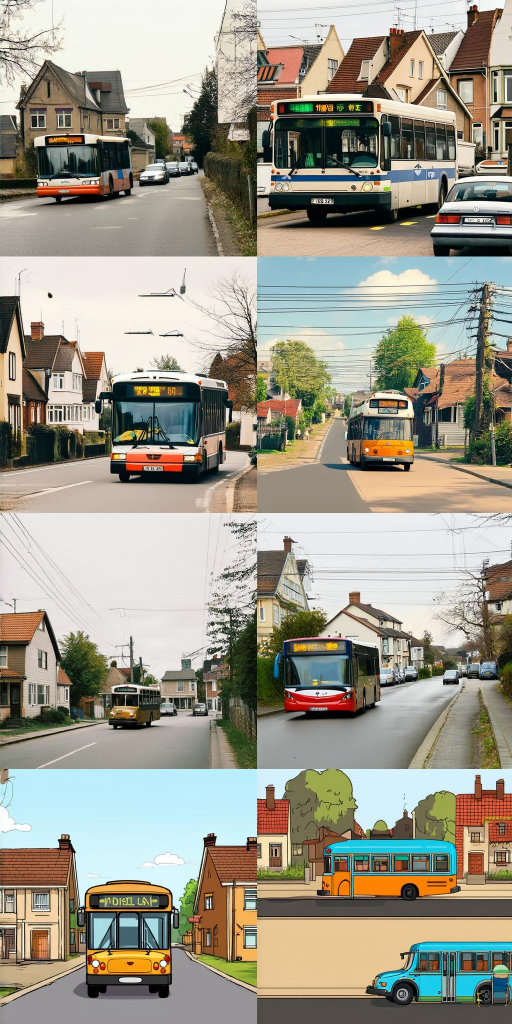}
        \label{fig:grid_dpp}
    \end{subfigure}
    \hfill
    \begin{subfigure}[t]{0.24\linewidth}
        \centering
        \caption{PG}
        \includegraphics[width=\linewidth]{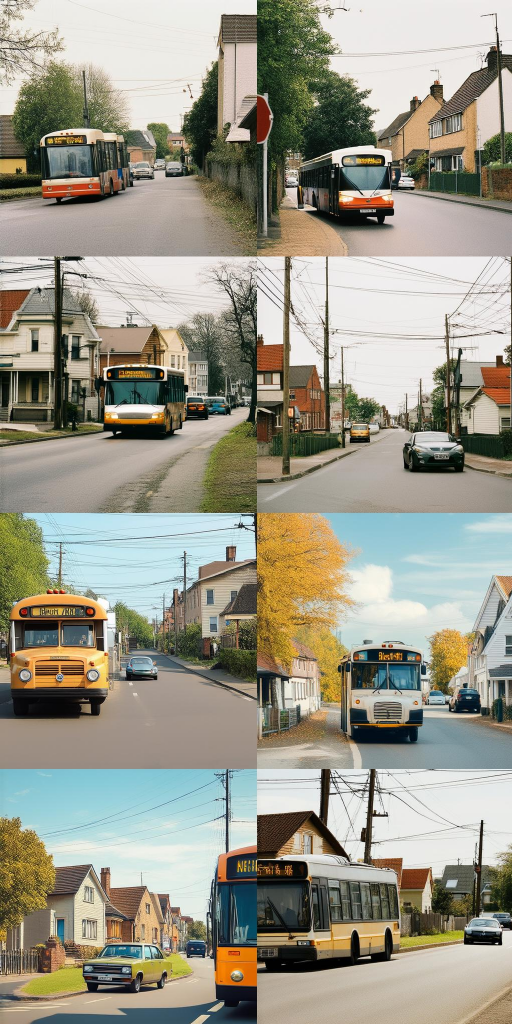}
        \label{fig:grid_pg}
    \end{subfigure}
    \hfill
    \begin{subfigure}[t]{0.24\linewidth}
        \centering
        \caption{Our Method}
        \includegraphics[width=\linewidth]{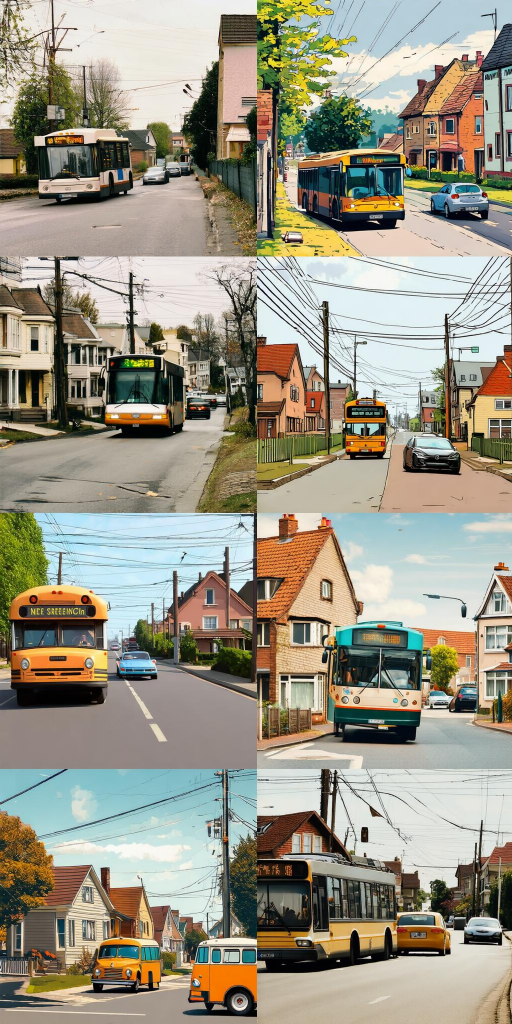}
        \label{fig:grid_ourmethod}
    \end{subfigure}
    \caption{
        Qualitative comparison of default-mode diversity using \textbf{coarse prompts}.
    }
    \label{fig:visualize_DIMCIM}
\end{figure*}

\section{Additional Experiments}
\FloatBarrier
\label{sec:appendix_additional_exp}

\subsection{Toy Example: Behavior in High-Particle Regimes}
\label{subsec:appendix_high_particle}

\subsubsection{Analysis of Particle Density and Coverage Trade-offs}
\label{sec:toy_analysis}

To characterize the behavior of OSCAR, we analyze its performance across different particle regimes using a 2D $3 \times 3$ Gaussian Mixture Model (GMM) toy example. As illustrated in Figure~\ref{fig:appendix_toy_example2} and Figure~\ref{fig:appendix_toy_example1}, we compare standard Flow Matching with OSCAR under moderate ($N=200$) and high-density ($N=2000$) conditions.

In the moderate regime, standard Flow Matching successfully converges to the GMM means but suffers from over-concentration, failing to capture the full variance of the components. In contrast, OSCAR maintains alignment with all nine modes while significantly improving within-mode coverage. By actively discouraging trajectory collapsing, our method yields a sample distribution that better represents the underlying support while preserving the clear cluster structure.

\begin{figure}[t]
    \centering
    \includegraphics[width=0.6\textwidth]{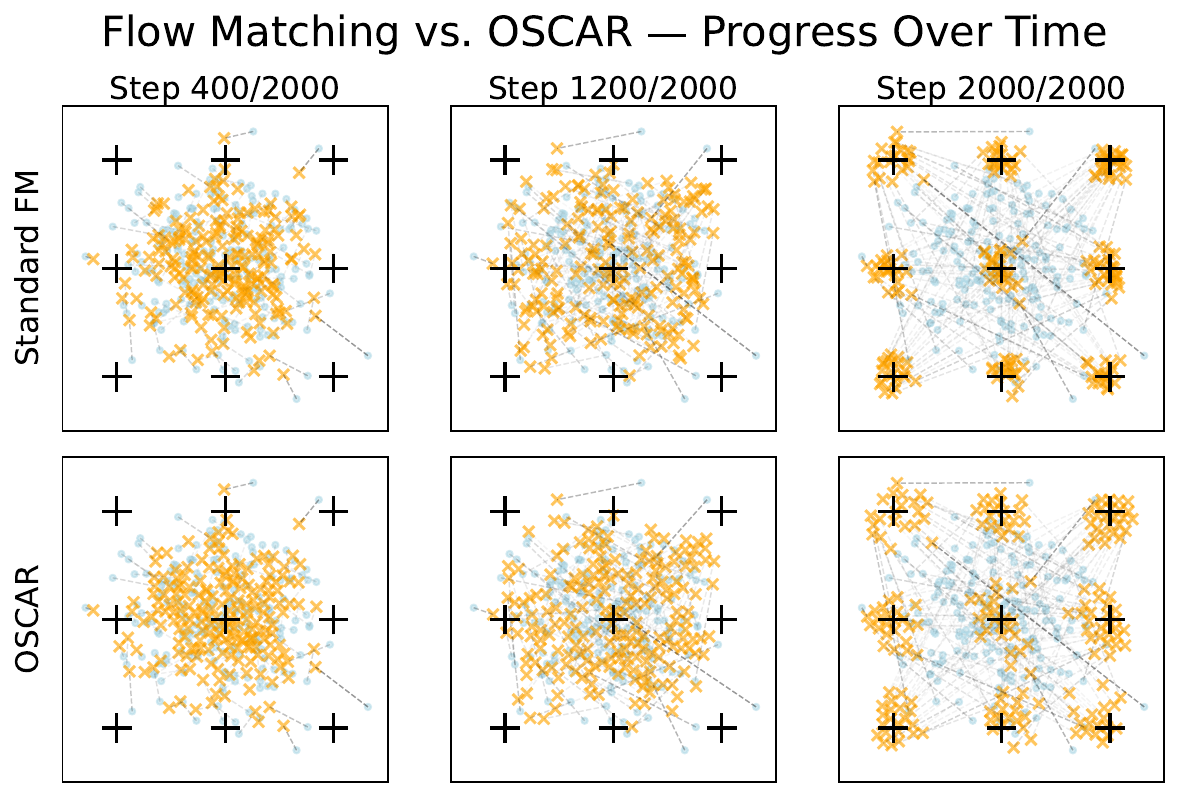}
    \captionof{figure}{
      \textbf{2D toy visualization of diversity enhancement on a 3x3 GMM.}
      The top row shows the standard Flow Matching baseline, while the bottom row shows our method. Columns are snapshots at early, middle, and final generation steps. Black ``+" markers indicate the GMM means, blue dots are the shared initial particles, and orange ``x" markers are the particle locations at the given step. Our method yields more uniform within-mode coverage and better-separated modes.
    }
    \label{fig:appendix_toy_example2}
\end{figure}

As the number of particles increases to $N=2000$  without reducing the guidance strength $\gamma(t)$, we observe a phase transition in the particle configuration. The cumulative repulsive force begins to dominate the base flow, causing samples to arrange themselves into a near-uniform distribution over the support. In this regime, the modes appear as ``square-like'' patches that prioritize coverage over local density peaks.

This phenomenon is a direct consequence of our core objective: maximizing the Feature Volume. The control signal $g(x,t)$ acts effectively as a repulsive force between trajectories. From a physics perspective, when a large number of mutually repulsive particles are confined within a potential well, the configuration that minimizes total potential energy is a uniform distribution rather than a concentrated cluster. Consequently, in high-density regimes, OSCAR prioritizes covering the support, exploring the tails and boundaries of the valid region, over matching the exact probability density of the mode centers.

Rigorously, this behavior is predicted by the Girsanov Representation derived in Appendix~\ref{sec:appendix_theory} Theorem 4. The KL divergence between the OSCAR sampling distribution $\mathbb{Q}$ and the baseline distribution $\mathbb{P}$ is governed by the energy cost of the guidance: $\mathcal{D}_{KL}(\mathbb{Q} \| \mathbb{P}) = \frac{1}{2} \mathbb{E}_{\mathbb{Q}} [ \int \|g\|^2 / \beta dt ]$. As $N$ increases, the magnitude of the collective repulsive gradients $\|g\|^2$ grows significantly. This implies a larger upper bound on the divergence, theoretically predicting that the sampled distribution will deviate more significantly from the baseline density to achieve maximum entropy.

\begin{figure}[t]
    \centering
    \includegraphics[width=0.6\textwidth]{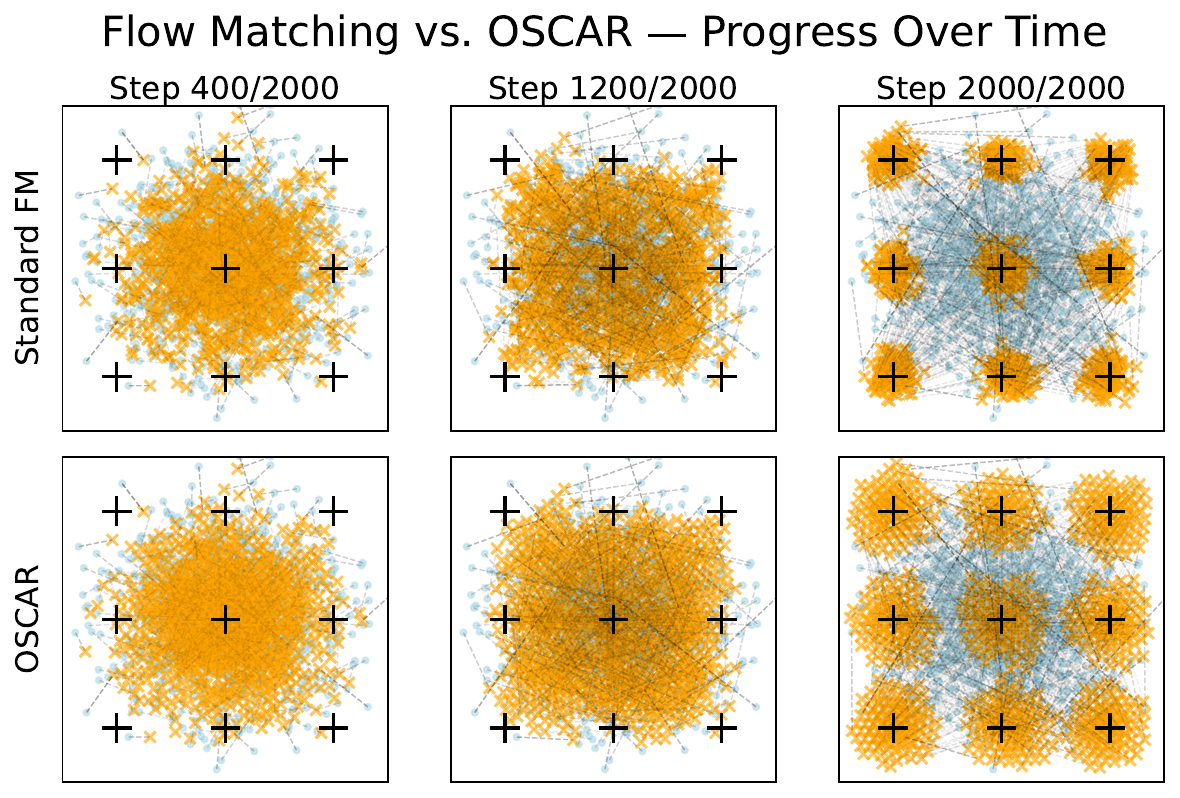}
        \caption{
        When the number of particles is significantly increased without a corresponding reduction in guidance strength, the cumulative repulsive force dominates the base flow. This causes the samples to arrange themselves into a uniform distribution to minimize system energy, effectively prioritizing maximum support coverage over precise density matching.
        }
    \label{fig:appendix_toy_example1}
\end{figure}

\subsection{ImageNet-400 Single-sample Evaluation}
As an additional sanity check that our diversity-aware control does not degrade per-sample fidelity, we conduct a class-conditional experiment on ImageNet. We randomly sample 400 classes from ImageNet-1K and, for each class, construct a single class-name prompt and generate one image per method. We then evaluate all methods on the resulting set of 400 images. As reported in Tab.~\ref{tab:imagenet_evaluation}, our method achieves the best performance on both variants of the Vendi Score (pixel- and Inception-based), indicating that the induced stochastic control does not collapse to a narrow subset of visual patterns. At the same time, OSCAR attains the lowest FID among all competitors, and matches baselines on CLIP score and other fidelity-oriented metrics. Taken together, these results confirm that our method preserves, and can even slightly improve, overall image quality while maintaining strong diversity characteristics even in this challenging single-sample-per-class regime.

\begin{table}[h]
\centering
\caption{\small
Evaluation on 400 single-image ImageNet-style prompts (1 per class).
This unbiased setting tests for any systematic degradation of single-image fidelity.
}
\label{tab:imagenet_evaluation}
\begin{tabular}{lcccc}
\toprule
\textbf{Method} &
\makecell{\textbf{FID} $\downarrow$} &
\makecell{\textbf{Vendi}\\\textbf{(Pixel)} $\uparrow$} &
\makecell{\textbf{Vendi}\\\textbf{(Inception)} $\uparrow$} &
\makecell{\textbf{CLIP Score} $\uparrow$} \\

\specialrule{\heavyrulewidth}{\aboverulesep}{\belowrulesep}
FM-SD3.5
& 100.4 & 3.38 & 35.70 & 18.14 $\pm$ 0.25  \\

\specialrule{\heavyrulewidth}{\aboverulesep}{\belowrulesep}
DPP
& 100.3 & 3.34 & 36.03 & 18.11 $\pm$ 0.25 \\
CADS
& 101.1 & 3.52 & 35.91 & 17.96 $\pm$ 0.25 \\
PG
& 99.7 & 3.66 & 34.34 & 18.13 $\pm$ 0.24 \\

\specialrule{\heavyrulewidth}{\aboverulesep}{\belowrulesep}
\rowcolor{lightblue}
\textbf{Oscar}
& \textbf{99.3} & \textbf{3.85} & \textbf{36.40} & \textbf{18.08 $\pm$ 0.24} \\
\bottomrule
\end{tabular}
\end{table}

\subsection{Fair-Budget Comparison}
\label{subsec:fair_budget}

We first compare the computational cost of Oscar against various baselines under the same generation configuration, as shown in Table~\ref{tab:cost_comparison_total}. In terms of FLOPs and runtime, Oscar introduces only a modest computational overhead compared to the baseline FM-SD3.5. However, this overhead is significantly lower than that of DPP, which requires more than double the computation and time of the baseline. This result empirically validates our claim in Section~\ref{sec:related_work} that our method significantly reduces the computational complexity inherent in DPP. Regarding peak GPU VRAM usage, all methods exhibit roughly comparable memory consumption.

\begin{table}[h]
\centering
\caption{\small
Computational cost comparison under fixed generation settings (NFE=30, CFG=5.0, batch size=32). We report total theoretical computation, wall-clock time, and peak VRAM usage. 
}
\label{tab:cost_comparison_total} 
\begin{tabular}{lccc}
\toprule
\textbf{Variant} &
\makecell{\textbf{FLOPs}\textbf{(G)} $\downarrow$} &
\makecell{\textbf{Time}\\\textbf{(seconds/run)} $\downarrow$} & 
\makecell{\textbf{Peak VRAM}\textbf{(GB)} $\downarrow$} \\

\specialrule{\heavyrulewidth}{\aboverulesep}{\belowrulesep} 
FM-SD3.5
& 4093.4 & 237.8 & 19.2 \\

\specialrule{\heavyrulewidth}{\aboverulesep}{\belowrulesep} 
DPP
& 9045.1 & 990.2 & 19.5 \\
CADS
& 4093.4 & 231.2 & 20.0 \\
PG
& 4093.4 & 229.6 & 26.4 \\

\specialrule{\heavyrulewidth}{\aboverulesep}{\belowrulesep} 
\rowcolor{lightblue}
Oscar (Ours)
& 5534.6 & 359.7 & 18.2 \\
\bottomrule

\multicolumn{4}{l}{\footnotesize Costs measured on an NVIDIA A6000 GPU, 512x512 resolution, batch size=32.}
\\[-2pt]
\end{tabular}
\end{table}

To conduct a fairer comparison, we further evaluate Oscar against FM-SD3.5 under a matched computational budget, as detailed in Table~\ref{tab:appendix_compute_matched}. We match the cost of Oscar against two enhanced FM-SD3.5 variants: one with increased NFE to 40 and another with an increased number of particles to 48. The results show that even at a matched computational cost, our method comprehensively outperforms the baselines on most metrics. Although the FM-SD3.5 variant with increased particles achieves a better FID score, we note that the FID metric is known to be highly sensitive to the number of samples when the evaluation set is small.

\begin{table*}[h]
\centering
\caption{\small
Compute-matched comparison with standard Flow Matching model under matched FLOPs.
}
\label{tab:appendix_compute_matched}
\begin{tabular}{lcccccccc}
\toprule
 & \multicolumn{3}{c}{\textbf{Compute budget}} & \multicolumn{3}{c}{\textbf{Quality \& diversity metrics}} \\
\cmidrule(lr){2-4} \cmidrule(lr){5-8}
\textbf{Method} &
\makecell{\textbf{NFE}} &
\makecell{\textbf{Particles}} &
\makecell{\textbf{FLOPs}\\\textbf{(T/run)}} &
\makecell{\textbf{CLIP}$\uparrow$} &
\makecell{\textbf{Vendi}\\\textbf{(Pixel)} $\uparrow$} &
\makecell{\textbf{FID}$\downarrow$} &
\makecell{\textbf{BRISQUE}$\downarrow$} \\
\midrule
FM-SD3.5 & 30 & 32 & 4.09 & 28.24 $\pm$ 0.18 & 2.45 $\pm$ 0.13 & 164.4 $\pm$ 1.8 & 23.4 $\pm$ 1.4 \\
+K particles   & 30 & 48 & 6.14 & 28.15 $\pm$ 0.23 & 2.60 $\pm$ 0.21 & \textbf{149.7 $\pm$ 1.3} & 23.5 $\pm$ 1.7 \\
+N NFE  & 40 & 32 & 5.43 & 28.15 $\pm$ 0.21 & 2.40 $\pm$ 0.15 & 165.3 $\pm$ 1.8 & 24.6 $\pm$ 1.7 \\
\midrule
\rowcolor{lightblue}
\textbf{OSCAR} & 30 & 32 & 5.53 & \textbf{28.26 $\pm$ 0.22} & \textbf{2.86 $\pm$ 0.05} & 163.3 $\pm$ 1.6 & \textbf{21.2 $\pm$ 1.5} \\
\bottomrule
\end{tabular}
\end{table*}

\subsection{Performance Across Different Sampling Steps}
\label{subsec:appendix_nfe}

To demonstrate the robustness and efficiency of our method across various computational budgets, we first analyze its performance as a function of the NFE. The quantitative results, presented in Table~\ref{tab:NFE_metrics_comparison}, confirm that our method consistently outperforms all baselines across the tested NFE settings of 10, 20, and 40. Specifically, our method achieves a significant and stable advantage in diversity, leading in both Vendi Score Pixel and Vendi Score Inception scores at every step count. This is accomplished without sacrificing quality; our method attains the best Brisque score at every NFE level and maintains a highly competitive FID score, confirming a high degree of overall distributional fidelity.

\begin{table*}[h!]
\centering
\caption{
    Quantitative comparison of our method against baselines across different NFE. Our method consistently improves diversity metrics while achieving state-of-the-art performance on quality and alignment scores.
}
\label{tab:NFE_metrics_comparison}
\footnotesize
\setlength{\tabcolsep}{5pt}
\renewcommand{\arraystretch}{0.8}
\begin{tabularx}{\textwidth}{l@{\extracolsep{\fill}}cccccc}
\toprule
& \multicolumn{3}{c}{\textbf{Number of Function Evaluations (NFE)}} & \multicolumn{3}{c}{\textbf{Number of Function Evaluations (NFE)}} \\
\cmidrule(lr){2-4}\cmidrule(lr){5-7}
\textbf{Method} & $40$ & $20$ & $10$ & $40$ & $20$ & $10$ \\
\midrule
& \multicolumn{3}{c}{\textbf{Brisque} $\downarrow$} & \multicolumn{3}{c}{\textbf{1 - MS-SSIM}(\%) $\uparrow$} \\
\cmidrule(lr){2-4}\cmidrule(lr){5-7}
PG              & 48.75 $\pm$ 2.85 & 71.26 $\pm$ 3.24 & 105.04 $\pm$ 3.94 & 79.40 $\pm$ 0.46 & 79.27 $\pm$ 0.73 & 79.07 $\pm$ 0.66 \\
CADS            & 15.74 $\pm$ 0.73 & 27.51 $\pm$ 0.95 & 56.84 $\pm$ 1.81 & 81.61 $\pm$ 0.65 & 80.81 $\pm$ 0.61 & 80.49 $\pm$ 0.83 \\
DPP             & 13.80 $\pm$ 1.41 & 26.51 $\pm$ 1.98 & 61.20 $\pm$ 2.23 & 81.05 $\pm$ 0.58 & 80.62 $\pm$ 0.71 & 80.00 $\pm$ 0.41 \\
\rowcolor{lightblue}\textbf{Ours}     & \textbf{12.30 $\pm$ 1.25} & \textbf{17.73 $\pm$ 1.21} & \textbf{44.04 $\pm$ 1.38} & \textbf{83.47 $\pm$ 0.42} & \textbf{82.02 $\pm$ 0.72} & \textbf{82.00 $\pm$ 0.73} \\
\midrule
& \multicolumn{3}{c}{\textbf{Vendi Score Pixel} $\uparrow$} & \multicolumn{3}{c}{\textbf{Vendi Score Inception} $\uparrow$} \\
\cmidrule(lr){2-4}\cmidrule(lr){5-7}
PG              & 1.74 $\pm$ 0.06 & 1.72 $\pm$ 0.06 & 1.64 $\pm$ 0.05 & 2.78 $\pm$ 0.15 & 2.77 $\pm$ 0.07 & 2.73 $\pm$ 0.05 \\
CADS            & 1.97 $\pm$ 0.04 & 1.94 $\pm$ 0.05 & 1.77 $\pm$ 0.06 & 2.84 $\pm$ 0.40 & 2.81 $\pm$ 0.07 & 2.83 $\pm$ 0.06 \\
DPP             & 1.88 $\pm$ 0.06 & 1.85 $\pm$ 0.06 & 1.75 $\pm$ 0.05 & 2.87 $\pm$ 0.07 & 2.86 $\pm$ 0.12 & 2.77 $\pm$ 0.10 \\
\rowcolor{lightblue}\textbf{Ours}     & \textbf{2.07 $\pm$ 0.07} & \textbf{2.06 $\pm$ 0.06} & \textbf{1.93 $\pm$ 0.05} & \textbf{2.93 $\pm$ 0.08} & \textbf{2.88 $\pm$ 0.05} & \textbf{2.87 $\pm$ 0.12} \\
\midrule
& \multicolumn{3}{c}{\textbf{FID} $\downarrow$} & \multicolumn{3}{c}{\textbf{CLIP Score} $\uparrow$} \\
\cmidrule(lr){2-4}\cmidrule(lr){5-7}
PG              & 167.34 $\pm$ 2.12 & 166.50 $\pm$ 2.39 & 165.39 $\pm$ 2.03 & \textbf{19.11 $\pm$ 0.40} & \textbf{18.93 $\pm$ 0.36} & 18.23 $\pm$ 0.76 \\
CADS            & 165.84 $\pm$ 0.95 & 166.13 $\pm$ 1.22 & 167.34 $\pm$ 1.03 & 18.97 $\pm$ 0.38 & 18.90 $\pm$ 0.46 & \textbf{18.35 $\pm$ 0.63} \\
DPP             & 166.51 $\pm$ 1.52 & 165.89 $\pm$ 0.72 & 166.21 $\pm$ 1.28 & 18.97 $\pm$ 0.42 & 18.63 $\pm$ 0.46 & 17.91 $\pm$ 3.94 \\
\rowcolor{lightblue}\textbf{Ours}     & \textbf{165.40 $\pm$ 0.94} & \textbf{164.75 $\pm$ 0.66} & \textbf{165.31 $\pm$ 0.79} & 18.77 $\pm$ 0.46 & 18.58 $\pm$ 0.58 & 18.11 $\pm$ 0.69 \\
\midrule
& \multicolumn{3}{c}{\textbf{CLIP-IQA} $\uparrow$} & \multicolumn{3}{c}{\textbf{Image Reward} $\uparrow$} \\
\cmidrule(lr){2-4}\cmidrule(lr){5-7}
PG              & 7.57 $\pm$ 0.24 & 7.52 $\pm$ 0.23 & 7.41 $\pm$ 0.25 & 1.26 $\pm$ 0.20 & 1.17 $\pm$ 0.23 & 1.03 $\pm$ 0.25 \\
CADS            & 7.67 $\pm$ 0.23 & 7.65 $\pm$ 0.23 & 7.59 $\pm$ 0.21 & 1.42 $\pm$ 0.12 & 1.37 $\pm$ 0.13 & 1.29 $\pm$ 0.16 \\
DPP             & 7.61 $\pm$ 0.25 & 7.64 $\pm$ 0.24 & 7.61 $\pm$ 0.24 & 1.42 $\pm$ 0.17 & 1.36 $\pm$ 0.15 & 1.23 $\pm$ 0.18 \\
\rowcolor{lightblue}\textbf{Ours}     & \textbf{7.66 $\pm$ 0.22} & \textbf{7.65 $\pm$ 0.24} & \textbf{7.63 $\pm$ 0.23} & \textbf{1.49 $\pm$ 0.11} & \textbf{1.43 $\pm$ 0.12} & \textbf{1.36 $\pm$ 0.14} \\
\bottomrule
\end{tabularx}
\end{table*}


This quantitative superiority is particularly evident in low-step scenarios. For a direct visual comparison, Figure~\ref{fig:nfe10_comparison} showcases the qualitative differences against all baselines at a fixed, low computational budget of NFE=20 for the prompt ``A photo of a truck''. The visual results corroborate our quantitative findings, highlighting that our method produces a significantly more diverse and visually coherent set of images compared to the baselines in efficient generation scenarios.

Having established our method's advantage over baselines, we also analyze its internal trade-off between computational cost and image fidelity in Figure~\ref{fig:visual_cozy}. While our approach is effective even at NFE=10, the generations can exhibit minor artifacts like blurry edges and localized distortions. Quality progressively improves with a larger budget, with NFE=20 yielding sharper results and NFE=40 producing the most natural and artifact-free images. Taken together, these results confirm that our method is superior to baselines at all tested budgets, and its output quality can be further enhanced with a higher NFE.

\begin{figure*}[htbp]
    \centering

    \begin{subfigure}[b]{0.32\textwidth}
        \centering
        \includegraphics[width=\linewidth]{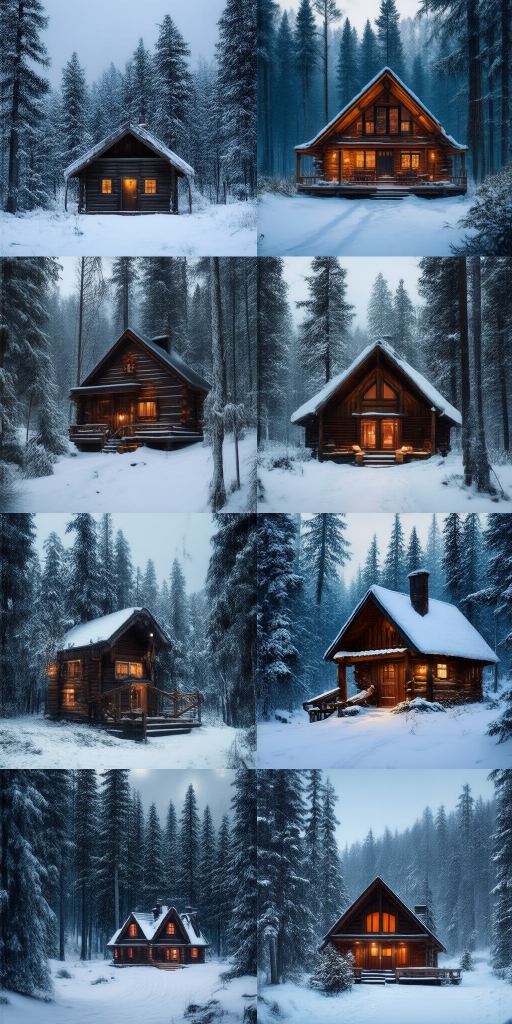}
        \caption{NFE = 10}
        \label{fig:nfe10}
    \end{subfigure}
    \hfill 
    \begin{subfigure}[b]{0.32\textwidth}
        \centering
        \includegraphics[width=\linewidth]{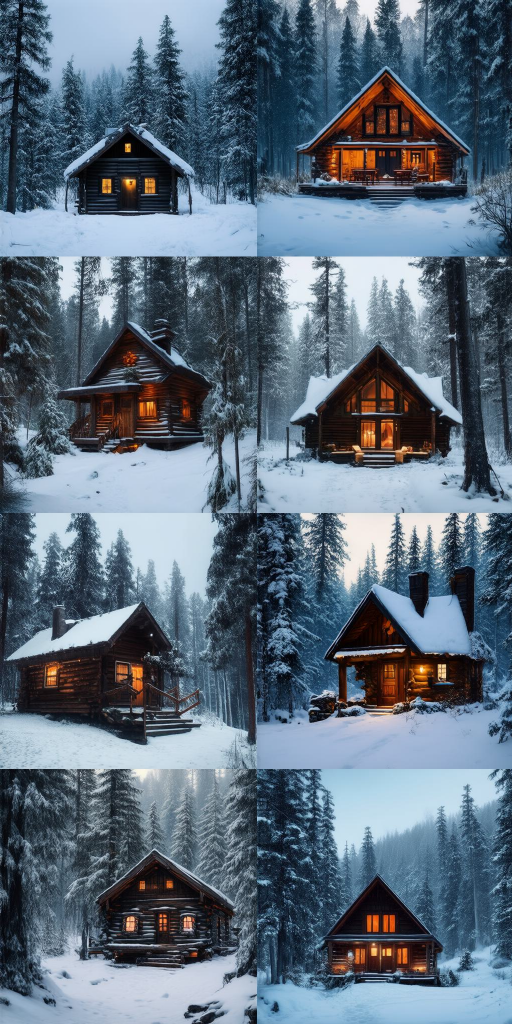}
        \caption{NFE = 20}
        \label{fig:nfe20}
    \end{subfigure}
    \hfill 
    \begin{subfigure}[b]{0.32\textwidth}
        \centering
        \includegraphics[width=\linewidth]{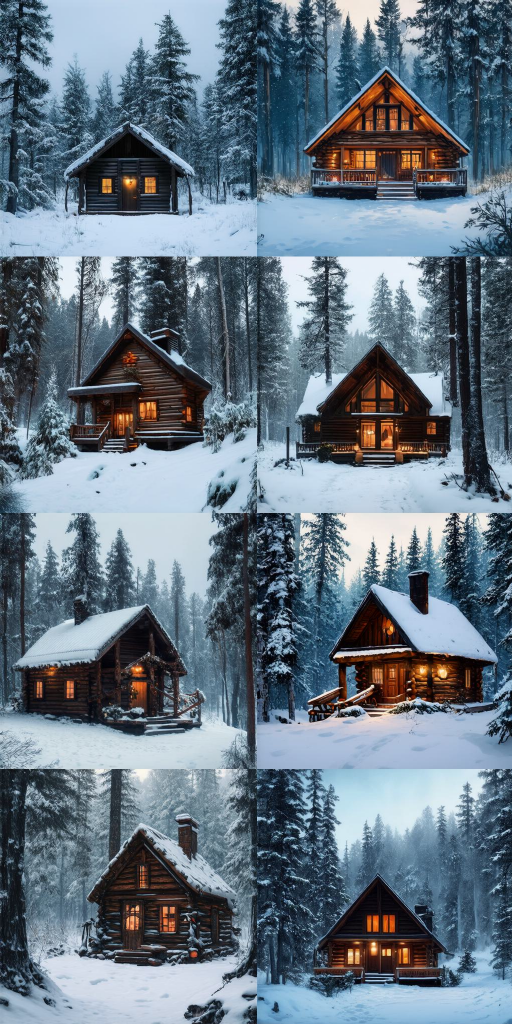}
        \caption{NFE = 40}
        \label{fig:nfe40}
    \end{subfigure}
    \caption{
        Visual analysis of our method's sensitivity to the NFE for the prompt ``A cozy cabin in the snowy forest." A low NFE of 10 results in images with blurry edges and localized artifacts, while a higher NFE of 40 yields the most natural and artifact-free results.
    }
    \label{fig:visual_cozy}
\end{figure*}

\begin{figure*}[htbp]
    \centering

    \begin{subfigure}[b]{0.24\linewidth}
        \centering
        \includegraphics[width=\linewidth]{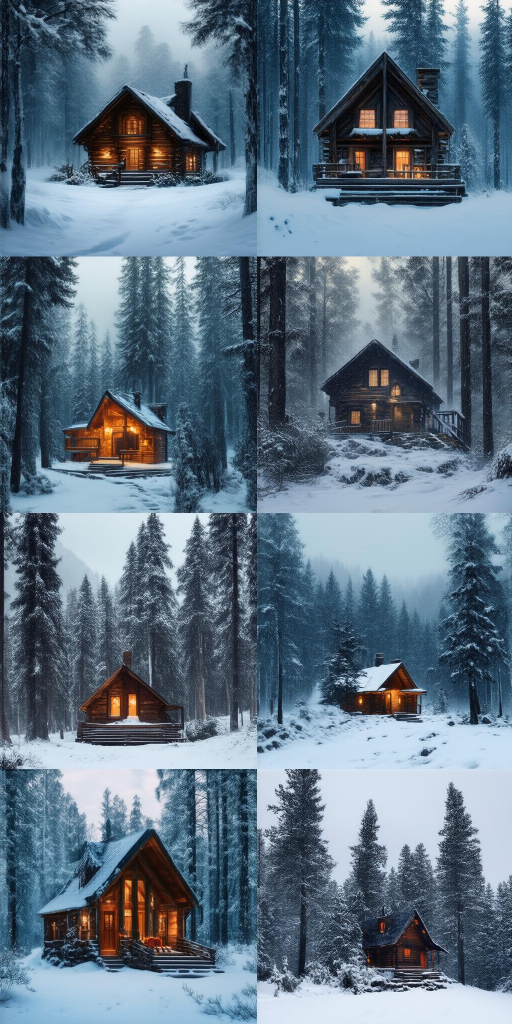}
        \caption{CADS}
        \label{fig:nfe10_cads}
    \end{subfigure}
    \hfill
    \begin{subfigure}[b]{0.24\linewidth}
        \centering
        \includegraphics[width=\linewidth]{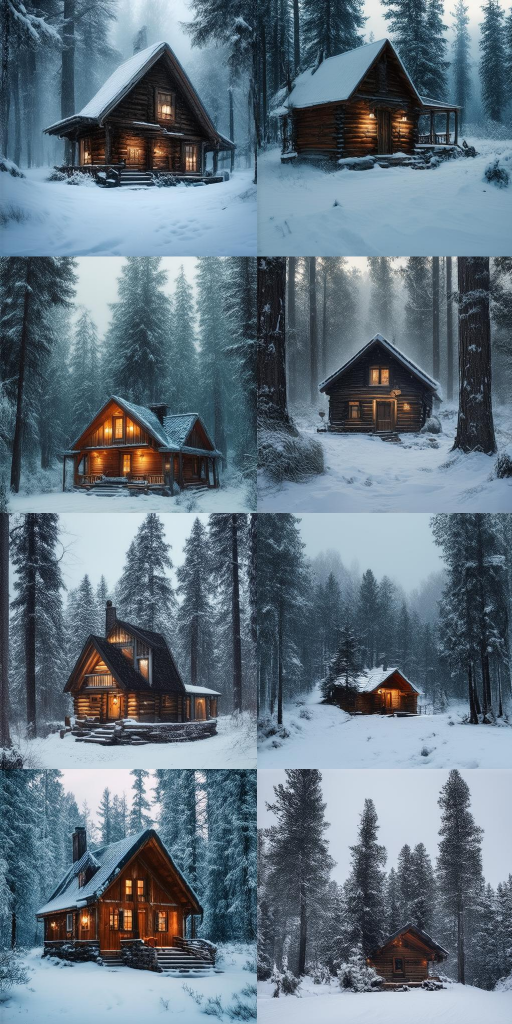}
        \caption{DPP}
        \label{fig:nfe10_dpp}
    \end{subfigure}
    \hfill
    \begin{subfigure}[b]{0.24\linewidth}
        \centering
        \includegraphics[width=\linewidth]{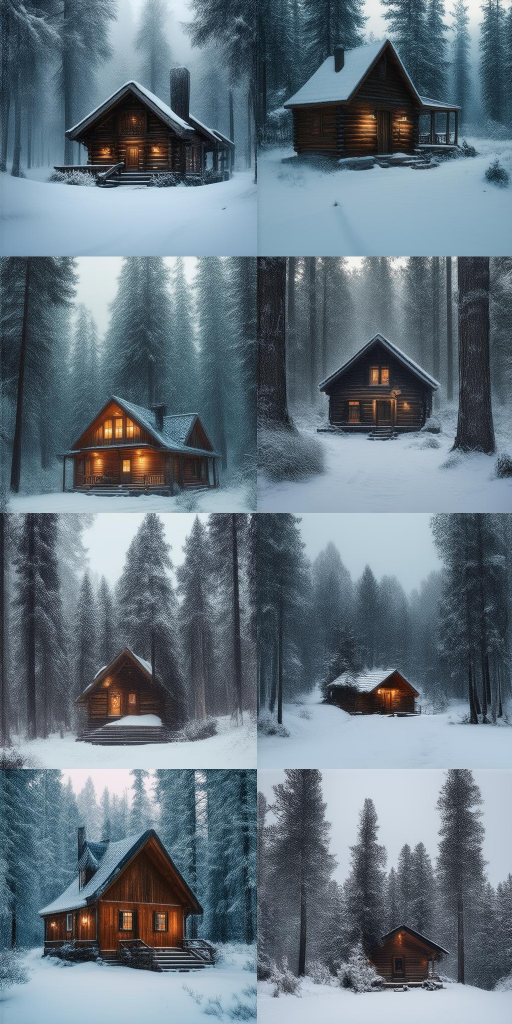}
        \caption{PG}
        \label{fig:nfe10_pg}
    \end{subfigure}
    \hfill
    \begin{subfigure}[b]{0.24\linewidth}
        \centering
        \includegraphics[width=\linewidth]{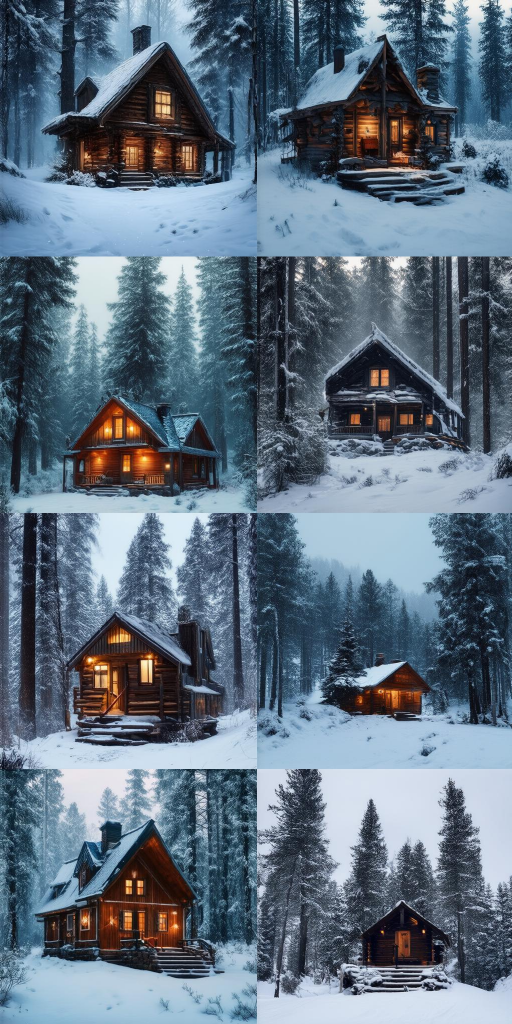}
        \caption{Our Method}
        \label{fig:nfe10_ourmethod}
    \end{subfigure}
    \caption{
        Visual comparison of all methods at a fixed low computational budget (NFE=20) for the prompt ``A cozy cabin in the snowy forest". Our method generates a visibly more diverse set of images.
    }
    \label{fig:nfe10_comparison}
\end{figure*}

\subsection{Diagnostic Results for Mixed ODE/SDE Sampling}
\label{sec:appendix_mix_ode_sde}

We additionally evaluate a simple Mix ODE/SDE sampler as a diagnostic comparison on T2I-CompBench. 
This sampler follows a hybrid trajectory: stochastic SDE-style updates are used in the early stage to increase sample variation, while the later stage returns to the deterministic ODE sampler. 
Unlike OSCAR, this diagnostic sampler does not include an explicit geometric constraint for preserving the base flow direction.

Table~\ref{tab:mix_ode_sde_compbench} shows the results on T2I-CompBench. 
Mix ODE/SDE improves Vendi Score Pixel over the base model and other baselines, indicating that early stochasticity can indeed increase low-level sample variation. 
However, this comes with weaker alignment and perceptual quality indicators: its CLIP Score and BLIPVQA are lower than those of OSCAR and several other baselines. 
By contrast, OSCAR achieves the best Vendi Score Inception, $1-\text{MS-SSIM}$, and BLIPVQA, while remaining competitive on CLIP Score. 
These results suggest that unconstrained stochasticity can increase diversity, but does not necessarily yield a favorable diversity--quality trade-off.

\begin{table}[t]
    \centering
    \caption{\small
    Diagnostic comparison of Mix ODE/SDE on T2I-CompBench.
    }
    \label{tab:mix_ode_sde_compbench}
    \footnotesize
    \setlength{\tabcolsep}{4pt}
    \renewcommand{\arraystretch}{0.95}
    \begin{tabular}{lccccc}
        \toprule
        \textbf{Method} &
        \textbf{Vendi Inc.} $\uparrow$ &
        \textbf{Vendi Pix.} $\uparrow$ &
        \textbf{CLIP} $\uparrow$ &
        $\mathbf{1-\textbf{MS-SSIM}}$ $\uparrow$ &
        \textbf{BLIPVQA} $\uparrow$ \\
        \midrule
        FM-SD3.5    & 2.95 & 1.89 & \textbf{36.03} & 0.8092 & 0.7384 \\
        CADS        & 3.04 & 1.86 & 35.20 & 0.8231 & 0.7245 \\
        PG          & 3.02 & 1.94 & 35.37 & 0.7835 & 0.7811 \\
        DPP         & 2.93 & 1.99 & 35.55 & 0.7984 & 0.7914 \\
        Mix ODE/SDE & 3.02 & \textbf{2.16} & 34.04 & 0.8201 & 0.7353 \\
        \rowcolor{lightblue}
        \textbf{OSCAR} & \textbf{3.15} & 2.13 & 35.53 & \textbf{0.8285} & \textbf{0.7952} \\
        \bottomrule
    \end{tabular}
\end{table}

Figure~\ref{fig:mix_ode_sde_visuals} further illustrates this behavior qualitatively. 
Although Mix ODE/SDE produces visually varied samples, many outputs contain noticeable artifacts, distorted structures, or unnatural textures. 
This supports the quantitative observation that simply injecting stochasticity is insufficient for preserving image quality during diversity enhancement.

\begin{figure}[t]
    \centering
    \includegraphics[width=\linewidth]{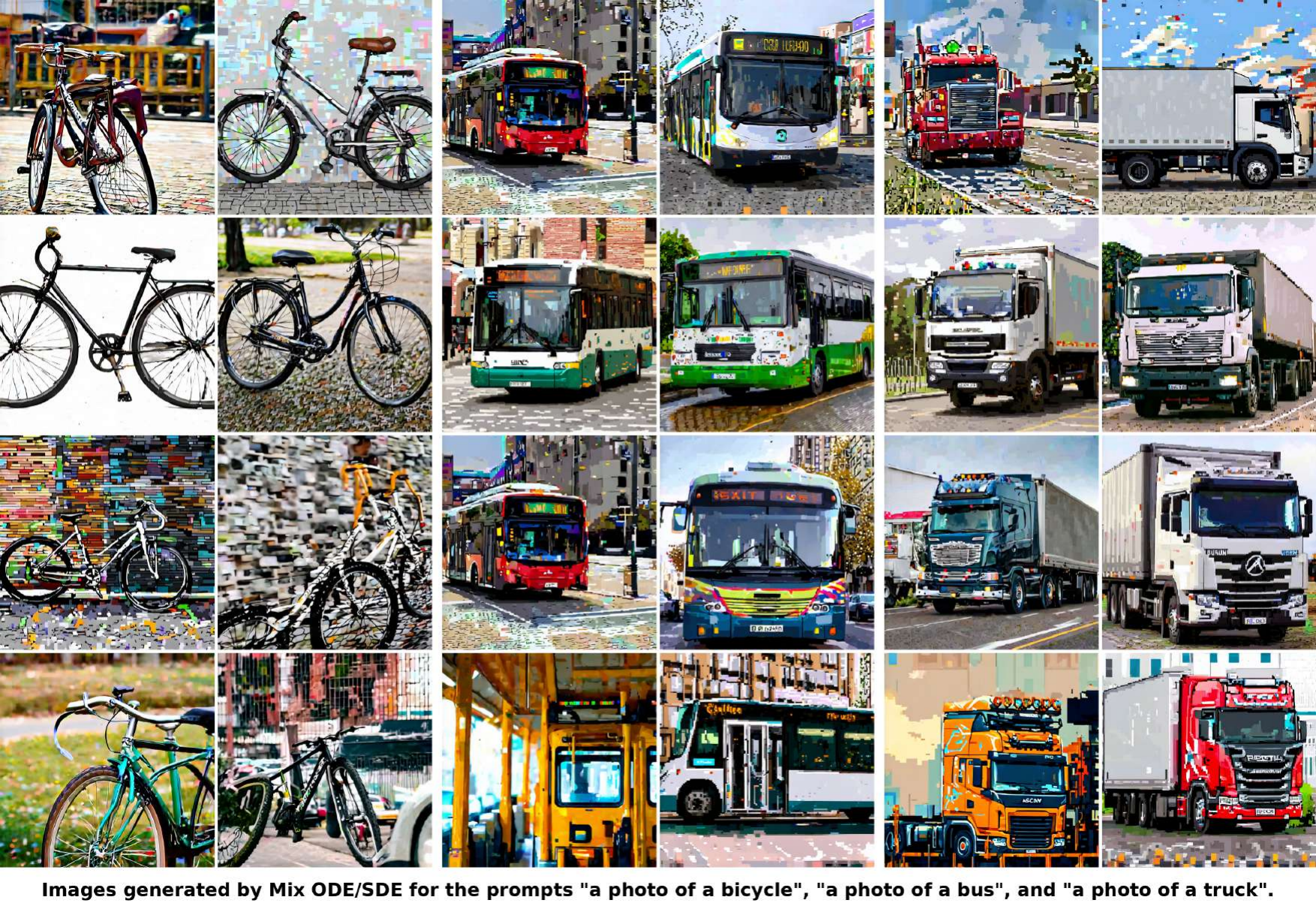}
    \caption{\small
    Visual results of the Mix ODE/SDE diagnostic sampler on T2I-CompBench prompts. 
    The samples show increased variation but also visible artifacts and degraded visual fidelity.
    }
    \label{fig:mix_ode_sde_visuals}
\end{figure}

\section{Robust Studies}
\FloatBarrier
\label{sec:appendix_robust}

\subsection{Robustness to the Number of Generated Samples}
\label{sec:appendix_sample_number_robustness}

A potential concern is that OSCAR may rely on generating a large number of images jointly, since the Gram-matrix-based repulsive term is defined over a jointly active set of trajectories. When the batch size $m$ is small, the method remains applicable, but the diversity signal becomes more local. To examine this setting, we conduct an additional small-batch evaluation with $m=4$. As shown in Table~\ref{tab:small_batch_robustness}, OSCAR still consistently improves over the base model under this constrained generation setting. Although the diversity gain is weaker than in the larger-batch regime, OSCAR achieves the best Vendi Score Pixel and CLIP Score, and remains competitive on Vendi Score Inception and $1-\text{MS-SSIM}$. These results indicate that the advantage of OSCAR does not rely on a large candidate set and can still be realized when only a few images are generated per prompt.

\begin{table}[t]
    \centering
    \caption{\small Small-batch robustness evaluation with $m=4$ generated images per prompt.}
    \label{tab:small_batch_robustness}
    \small
    \setlength{\tabcolsep}{6pt}
    \renewcommand{\arraystretch}{1.05}
    \begin{tabular}{lcccc}
        \toprule
        \textbf{Method} &
        \textbf{Vendi Score Inc.} $\uparrow$ &
        \textbf{Vendi Score Pixel} $\uparrow$ &
        \textbf{CLIP Score} $\uparrow$ &
        $\mathbf{1-\text{MS-SSIM}}$ $\uparrow$ \\
        \midrule
        FM-SD3.5    & 1.84 & 1.64 & 37.75 & 0.8686 \\
        CADS        & 1.83 & 1.64 & 37.82 & 0.8686 \\
        PG          & 1.83 & 1.58 & 37.81 & 0.8391 \\
        DPP         & 1.87 & 1.67 & 37.13 & 0.8654 \\
        Mix ODE/SDE & \textbf{1.88} & 1.74 & 36.64 & \textbf{0.9033} \\
        \rowcolor{lightblue}
        \textbf{OSCAR} & \textbf{1.88} & \textbf{1.76} & \textbf{37.95} & 0.8825 \\
        \bottomrule
    \end{tabular}
\end{table}

More generally, a natural offline extension is to maintain a memory bank of cached endpoint features from previous generations. The repulsive term can then be computed not only against the currently active set, but also against these cached features, providing a broader semantic reference under low-VRAM or asynchronous generation settings. We view this as a promising generalization of the current online formulation.

\subsection{Numerical Robustness: Our Extrapolation vs. Alternative Integrators}

The final step of our framework utilizes a predictor to estimate the final vector. To evaluate the robustness of this component, we compared several methods, with the results detailed in Table 8. These methods include Euler, Midpoint, no predictor (w/o Predictor), and our method.

Table \ref{tab:predictor_ablation} reveals a clear trade-off between performance and stability. Completely removing the predictor, while yielding the lowest volatility, significantly sacrifices image quality, leading to notably worse performance. In contrast, the Euler and Midpoint methods show improved average performance, but at the cost of greater instability across multiple runs. Our method achieves the joint-best average performance across all metrics. More importantly, our method accomplishes this superior performance while maintaining low volatility, striking the optimal balance between high-quality output and high stability. It was therefore selected as our default configuration.


\begin{table}[h]
\centering
\caption{\small
Robustness and ablation study on the predictor component.
Different predictors are used to estimate the final vector with the same NFE.
Our default choice demonstrates superior performance.
}
\label{tab:predictor_ablation}
\begin{tabular}{lccccc}
\toprule
\textbf{Predictor} &
\makecell{\textbf{CLIP}$\uparrow$} &
\makecell{\textbf{Vendi}\\\textbf{(Pixel)} $\uparrow$} &
\makecell{\textbf{Vendi}\\\textbf{(Inception)} $\uparrow$} &
\makecell{\textbf{FID}$\downarrow$} &
\makecell{\textbf{BRISQUE}$\downarrow$} \\
\midrule
w/o Predictor
& 27.77 $\pm$ 0.16 & 2.78 $\pm$ 0.04 & 5.37 $\pm$ 0.11 & 165.5 $\pm$ 1.2 & 24.9 $\pm$ 1.5 \\
Euler
& 28.19 $\pm$ 0.30 & \textbf{2.87 $\pm$ 0.16} & 5.52 $\pm$ 0.28 & 164.6 $\pm$ 1.6 & 21.9 $\pm$ 1.8 \\
Midpoint
& 28.21 $\pm$ 0.32 & 2.86 $\pm$ 0.12 & 5.43 $\pm$ 0.30 & 164.4 $\pm$ 1.6 & 22.4 $\pm$ 2.2 \\
\rowcolor{lightblue}
\textbf{Ours}
& \textbf{28.26 $\pm$ 0.22} & 2.86 $\pm$ 0.05 & \textbf{5.63 $\pm$ 0.20} & \textbf{163.3 $\pm$ 1.6} & \textbf{21.2 $\pm$ 1.5} \\
\bottomrule
\multicolumn{6}{l}{\footnotesize 6 seeds; mean$\pm$95\% CI. ``w/o Predictor" directly uses the current vector without a correction step.}
\\[-2pt]
\end{tabular}
\end{table}

\subsection{Cross-Backbone Generalization}

To validate the portability of our OSCAR framework, we extended its application beyond FM-SD3.5 to two other prominent foundational models: SDXL-Turbo \citep{sauer2024adversarial} and SD1.5 \citep{Rombach_2022_CVPR}. SDXL-Turbo is a well-known distilled model designed for high-speed, real-time generation, while SD1.5 is a widely-adopted and influential early latent diffusion model.

As detailed in Table \ref{tab:portability_fixed}, we first transferred the hyperparameters tuned on FM-SD3.5 directly to these new models without modification denoted ``OSCAR (default params)". Notably, even this direct transfer demonstrates strong robustness: compared to the original baselines, this configuration shows slight improvements in diversity metrics while, critically, incurring no performance degradation in key image quality metrics.

Furthermore, OSCAR's comprehensive performance is further enhanced with simple, model-specific hyperparameter tuning. This strongly indicates that OSCAR is not an architecture-specific solution but rather a general and robust ``plug-and-play" framework that can be easily ported to diverse diffusion models.

\begin{table}[h]
\centering
\caption{\small
Portability of our OSCAR framework across different foundational models.
OSCAR consistently improves fidelity and alignment over the respective baselines for FM-SD3.5, SDXL-Turbo, and SD1.5.
Further gains are achieved by tuning OSCAR's hyperparameters for each specific model.
Higher is better for CLIP/Vendi; lower is better for FID/BRISQUE.
}
\label{tab:portability_fixed} 
\begin{tabular}{lcccccc}
\toprule
\textbf{Model} & \textbf{Variant} &
\makecell{\textbf{CLIP}$\uparrow$} &
\makecell{\textbf{Vendi}\\\textbf{(Pixel)} $\uparrow$} &
\makecell{\textbf{Vendi}\\\textbf{(Inception)} $\uparrow$} &
\makecell{\textbf{FID}$\downarrow$} &
\makecell{\textbf{BRISQUE}$\downarrow$} \\

\specialrule{\heavyrulewidth}{\aboverulesep}{\belowrulesep}
\multirow{2}{*}{\textbf{FM-SD3.5}}
& Baseline & 28.24 $\pm$ 0.18 & 2.45 $\pm$ 0.13 & 5.37 $\pm$ 0.27 & 164.4 $\pm$ 1.8 & 23.4 $\pm$ 1.4 \\
& \cellcolor{lightblue}\textbf{OSCAR} & \cellcolor{lightblue}\textbf{28.26 $\pm$ 0.22} & \cellcolor{lightblue}\textbf{2.86 $\pm$ 0.05} & \cellcolor{lightblue}\textbf{5.63 $\pm$ 0.22} & \cellcolor{lightblue}\textbf{163.3 $\pm$ 1.6} & \cellcolor{lightblue}\textbf{21.2 $\pm$ 1.5} \\

\specialrule{\heavyrulewidth}{\aboverulesep}{\belowrulesep}
\multirow{3}{*}{\textbf{SDXL-Turbo}}
& Baseline & 30.98 $\pm$ 0.15 & 5.31 $\pm$ 0.25 & 4.24 $\pm$ 0.13 & 150.4 $\pm$ 1.0 & 24.6 $\pm$ 0.3 \\
& \textbf{OSCAR} (default params) & 30.77 $\pm$ 0.24 & 5.41 $\pm$ 0.25 & 4.29 $\pm$ 0.23 & 152.8 $\pm$ 0.6 & 25.1 $\pm$ 1.0 \\
& \cellcolor{lightblue}\textbf{OSCAR} (tuned params) & \cellcolor{lightblue}\textbf{30.94 $\pm$ 0.22} & \cellcolor{lightblue}\textbf{5.48 $\pm$ 0.34} & \cellcolor{lightblue}\textbf{4.42 $\pm$ 0.17} & \cellcolor{lightblue}\textbf{151.6 $\pm$ 1.3} & \cellcolor{lightblue}\textbf{25.0 $\pm$ 0.9} \\

\specialrule{\heavyrulewidth}{\aboverulesep}{\belowrulesep}
\multirow{3}{*}{\textbf{SD1.5}}
& Baseline & 29.93 $\pm$ 0.35 & 2.37 $\pm$ 0.12 & 7.02 $\pm$ 0.27 & 174.7 $\pm$ 1.9 & 12.1 $\pm$ 3.3 \\
& \textbf{OSCAR} (default params) & 29.88 $\pm$ 0.35 & 2.45 $\pm$ 0.12 & 7.09 $\pm$ 0.16 & 175.1 $\pm$ 1.6 & 12.8 $\pm$ 3.2 \\
& \cellcolor{lightblue}\textbf{OSCAR} (tuned params) & \cellcolor{lightblue}\textbf{29.93 $\pm$ 0.37} & \cellcolor{lightblue}\textbf{2.46 $\pm$ 0.12} & \cellcolor{lightblue}\textbf{7.11 $\pm$ 0.12} & \cellcolor{lightblue}\textbf{174.6 $\pm$ 1.6} & \cellcolor{lightblue}\textbf{12.5 $\pm$ 3.1} \\
\bottomrule

\multicolumn{7}{l}{\footnotesize 6 seeds; mean$\pm$95\% CI. ``default params" uses hyperparameters from FM-SD3.5; ``tuned params" are optimized for each model.}
\\[-2pt]
\end{tabular}
\end{table}

\subsection{Encoder Choice Robustness}

To test the robustness of our framework and its dependency on different feature extractors, we conducted an ablation study as shown in Table \ref{tab:encoder_ablation}. We replaced our default CLIP encoder with two other prevalent encoders, Inception and DINO, while keeping all other components and parameters unchanged.

The results clearly indicate that our framework is highly robust to the choice of encoder. All three variants demonstrate highly comparable performance across all key metrics; our metrics remained almost entirely unperturbed by the change in encoder. These findings prove that the efficacy of our framework does not stem from an over-reliance on a specific feature space. On the contrary, the framework exhibits strong universality, and its core mechanisms operate stably across different feature spaces. We selected CLIP as our default configuration because it showed a slight advantage in diversity metrics while maintaining highly competitive fidelity.

\begin{table}[h]
\centering
\caption{\small
Ablation study on the feature encoder used within our framework.
Our default model shows the best performance compared to other common encoders like Inception and DINO.
}
\label{tab:encoder_ablation}
\begin{tabular}{lccccc}
\toprule
\textbf{Encoder} &
\makecell{\textbf{CLIP}\\\textbf{Score} $\uparrow$} &
\makecell{\textbf{Vendi}\\\textbf{(Pixel)} $\uparrow$} &
\makecell{\textbf{Vendi}\\\textbf{(Inception)} $\uparrow$} &
\makecell{\textbf{FID}$\downarrow$} &
\makecell{\textbf{BRISQUE}$\downarrow$} \\
\midrule
Inception
& 28.32 $\pm$ 0.22 & 2.85 $\pm$ 0.08 & 5.60 $\pm$ 0.22 & 163.2 $\pm$ 1.2 & 21.7 $\pm$ 1.6 \\
DINO
& 28.35 $\pm$ 0.17 & 2.82 $\pm$ 0.05 & 5.57 $\pm$ 0.24 & 164.2 $\pm$ 2.0 & 21.1 $\pm$ 1.8 \\
\rowcolor{lightblue}
\textbf{CLIP (Ours)}
& \textbf{28.26 $\pm$ 0.22} & \textbf{2.86 $\pm$ 0.05} & \textbf{5.63 $\pm$ 0.20} & \textbf{163.3 $\pm$ 1.6} & \textbf{21.2 $\pm$ 1.5} \\
\bottomrule
\multicolumn{6}{l}{\footnotesize 6 seeds; mean$\pm$95\% CI. All variants use the full OSCAR framework.}
\\[-2pt]
\end{tabular}
\end{table}

\subsection{Noise Robustness}
\label{sec:add_noise}

\subsubsection{Robustness to Noise Gate ($t_{gate}$)}
\label{sec:add_noise_schedule}
We analyze the sensitivity of our method to the noise application schedule, controlled by the \textit{Noise Gate} parameter. This parameter defines the time interval $[0.05, t_{gate}]$ during which stochastic noise is active. The results, presented quantitatively in Table~\ref{tab:noisegate_comparison} and visually in Figure~\ref{fig:noise_gate_visual_ablation}, demonstrate our method's remarkable robustness. Across a wide range of end times, from $t_{gate}=0.5$ to $t_{gate}=0.1$, all quality and diversity metrics remain exceptionally stable, without any sign of performance collapse. This indicates that our method is not sensitive to the precise duration of noise injection, making it easy to use without extensive hyperparameter tuning.

\begin{table}[htbp]
\centering
\caption{
    Robustness analysis for the noise injection schedule, controlled by the end time $t_{gate}$ of the noise gate $[0.05, t_{gate}]$. The performance across all metrics remains highly stable, demonstrating that our method is not sensitive to this hyperparameter.
}
\label{tab:noisegate_comparison}
\resizebox{\textwidth}{!}{%
\begin{tabular}{c ccccc}
\toprule
\textbf{Noise Gate} & \makecell{\textbf{1-MS-SSIM \%}  $\uparrow$} & \makecell{\textbf{Vendi Score Pixel} $\uparrow$} & \makecell{\textbf{Vendi Score Inception} $\uparrow$} & \makecell{\textbf{FID} $\downarrow$} & \makecell{\textbf{Brisque} $\downarrow$} \\
\midrule
0.50 & 87.92 $\pm$ 2.11 & 2.32 $\pm$ 0.15 & 5.12 $\pm$ 0.07 & 131.31 $\pm$ 1.31 & 25.50 $\pm$ 1.71 \\
0.40 & 87.92 $\pm$ 1.38 & 2.36 $\pm$ 0.12 & 5.21 $\pm$ 0.17 & 132.31 $\pm$ 2.10 & 23.78 $\pm$ 1.66 \\
0.30 & 88.73 $\pm$ 1.61 & 2.36 $\pm$ 0.17 & 5.21 $\pm$ 0.11 & 132.80 $\pm$ 1.96 & 23.87 $\pm$ 2.22 \\
0.20 & 88.53 $\pm$ 1.44 & 2.37 $\pm$ 0.18 & 5.24 $\pm$ 0.12 & 131.33 $\pm$ 1.15 & 23.71 $\pm$ 1.87 \\
0.10 & 87.97 $\pm$ 1.49 & 2.34 $\pm$ 0.14 & 5.16 $\pm$ 0.07 & 130.93 $\pm$ 1.42 & 23.54 $\pm$ 2.05 \\
\bottomrule
\end{tabular}%
}
\end{table}

\begin{figure}[htbp]
    \centering

    \begin{subfigure}[b]{0.19\linewidth}
        \centering
        \includegraphics[width=\linewidth]{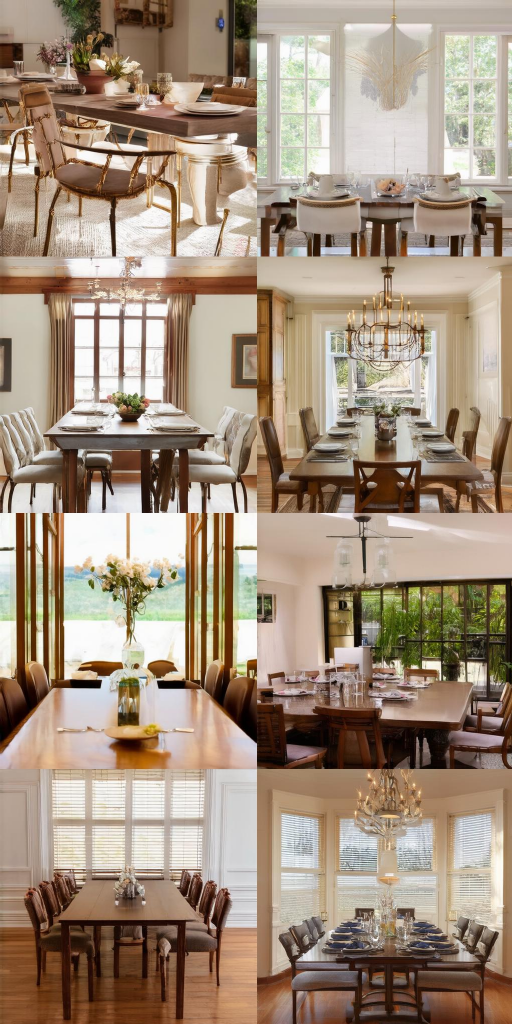}
        \caption{$t_{gate}=0.5$}
        \label{fig:gate_0_5}
    \end{subfigure}
    \hfill
    \begin{subfigure}[b]{0.19\linewidth}
        \centering
        \includegraphics[width=\linewidth]{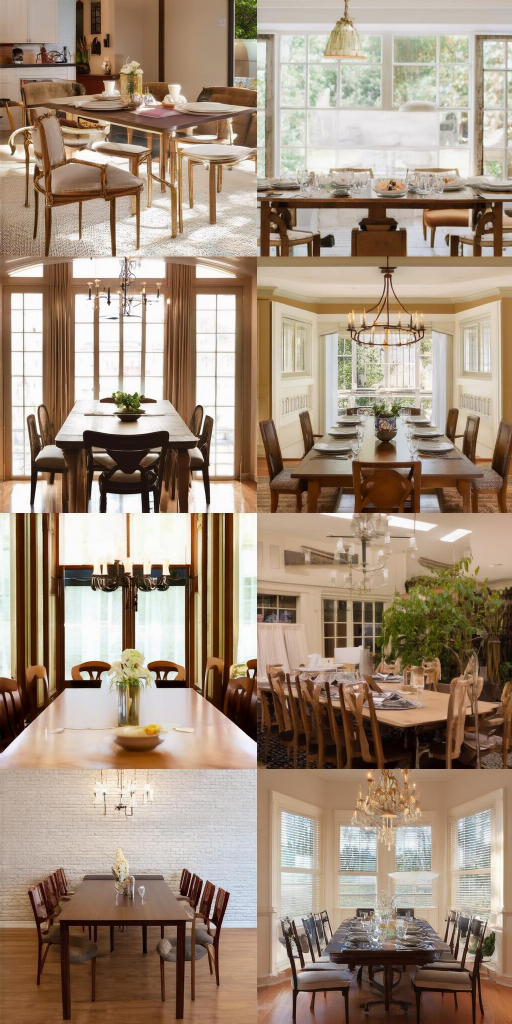}
        \caption{$t_{gate}=0.4$}
        \label{fig:gate_0_6}
    \end{subfigure}
    \hfill
    \begin{subfigure}[b]{0.19\linewidth}
        \centering
        \includegraphics[width=\linewidth]{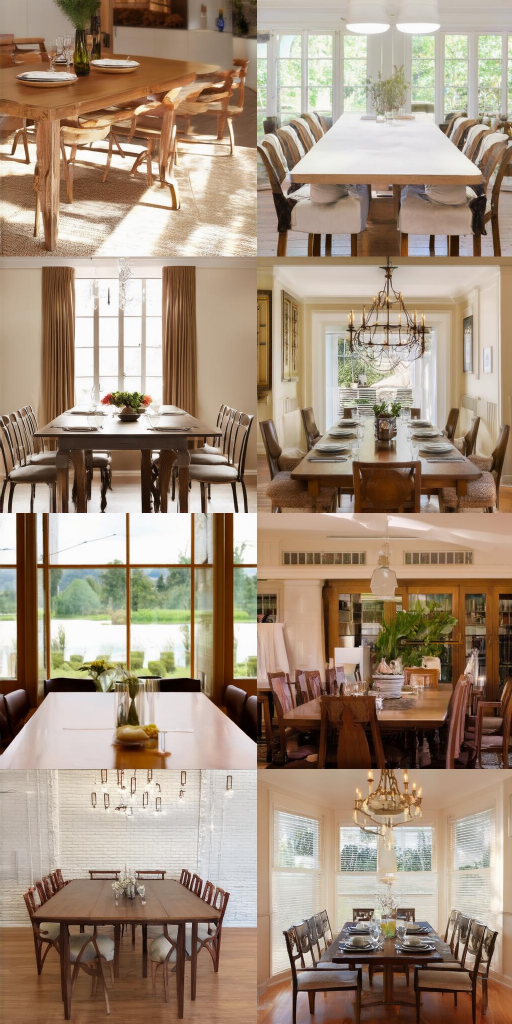}
        \caption{$t_{gate}=0.3$}
        \label{fig:gate_0_7}
    \end{subfigure}
    \hfill
    \begin{subfigure}[b]{0.19\linewidth}
        \centering
        \includegraphics[width=\linewidth]{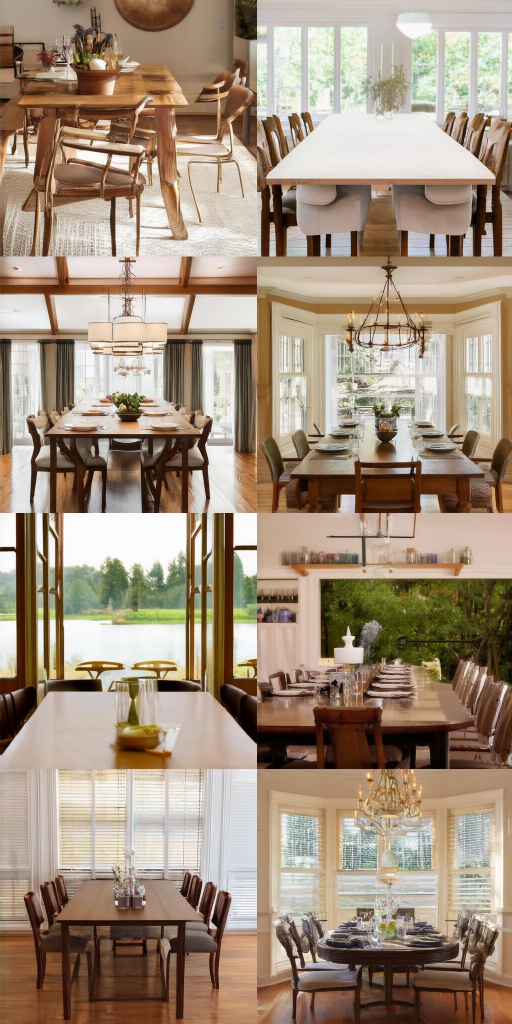}
        \caption{$t_{gate}=0.2$}
        \label{fig:gate_0_8}
    \end{subfigure}
    \hfill
    \begin{subfigure}[b]{0.19\linewidth}
        \centering
        \includegraphics[width=\linewidth]{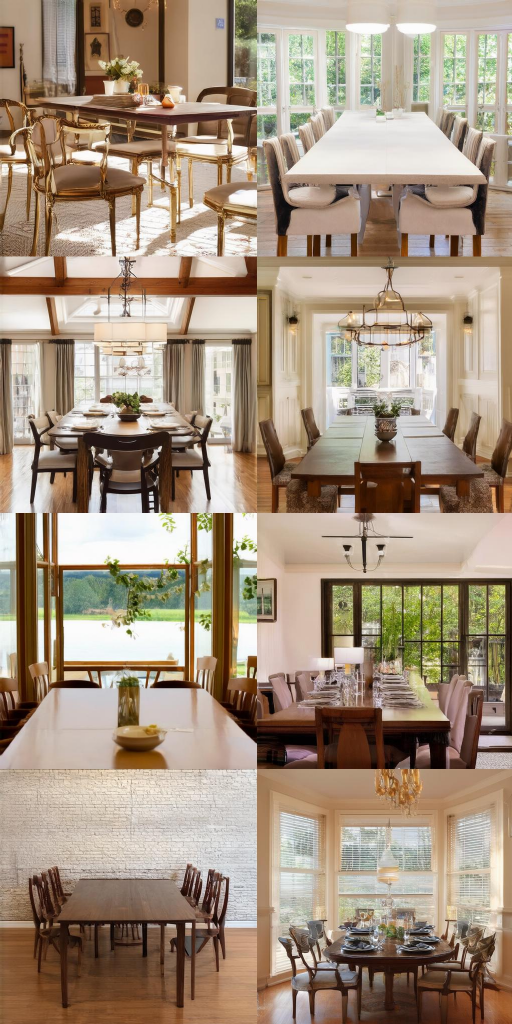}
        \caption{$t_{gate}=0.1$}
        \label{fig:gate_0_9}
    \end{subfigure}
    \caption{Visual comparison of generated images across different noise gates $[0.05, t_{gate}]$.}
    \label{fig:noise_gate_visual_ablation}
\end{figure}

\subsubsection{Robustness to Noise Scale ($s$)}
\label{sec:add_noise_scale}
Similarly, we evaluate the method's robustness to the magnitude of the injected noise, controlled by the \textit{Noise Scale} parameter. As shown quantitatively in Table~\ref{tab:noisescale_comparison} and visually in Figure~\ref{fig:noise_scale_visual_ablation}, our method's performance remains highly consistent across a wide range of noise scales, from $s=0.25$ to $s=5.0$. While an excessively large scale ($s=10.0$) begins to degrade perceptual quality, the overall stability of metrics like FID and Vendi Score Inception across nearly two orders of magnitude highlights the robustness of our approach. The results suggest a broad optimal operating region around $s=2.0$, where multiple metrics are jointly optimized.

\begin{table}[htbp]
\centering
\caption{
    Robustness analysis for the noise scale ($s$). Performance is highly stable for $s \in [0.25, 5.0]$, with a clear optimal region around $s=2.0$ where multiple quality and diversity metrics are maximized. An excessive scale of $s=10.0$ leads to a degradation in perceptual quality.
}
\label{tab:noisescale_comparison}
\resizebox{\textwidth}{!}{%
\begin{tabular}{l ccccc}
\toprule
\textbf{Noise Scale} & \makecell{\textbf{CLIP Score} $\uparrow$} & \makecell{\textbf{Vendi Score Pixel} $\uparrow$} & \makecell{\textbf{Vendi Score Inception}  $\uparrow$} & \makecell{\textbf{FID} $\downarrow$} & \makecell{\textbf{Brisque} $\downarrow$} \\
\midrule
0.25 & 14.69 $\pm$ 2.63 & 2.05 $\pm$ 0.15 & 3.70 $\pm$ 0.17 & 134.54 $\pm$ 2.77 & 23.59 $\pm$ 4.64 \\
0.5 & 14.11 $\pm$ 1.43 & 2.08 $\pm$ 0.19 & 3.74 $\pm$ 0.16 & 135.24 $\pm$ 1.45 & 24.82 $\pm$ 4.88 \\
1.0 & 15.93 $\pm$ 0.91 & 2.07 $\pm$ 0.16 & 3.81 $\pm$ 0.13 & 134.14 $\pm$ 1.77 & 22.64 $\pm$ 4.54 \\
2.0 & 16.11 $\pm$ 0.74 & 2.12 $\pm$ 0.19 & 3.84 $\pm$ 0.14 & 134.38 $\pm$ 2.83 & 22.18 $\pm$ 3.28 \\
5.0 & 15.39 $\pm$ 0.83 & 2.06 $\pm$ 0.18 & 3.81 $\pm$ 0.06 & 133.71 $\pm$ 2.70 & 24.06 $\pm$ 4.95 \\
10.0 & 12.29 $\pm$ 0.60 & 2.01 $\pm$ 0.16 & 3.58 $\pm$ 0.22 & 134.55 $\pm$ 1.75 & 28.51 $\pm$ 5.55 \\
\bottomrule
\end{tabular}%
}
\end{table}

\begin{figure}[htbp]
    \centering

    \begin{subfigure}[b]{0.155\linewidth}
        \centering
        \includegraphics[width=\linewidth]{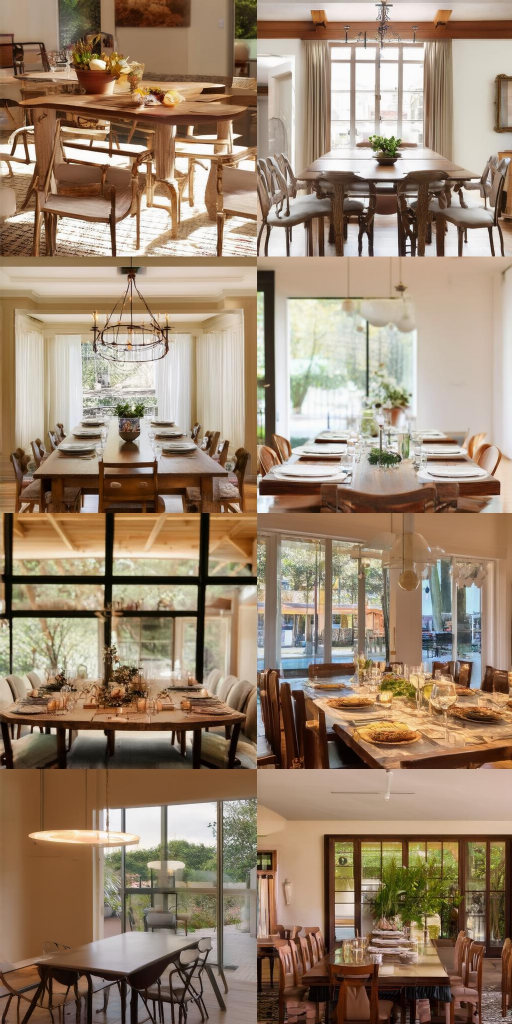}
        \caption{$s=0.25$}
        \label{fig:scale_0_25}
    \end{subfigure}
    \hfill
    \begin{subfigure}[b]{0.155\linewidth}
        \centering
        \includegraphics[width=\linewidth]{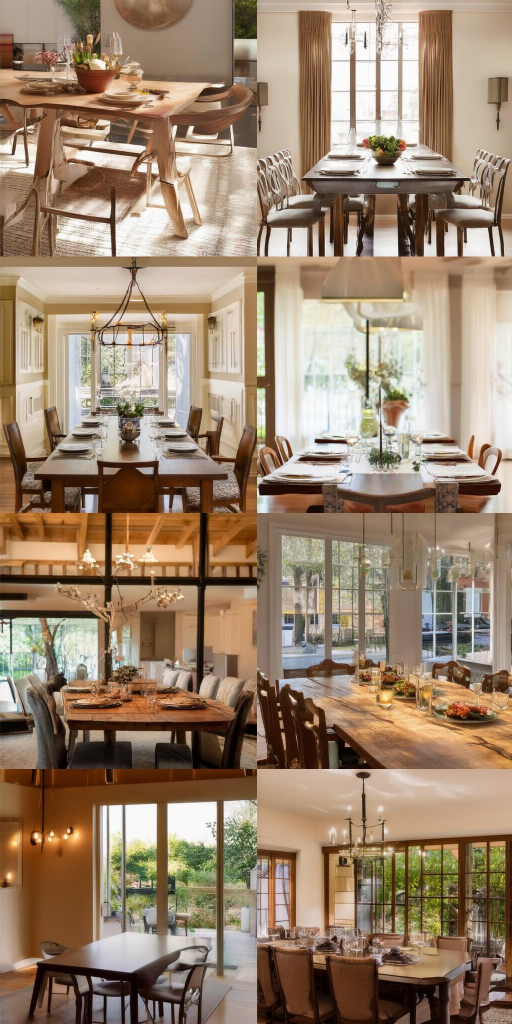}
        \caption{$s=0.5$}
        \label{fig:scale_0_5}
    \end{subfigure}
    \hfill
    \begin{subfigure}[b]{0.155\linewidth}
        \centering
        \includegraphics[width=\linewidth]{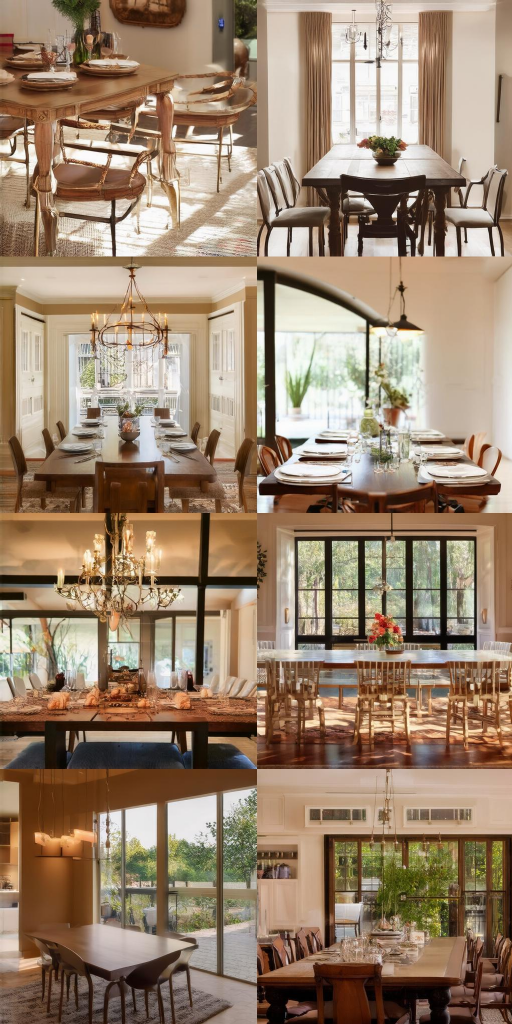}
        \caption{$s=1.0$}
        \label{fig:scale_1_0}
    \end{subfigure}
    \hfill
    \begin{subfigure}[b]{0.155\linewidth}
        \centering
        \includegraphics[width=\linewidth]{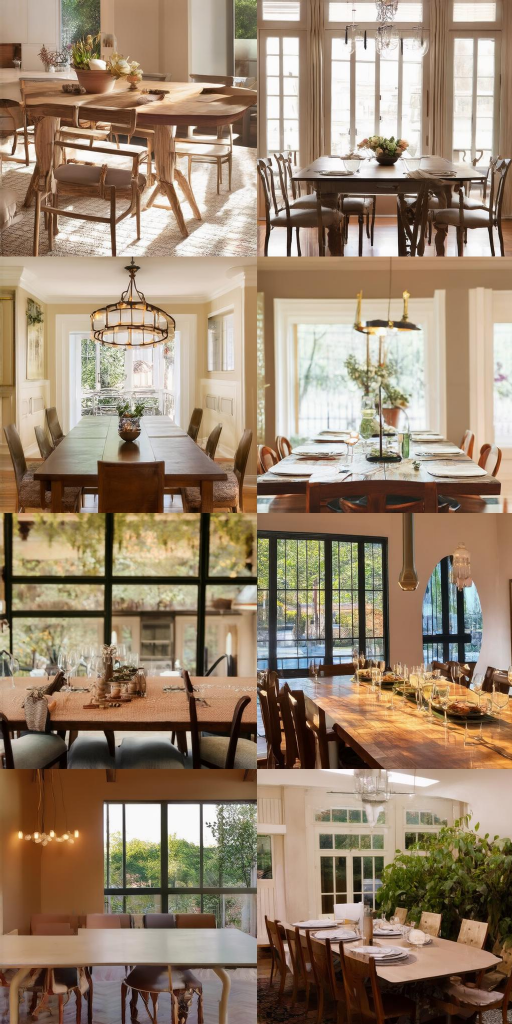}
        \caption{$s=2.0$}
        \label{fig:scale_2_0}
    \end{subfigure}
    \hfill
    \begin{subfigure}[b]{0.155\linewidth}
        \centering
        \includegraphics[width=\linewidth]{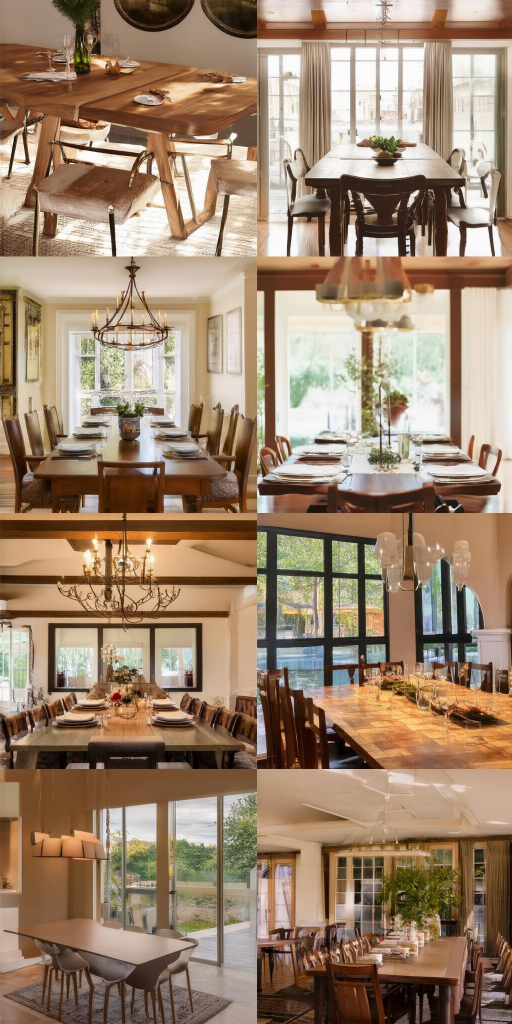}
        \caption{$s=5.0$}
        \label{fig:scale_5_0}
    \end{subfigure}
    \hfill
    \begin{subfigure}[b]{0.155\linewidth}
        \centering
        \includegraphics[width=\linewidth]{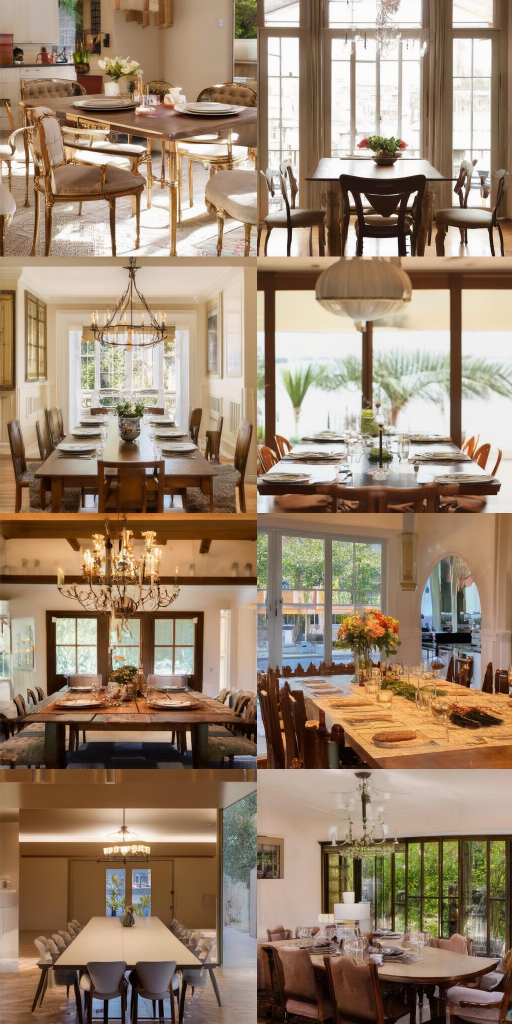}
        \caption{$s=10.0$}
        \label{fig:scale_10_0}
    \end{subfigure}

    \caption{Visual comparison of generated images across different noise scales ($s$).}
    \label{fig:noise_scale_visual_ablation}
\end{figure}

\subsubsection{Robustness to Noise Schedule (cos2 vs t1mt)}
\label{sec:appendix_noise_schedules}

We compare two time schedules for the exploration term used in our sampler:
(i) a cosine-squared schedule, $\gamma(t)=\cos^2\!\big(\pi\,s(t)\big)$, and
(ii) a parabolic schedule, $\gamma(t)=s(t)\,(1-s(t))$, where $s(t)\in[0,1]$ is the normalized time.
Both profiles are bell-shaped; \texttt{cos2} ramps up smoothly from zero, peaks at mid-trajectory, and then decays smoothly, while \texttt{t1mt} follows an inverted-parabola rise–fall pattern.
Unless otherwise stated, all settings are kept identical across schedules.

\paragraph{Quantitative results.}
Table~\ref{tab:noise_sched_ablation} summarizes an ablation at a fixed guidance of $\mathrm{CFG}=3.0$. We observe a small trade-off: \texttt{cos2} yields slightly higher CLIP Score and Vendi Score Pixel, which means more low-level variation, whereas \texttt{t1mt} marginally improves Vendi Score Inception and achieves lower FID/Brisque. Both schedules are viable; \texttt{cos2} mildly favors exploration, while \texttt{t1mt} mildly favors fidelity.

\begin{table}[h]
\centering
\caption{ \small
    Ablation study on our fidelity safeguards at a fixed guidance of $CFG=3.0$. The results confirm that our method is robust to different $\beta_t$ schedules.
}
\label{tab:noise_sched_ablation}
    \begin{tabular}{l ccccc}
    \toprule
    \textbf{$\beta(t)$} & \makecell{\textbf{CLIP Score} $\uparrow$} & \makecell{\textbf{Vendi Score Pixel} $\uparrow$} & \makecell{\textbf{Vendi Score Inception} $\uparrow$} & \makecell{\textbf{FID}  $\downarrow$} & \makecell{\textbf{Brisque} $\downarrow$} \\
    \midrule
    cos2 & 26.61 $\pm$ 0.07 & 2.66 $\pm$ 0.21 & 4.81 $\pm$ 0.09 & 126.52 $\pm$ 0.60 & 20.26 $\pm$ 2.07 \\
    t1mt & 25.26 $\pm$ 0.30 & 2.59 $\pm$ 0.02 & 5.05 $\pm$ 0.01 & 125.45 $\pm$ 2.06 & 18.53 $\pm$ 6.61 \\
    \bottomrule
    \end{tabular}%
\end{table}

\paragraph{Qualitative results.}
Figure~\ref{fig:noise_sched_vis} provides side-by-side visualizations at the same guidance and NFE.
Consistent with the quantitative trends, \texttt{cos2} tends to distribute samples more broadly (more visible pixel-level variation), while \texttt{t1mt} produces slightly cleaner images with comparable semantic coverage.

\begin{figure}[t]
\centering
\begin{subfigure}{0.48\textwidth}
  \centering
  \includegraphics[width=\linewidth]{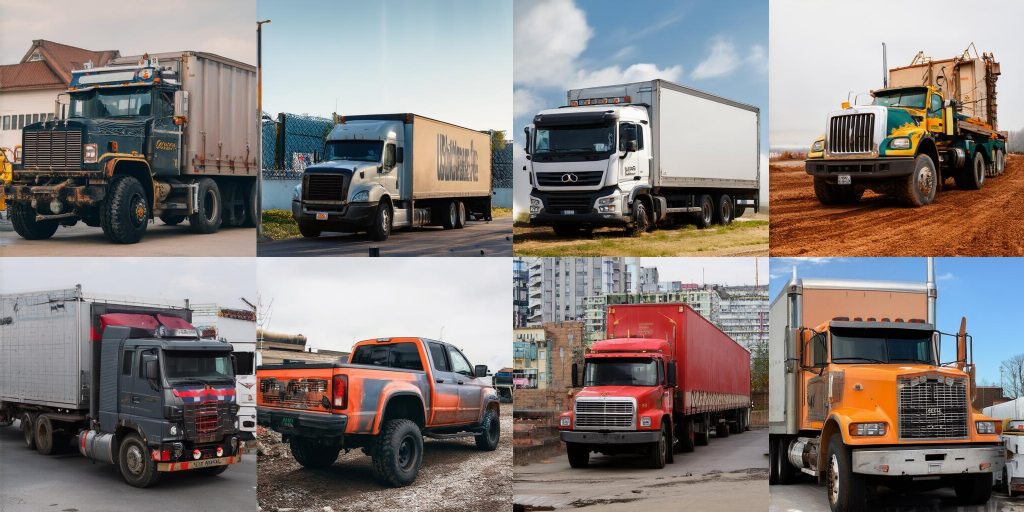}
  \caption{\texttt{cos2}}
\end{subfigure}
\hfill
\begin{subfigure}{0.48\textwidth}
  \centering
  \includegraphics[width=\linewidth]{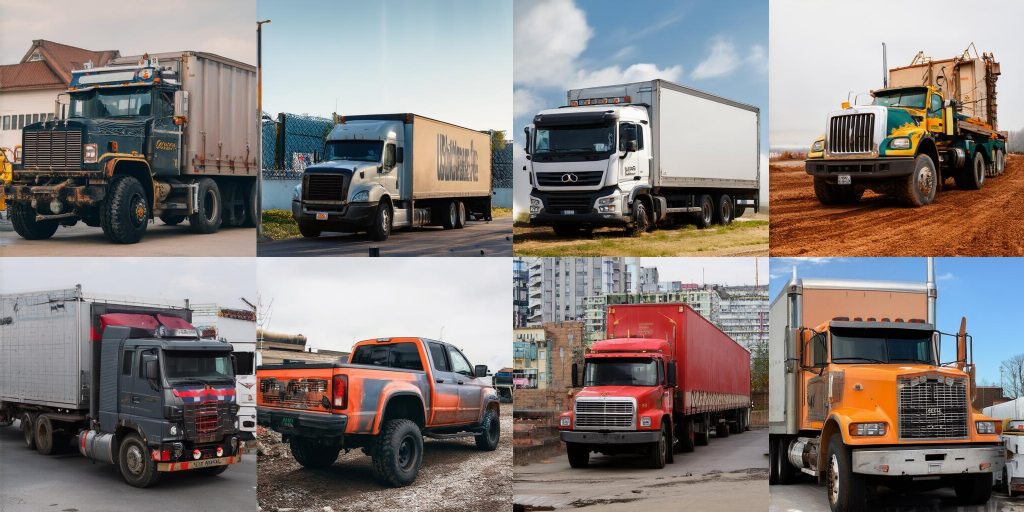}
  \caption{\texttt{t1mt}}
\end{subfigure}
\caption{Qualitative comparison of the two time schedules at the same guidance and NFE.}
\label{fig:noise_sched_vis}
\end{figure}

\section{Limitations}

Although OSCAR demonstrates consistent effectiveness across diverse experimental settings, it still has several unavoidable limitations. First, our method is designed for set-level generation and relies on a jointly active group of trajectories. Therefore, it is not suitable for the strict one-image-per-prompt setting, where no within-prompt sample set is available for computing diversity guidance. Second, OSCAR introduces additional inference-time overhead due to endpoint feature extraction and the set-level diversity gradient computation. While this cost is manageable in moderate-batch scenarios, it is still higher than the original flow-matching sampler. Reducing this overhead and extending the method to asynchronous or memory-based generation settings are promising directions for future work.

\section{More Visual Results}
\FloatBarrier
\label{sec:appendix_more_visuals}

This section provides additional qualitative results to supplement our main findings. We first present extended comparisons for the 512×512 class-conditional generation task on COCO concepts, as shown in \Cref{fig:main_visual_comparison_grid,fig:main_visual_comparison_grid2}. We further showcase visual results from our DIM and CIM evaluations on both coarse and dense prompts, as illustrated in \Cref{fig:coarse_prompt_comparison,fig:dense_prompt_comparison}. These comparisons visually demonstrate the effectiveness of our method in enhancing sample diversity compared to baselines.

\begin{figure*}[htbp]
    \centering
    \caption{
    Comprehensive visual ablation of our fidelity safeguards. Each column corresponds to one variant: (a) w/o OP, (b) w/o RR, (c) w/o OP \& RR, and (d) full OSCAR. Each row shows results for a different prompt, from top to bottom: ``A photo of a bowl'', ``A photo of a truck''.
    }
    \label{fig:main_visual_comparison_grid}

    \begin{subfigure}[t]{0.24\linewidth}
        \centering
        \caption{w/o OP}
        \includegraphics[width=\linewidth]{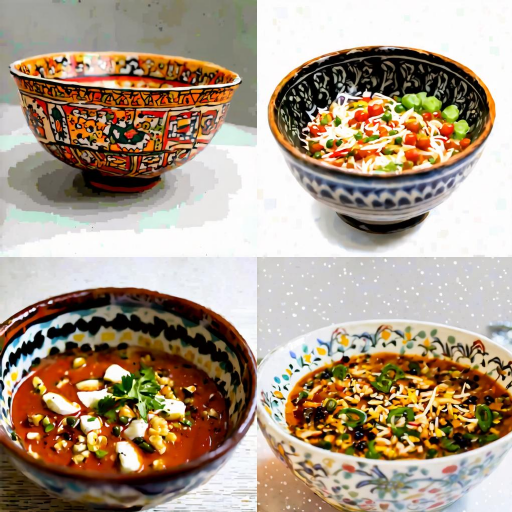}
        \vspace{2pt} 
        \includegraphics[width=\linewidth]{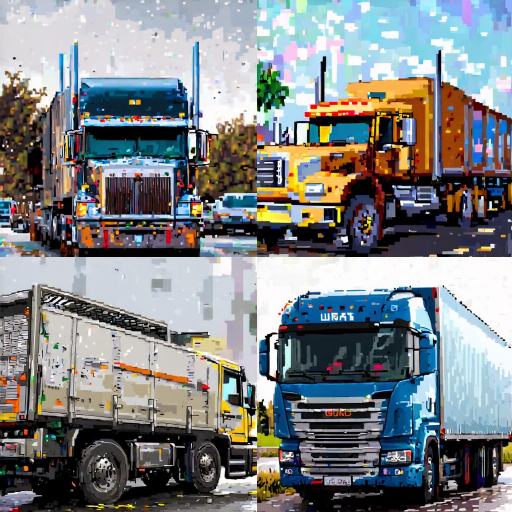}
        \label{fig:grid_op}
    \end{subfigure}
    \hfill
    \begin{subfigure}[t]{0.24\linewidth}
        \centering
        \caption{w/o RR}
        \includegraphics[width=\linewidth]{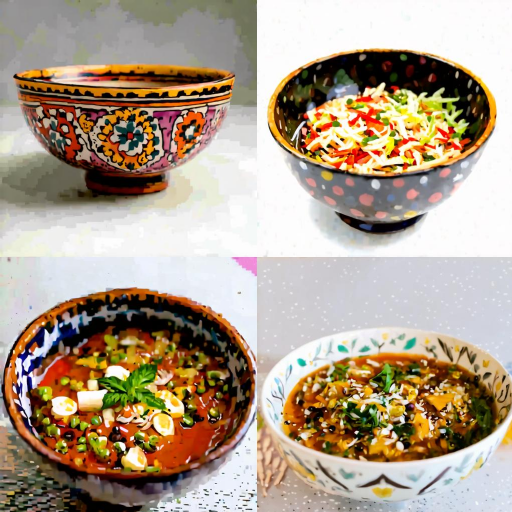}
        \vspace{2pt} 
        \includegraphics[width=\linewidth]{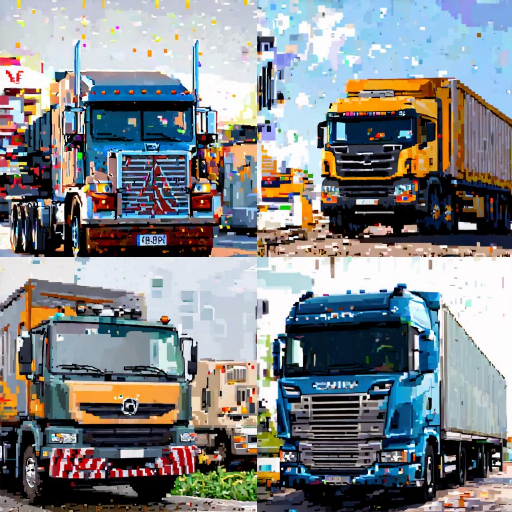}
        \label{fig:grid_RR}
    \end{subfigure}
    \hfill
    \begin{subfigure}[t]{0.24\linewidth}
        \centering
        \caption{w/o OP \& RR}
        \includegraphics[width=\linewidth]{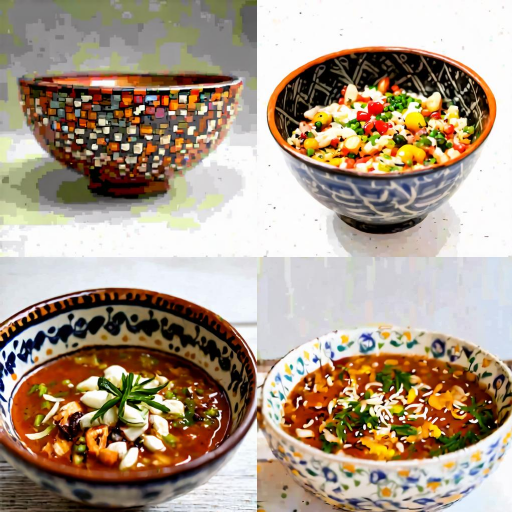}
        \vspace{2pt} 
        \includegraphics[width=\linewidth]{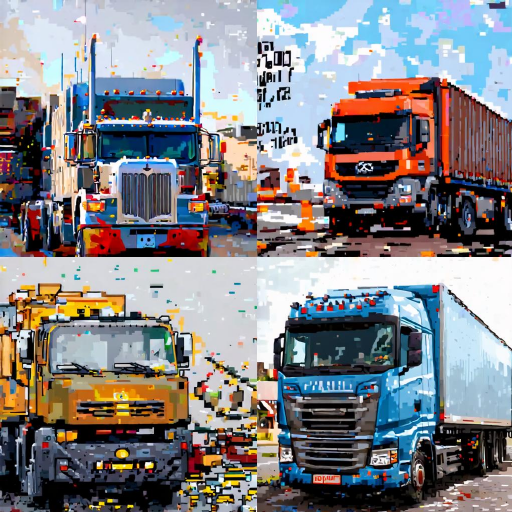}
        \label{fig:grid_op_rr}
    \end{subfigure}
    \hfill
    \begin{subfigure}[t]{0.24\linewidth}
        \centering
        \caption{Our Method}
        \includegraphics[width=\linewidth]{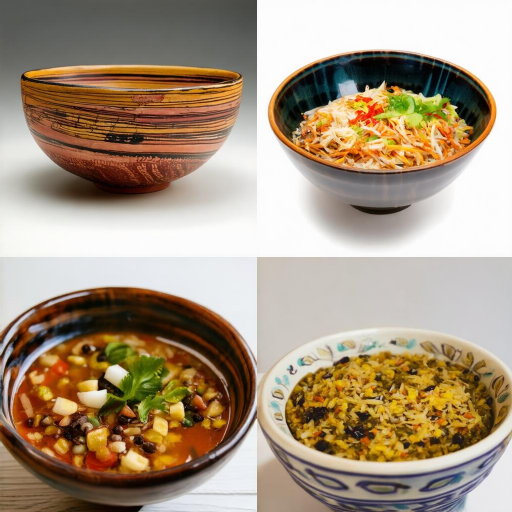}
        \vspace{2pt} 
        \includegraphics[width=\linewidth]{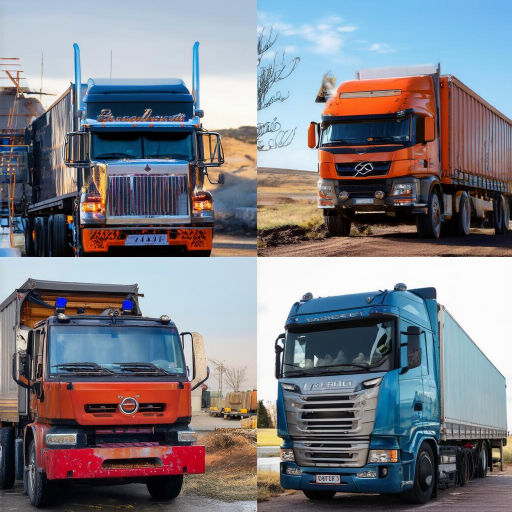}
        \label{fig:grid_ourmethod}
    \end{subfigure}
    \label{fig:visual_comparison_truck_bus_bicycle}
\end{figure*}

\begin{figure*}[H]
    \centering

    \label{fig:main_visual_comparison_grid}

    \begin{subfigure}[t]{0.24\linewidth}
        \centering
        \caption{w/o OP}
        \includegraphics[width=\linewidth]{img/rebuttal/ablation/bowl/wo_op.pdf}
        \vspace{2pt} 
        \includegraphics[width=\linewidth]{img/rebuttal/ablation/truck/wo_OP.pdf}
        \label{fig:grid_op}
    \end{subfigure}
    \hfill
    \begin{subfigure}[t]{0.24\linewidth}
        \centering
        \caption{w/o RR}
        \includegraphics[width=\linewidth]{img/rebuttal/ablation/bowl/wo_RR.pdf}
        \vspace{2pt} 
        \includegraphics[width=\linewidth]{img/rebuttal/ablation/truck/wo_RR.pdf}
        \label{fig:grid_RR}
    \end{subfigure}
    \hfill
    \begin{subfigure}[t]{0.24\linewidth}
        \centering
        \caption{w/o OP \& RR}
        \includegraphics[width=\linewidth]{img/rebuttal/ablation/bowl/wo_op_RR.pdf}
        \vspace{2pt} 
        \includegraphics[width=\linewidth]{img/rebuttal/ablation/truck/wo_OP_RR.pdf}
        \label{fig:grid_op_rr}
    \end{subfigure}
    \hfill
    \begin{subfigure}[t]{0.24\linewidth}
        \centering
        \caption{Our Method}
        \includegraphics[width=\linewidth]{img/rebuttal/ablation/bowl/ourmethod.pdf}
        \vspace{2pt} 
        \includegraphics[width=\linewidth]{img/rebuttal/ablation/truck/ourmethod.pdf}
        \label{fig:grid_ourmethod}
    \end{subfigure}
    \label{fig:visual_comparison_truck_bus_bicycle}
        \caption{
    Comprehensive visual ablation of our fidelity safeguards. Each column corresponds to one variant: (a) w/o OP, (b) w/o RR, (c) w/o OP \& RR, and (d) full OSCAR. Each row shows results for a different prompt, from top to bottom: ``A photo of a bowl'', ``A photo of a truck''.
    }
\end{figure*}

\begin{figure*}[htbp]
    \centering

    \label{fig:main_visual_comparison_grid}

    \begin{subfigure}[t]{0.22\linewidth}
        \centering
        \caption{CADS}
        \includegraphics[width=\linewidth]{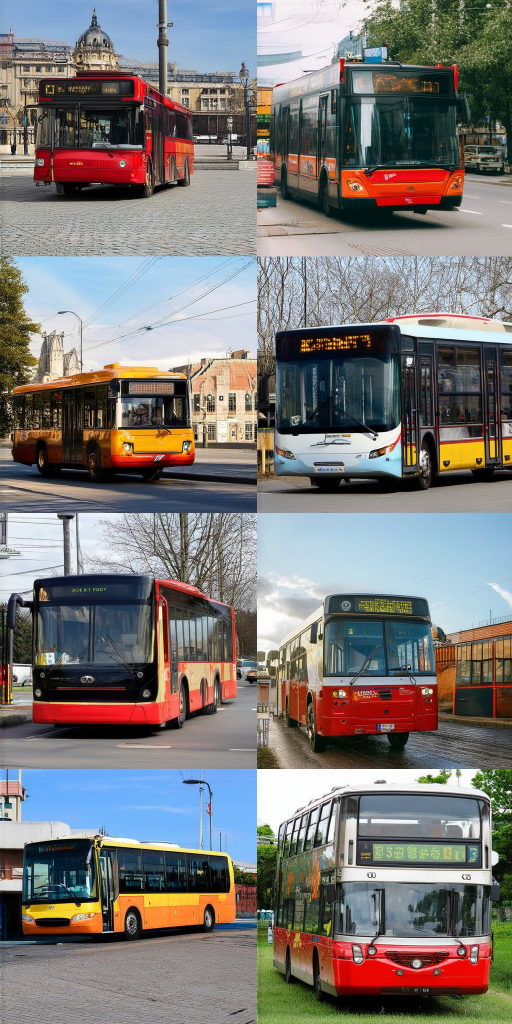}
        \vspace{2pt} 
        \includegraphics[width=\linewidth]{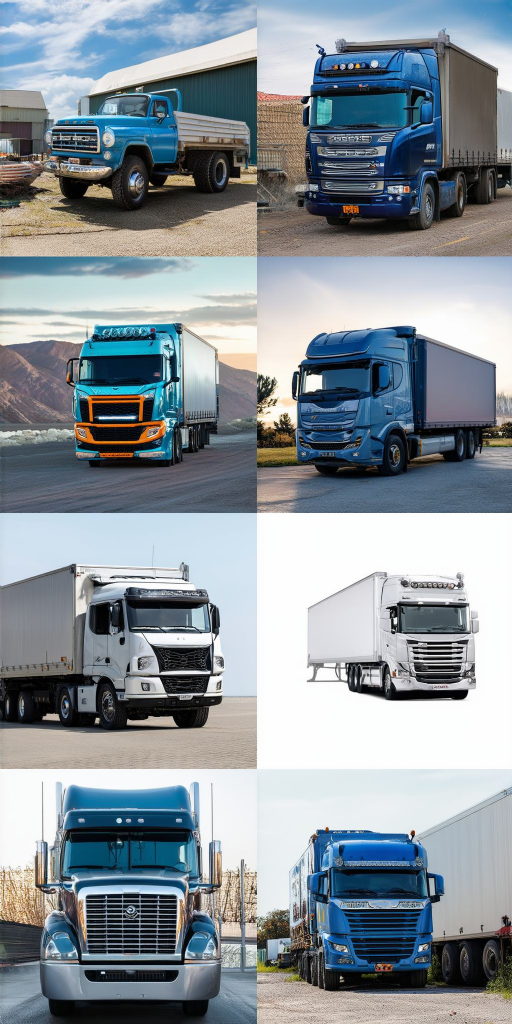}
        \vspace{2pt} 
        \includegraphics[width=\linewidth]{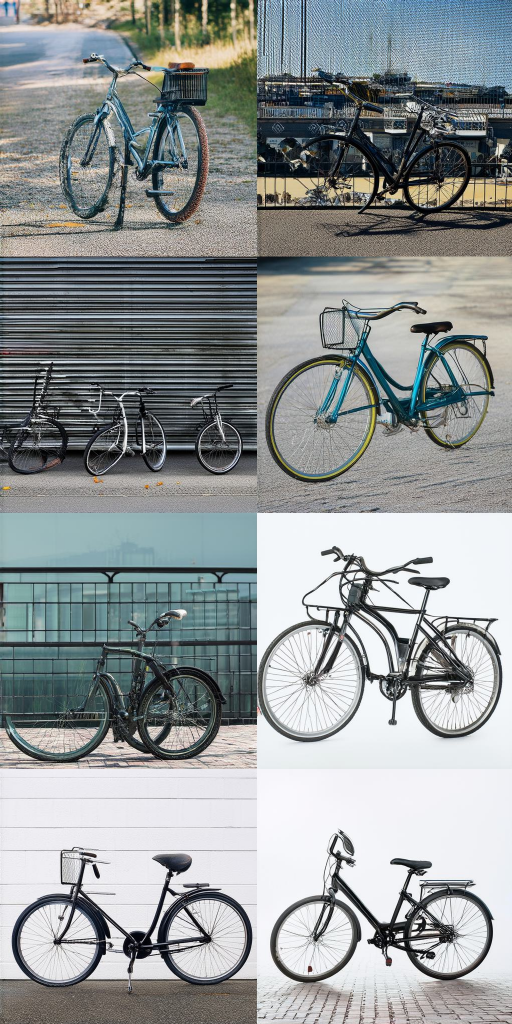}
        \label{fig:grid_cads}
    \end{subfigure}
    \hfill
    \begin{subfigure}[t]{0.22\linewidth}
        \centering
        \caption{DPP}
        \includegraphics[width=\linewidth]{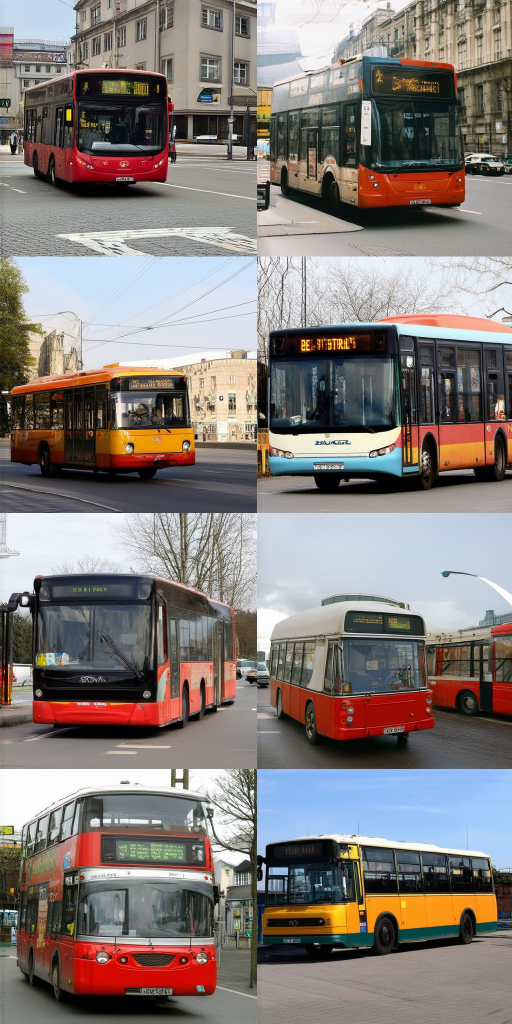}
        \vspace{2pt} 
        \includegraphics[width=\linewidth]{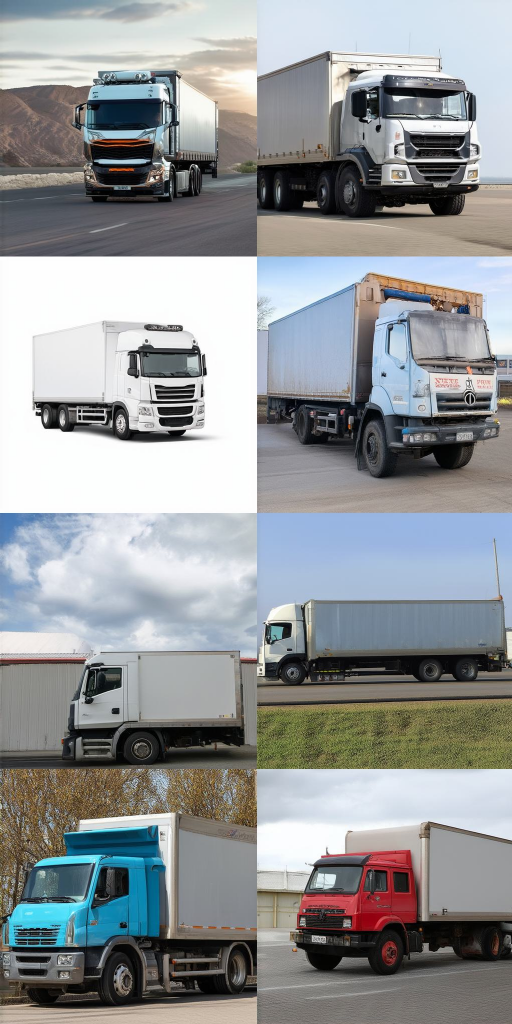}
        \vspace{2pt} 
        \includegraphics[width=\linewidth]{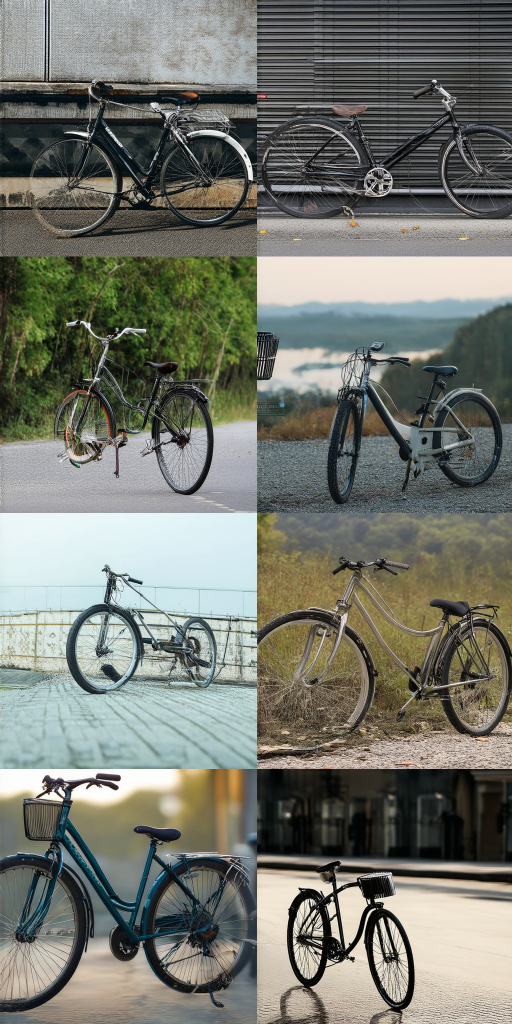}
        \label{fig:grid_dpp}
    \end{subfigure}
    \hfill
    \begin{subfigure}[t]{0.22\linewidth}
        \centering
        \caption{PG}
        \includegraphics[width=\linewidth]{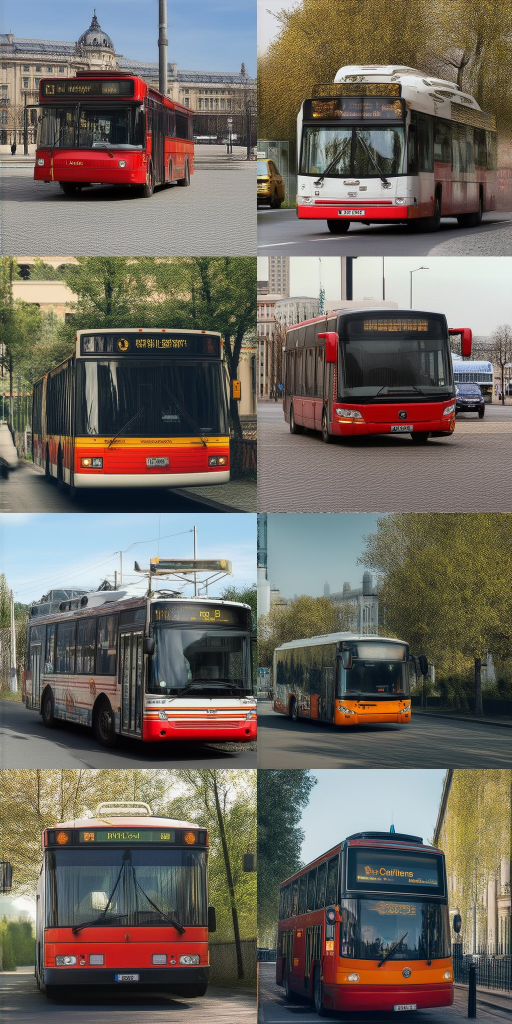}
        \vspace{2pt} 
        \includegraphics[width=\linewidth]{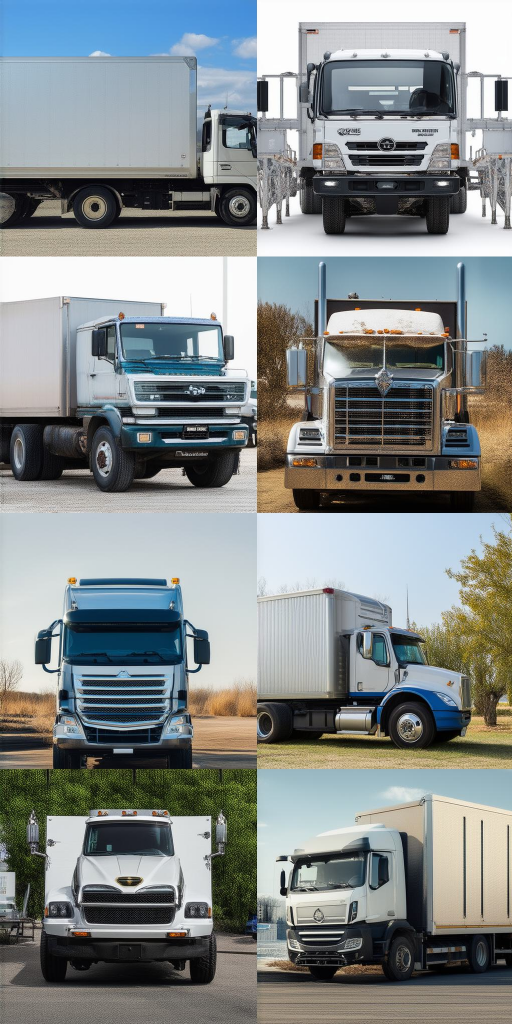}
        \vspace{2pt} 
        \includegraphics[width=\linewidth]{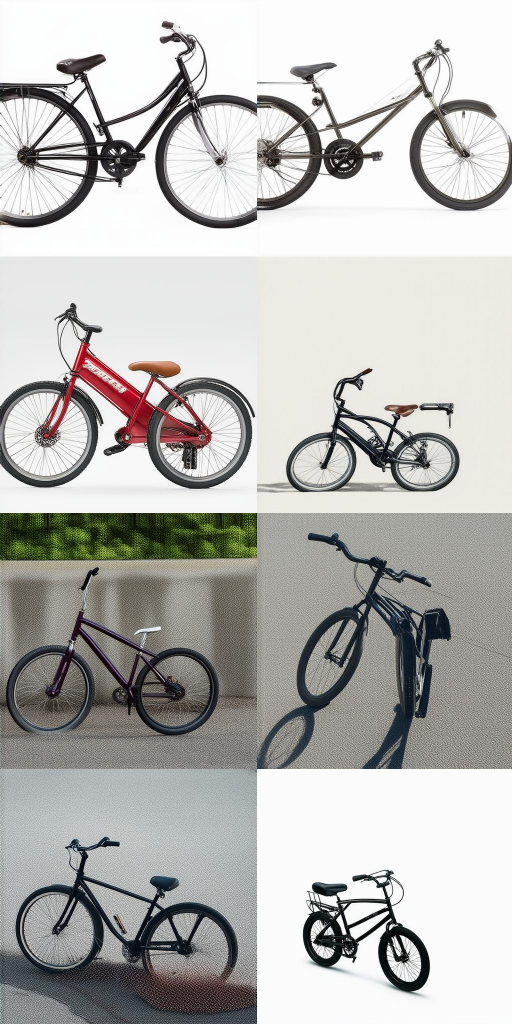}
        \label{fig:grid_pg}
    \end{subfigure}
    \hfill
    \begin{subfigure}[t]{0.22\linewidth}
        \centering
        \caption{Our Method}
        \includegraphics[width=\linewidth]{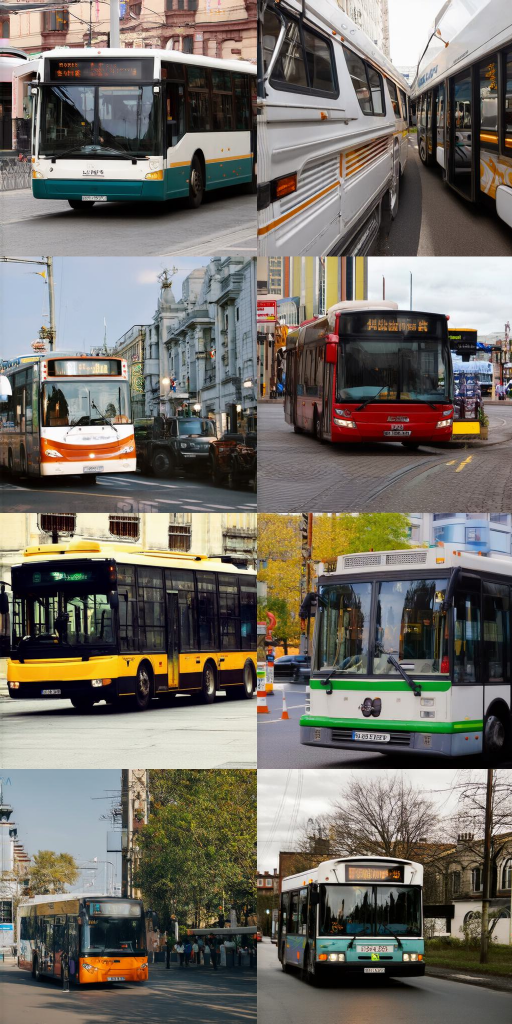}
        \vspace{2pt} 
        \includegraphics[width=\linewidth]{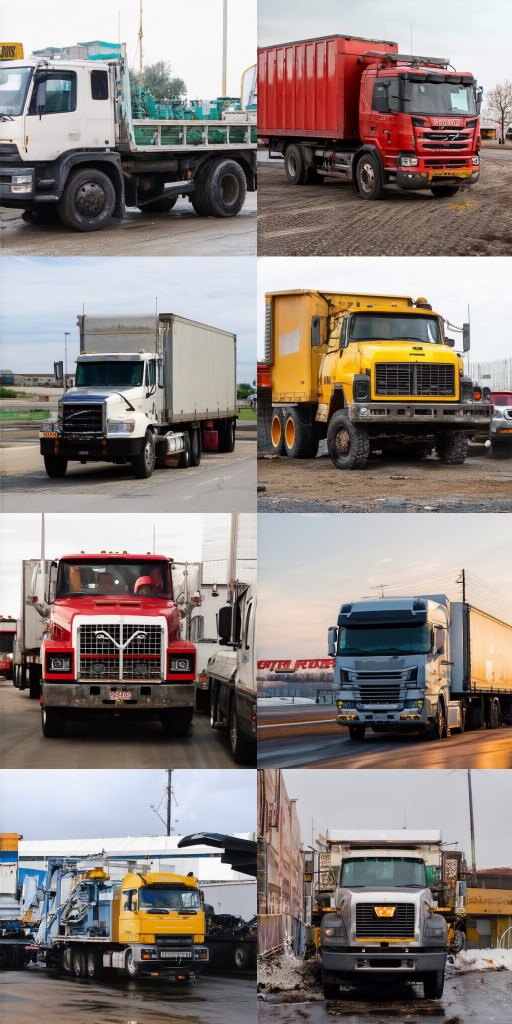}
        \vspace{2pt} 
        \includegraphics[width=\linewidth]{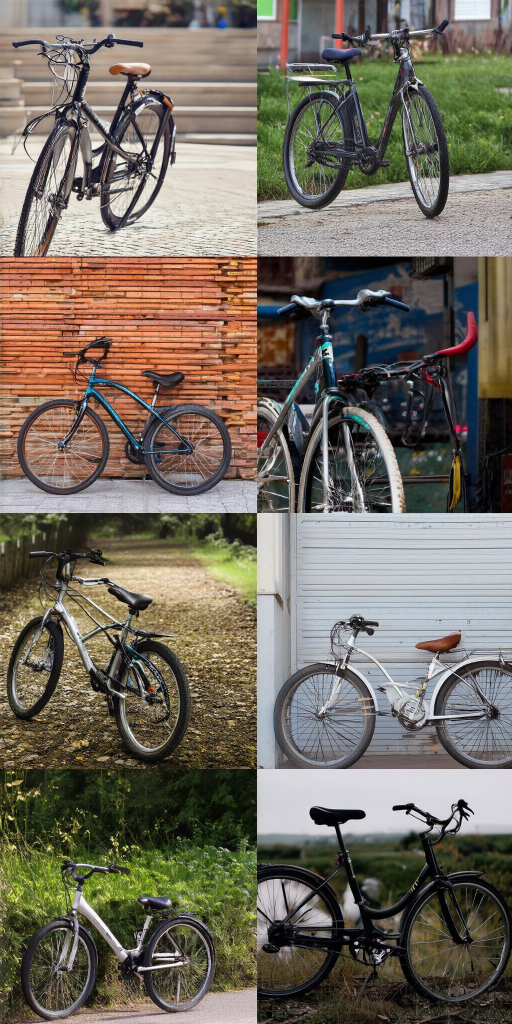}
        \label{fig:grid_ourmethod}
    \end{subfigure}
    \label{fig:visual_comparison_truck_bus_bicycle}
        \caption{
    Comprehensive visual comparison across all methods. Each column corresponds to a single method: (a) CADS, (b) DPP, (c) PG, and (d) Our Method. Each row shows results for a different prompt, in the following order from top to bottom: ``A photo of a bus”, ``A photo of a truck”, and ``A photo of a bicycle”.
    }
\end{figure*}

\begin{figure*}[htbp]
    \centering

    \label{fig:main_visual_comparison_grid2}

    \begin{subfigure}[t]{0.22\linewidth}
        \centering
        \caption{CADS}
        \includegraphics[width=\linewidth]{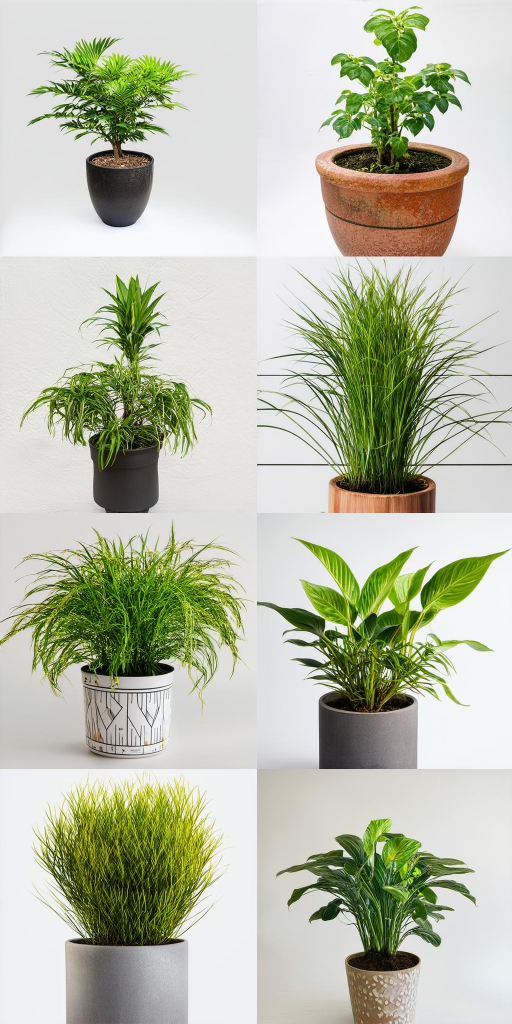}
        \vspace{2pt} 
        \includegraphics[width=\linewidth]{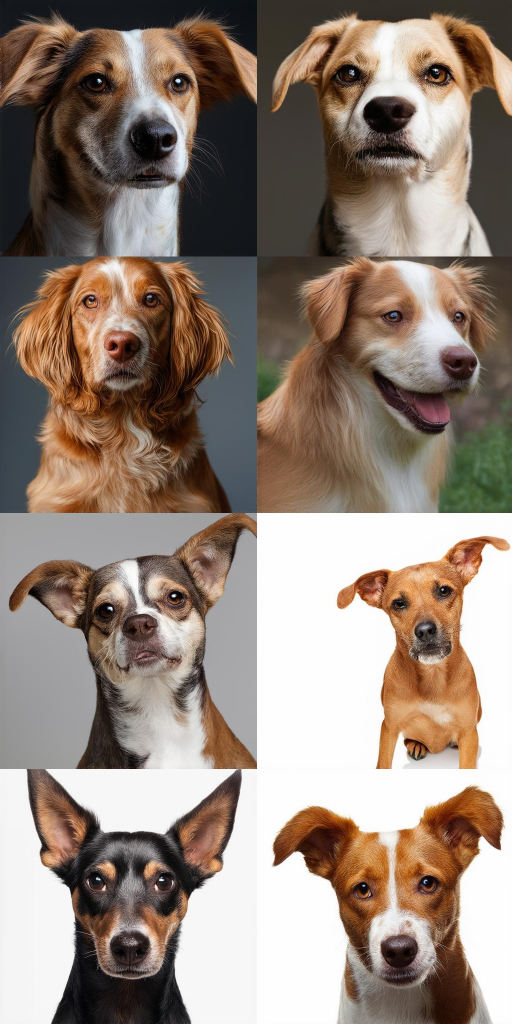}
        \vspace{2pt} 
        \includegraphics[width=\linewidth]{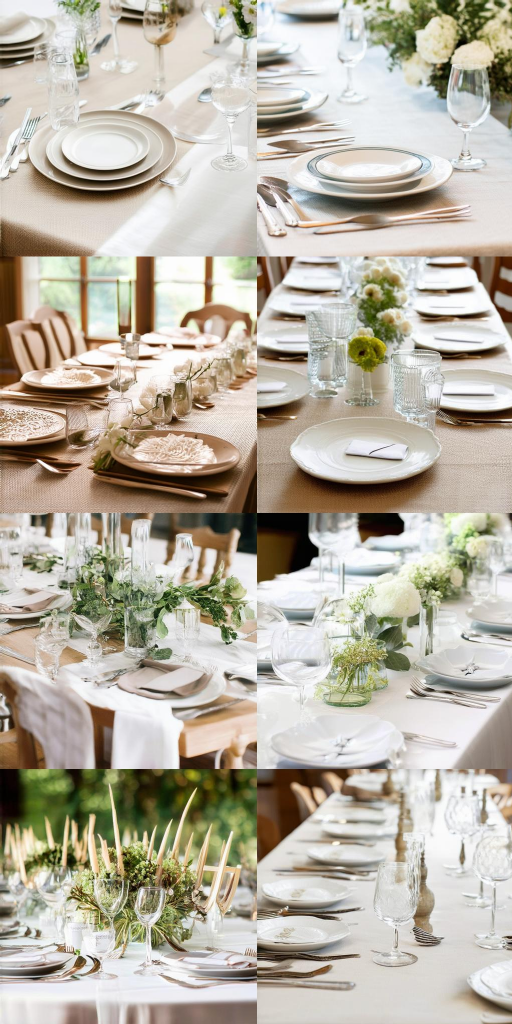}
        \label{fig:grid_cads}
    \end{subfigure}
    \hfill
    \begin{subfigure}[t]{0.22\linewidth}
        \centering
        \caption{DPP}
        \includegraphics[width=\linewidth]{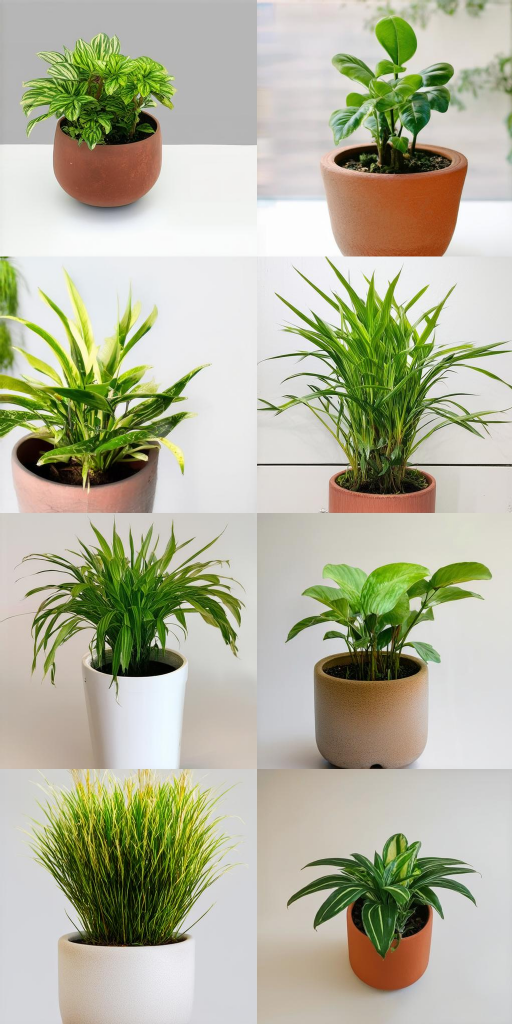}
        \vspace{2pt} 
        \includegraphics[width=\linewidth]{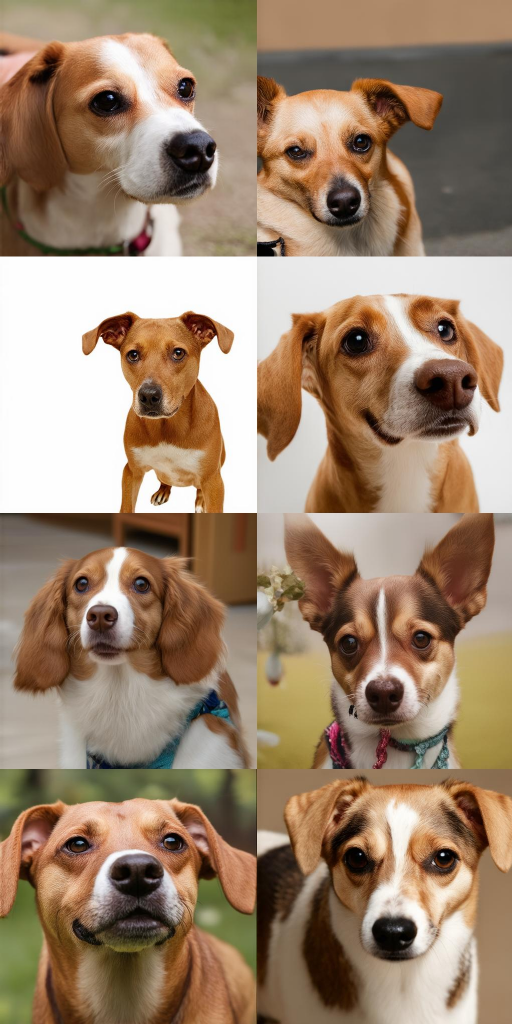}
        \vspace{2pt} 
        \includegraphics[width=\linewidth]{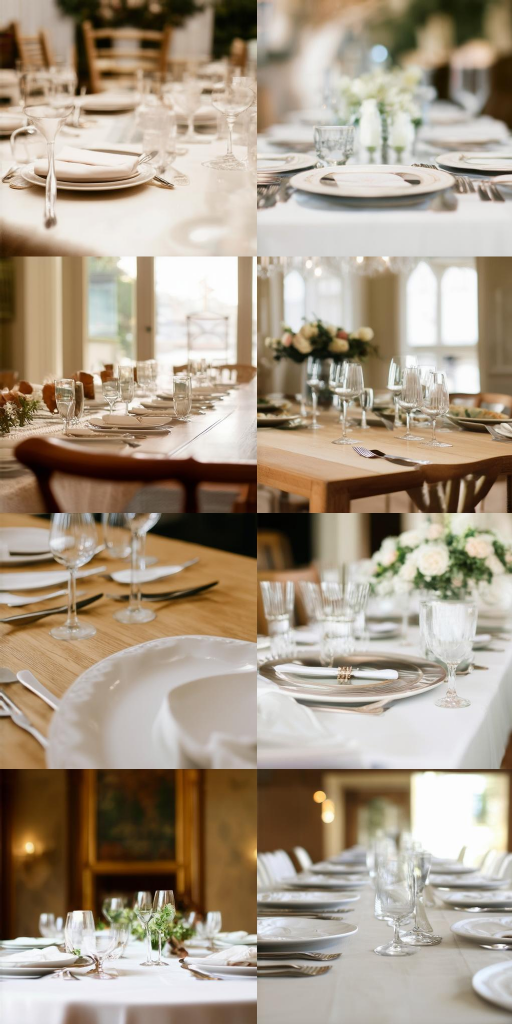}
        \label{fig:grid_dpp}
    \end{subfigure}
    \hfill
    \begin{subfigure}[t]{0.22\linewidth}
        \centering
        \caption{PG}
        \includegraphics[width=\linewidth]{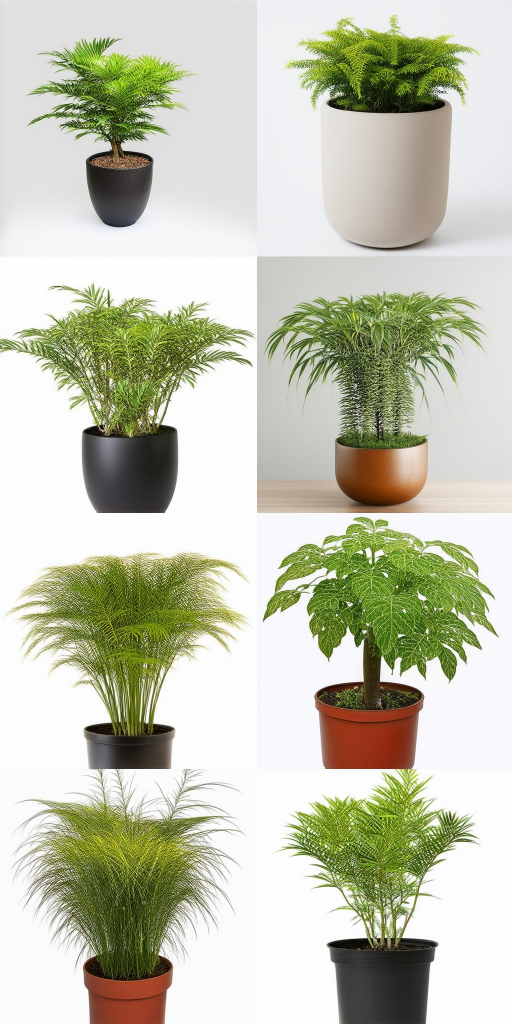}
        \vspace{2pt} 
        \includegraphics[width=\linewidth]{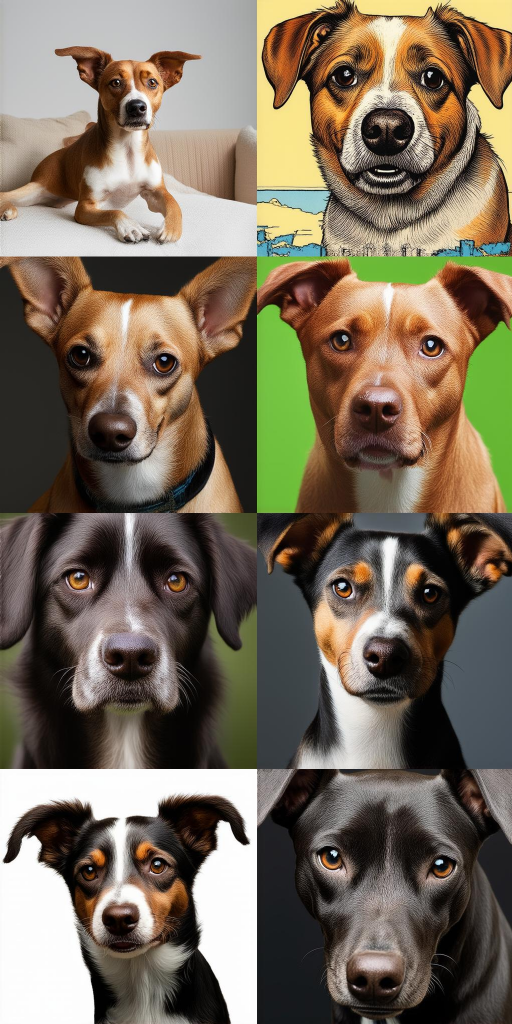}
        \vspace{2pt} 
        \includegraphics[width=\linewidth]{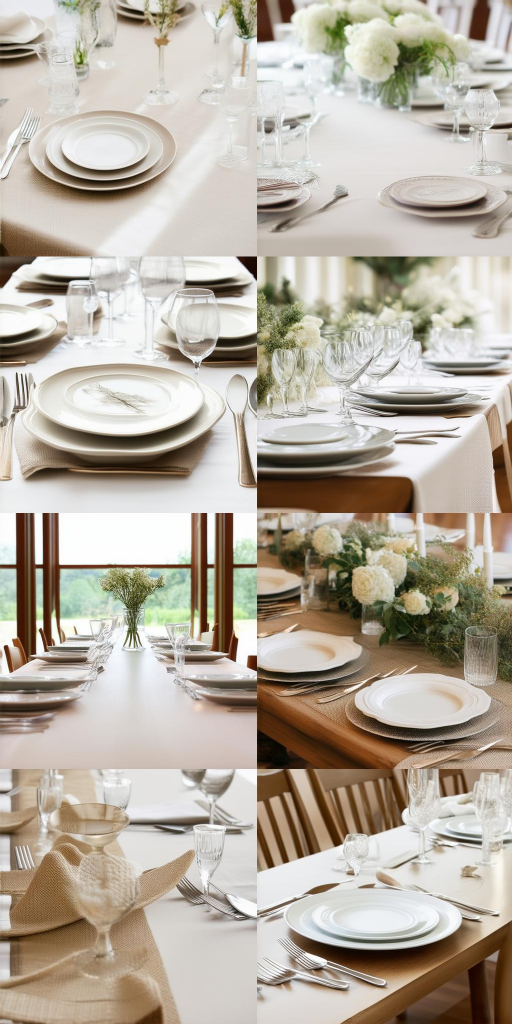}
        \label{fig:grid_pg}
    \end{subfigure}
    \hfill
    \begin{subfigure}[t]{0.22\linewidth}
        \centering
        \caption{Our Method}
        \includegraphics[width=\linewidth]{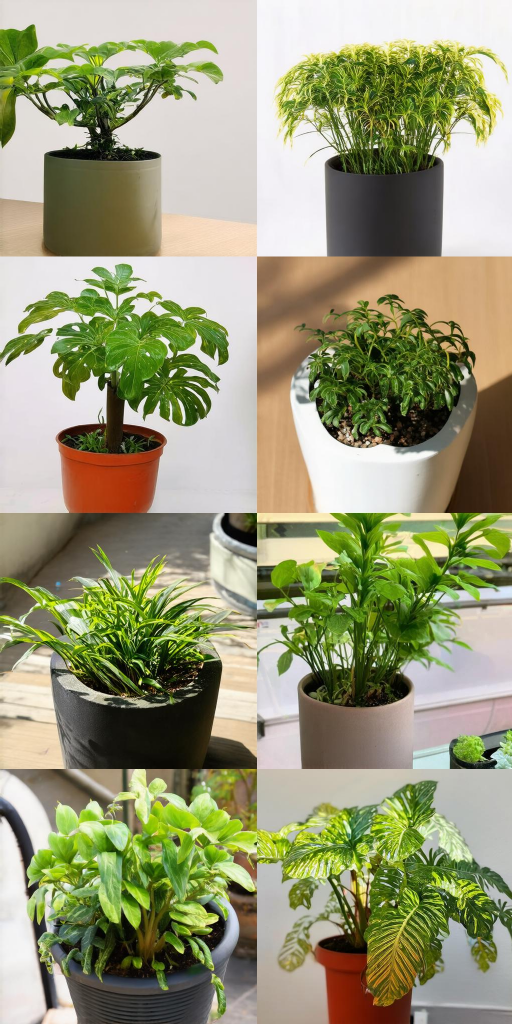}
        \vspace{2pt} 
        \includegraphics[width=\linewidth]{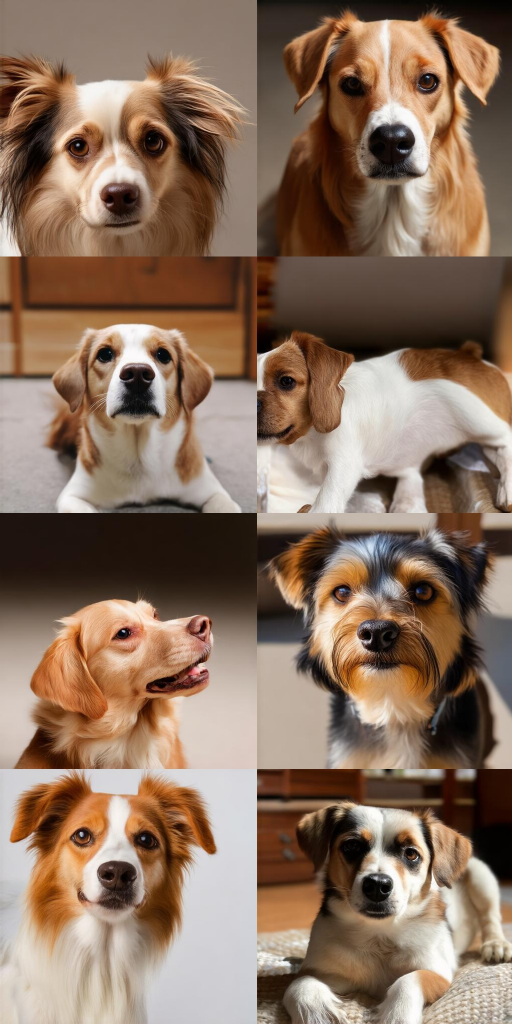}
        \vspace{2pt} 
        \includegraphics[width=\linewidth]{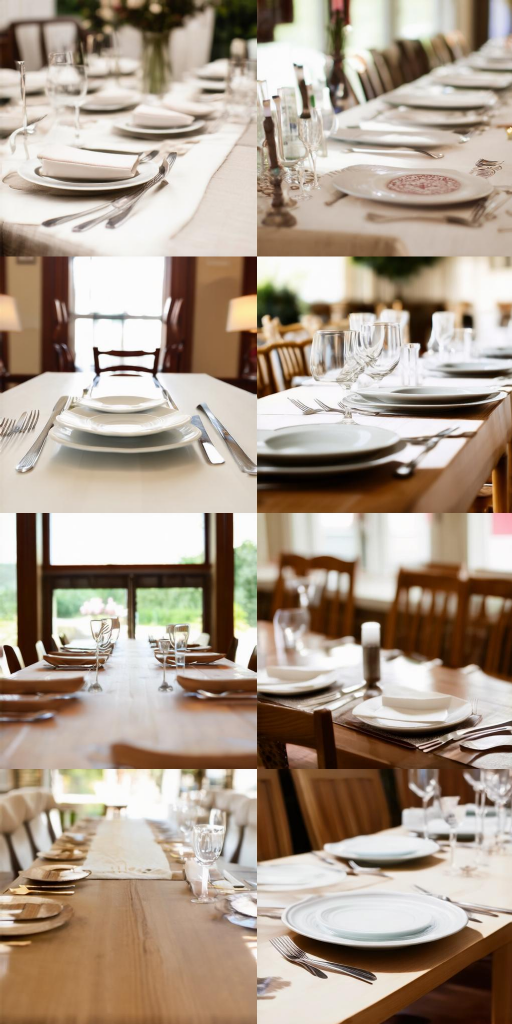}
        \label{fig:grid_ourmethod}
    \end{subfigure}
        \caption{
        Comprehensive visual comparison across all methods. Each column corresponds to a single method: (a) CADS, (b) DPP, (c) PG, and (d) Our Method. Each row shows results for a different prompt, in the following order from top to bottom: ``A photo of a potted plant", ``A photo of a dog", and ``A close-up photo of a dining table". 
    }
\end{figure*}

\begin{figure*}[htbp]
    \centering

    \label{fig:dense_prompt_comparison}
    \begin{subfigure}[t]{0.22\linewidth}
        \centering
        \caption{CADS}
        \includegraphics[width=\linewidth]{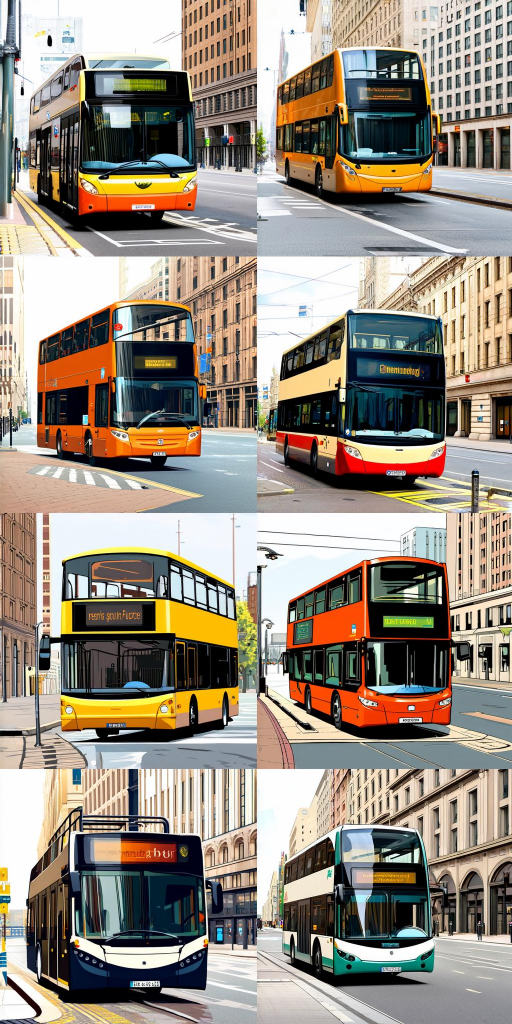}
        \vspace{2pt} 
        \includegraphics[width=\linewidth]{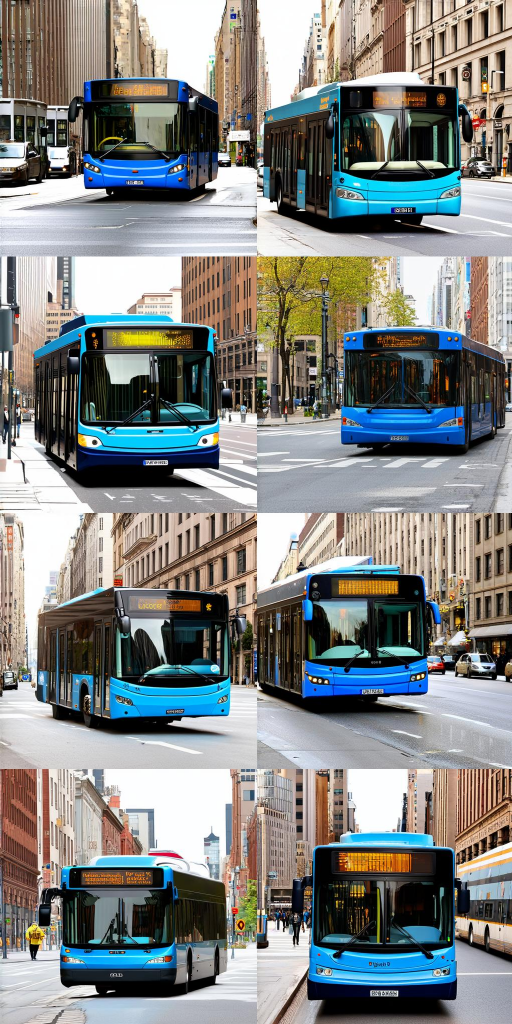}
        \vspace{2pt} 
        \includegraphics[width=\linewidth]{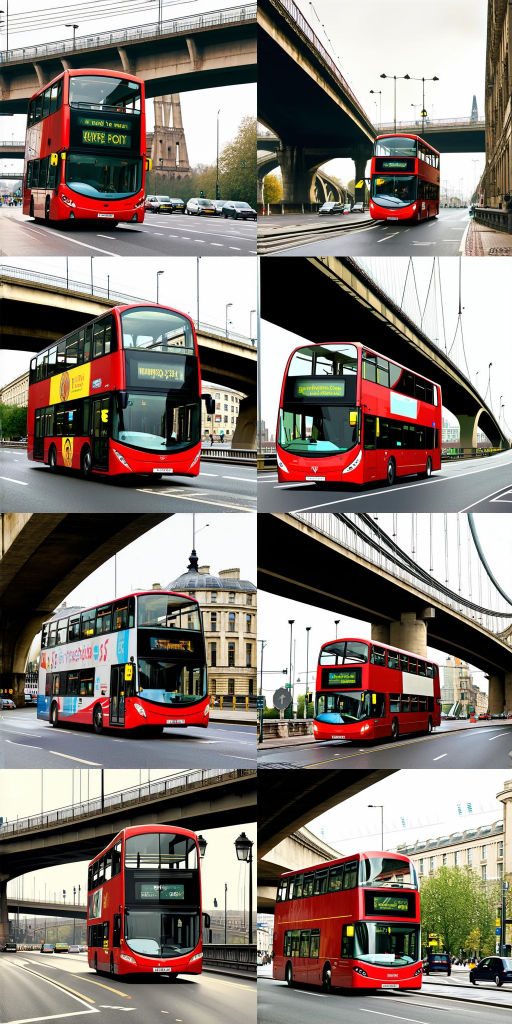}
        \label{fig:grid_cads}
    \end{subfigure}
    \hfill
    \begin{subfigure}[t]{0.22\linewidth}
        \centering
        \caption{DPP}
        \includegraphics[width=\linewidth]{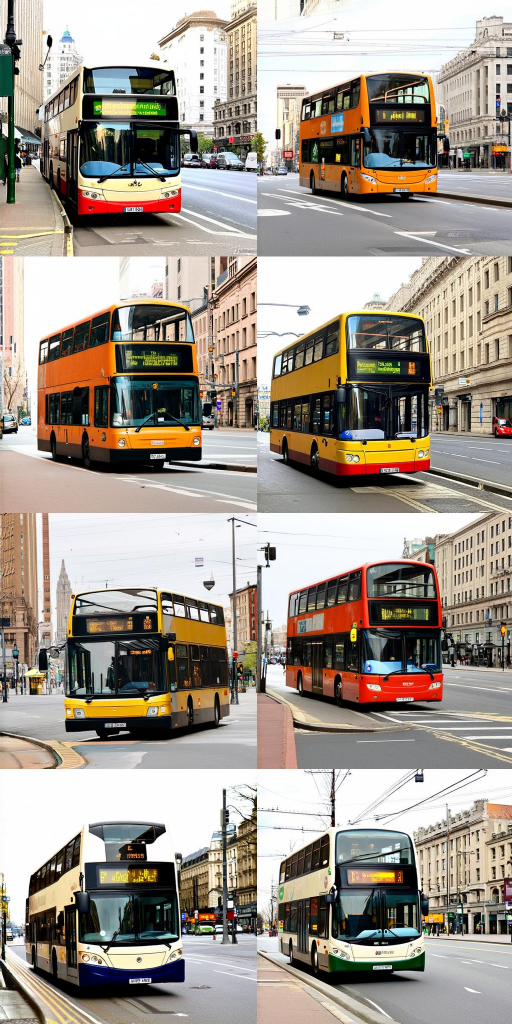}
        \vspace{2pt} 
        \includegraphics[width=\linewidth]{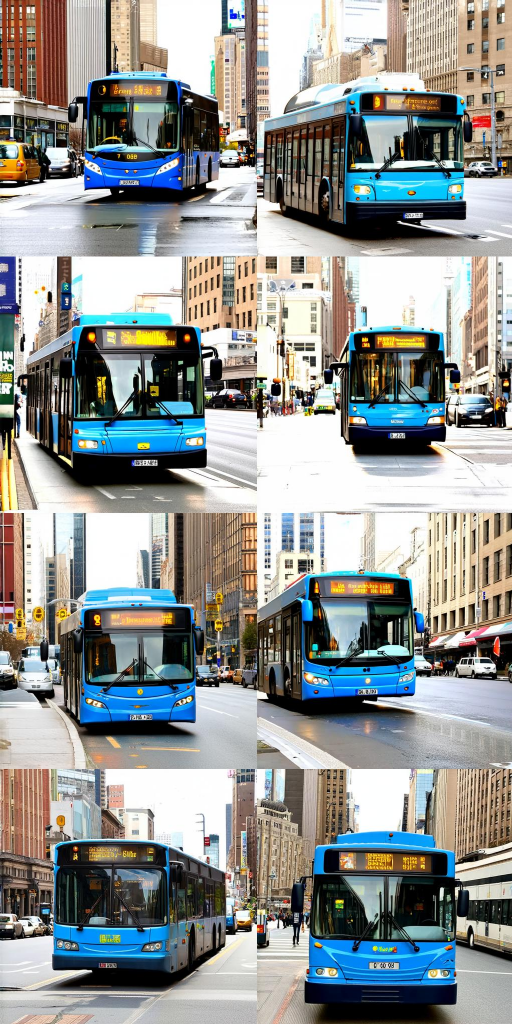}
        \vspace{2pt} 
        \includegraphics[width=\linewidth]{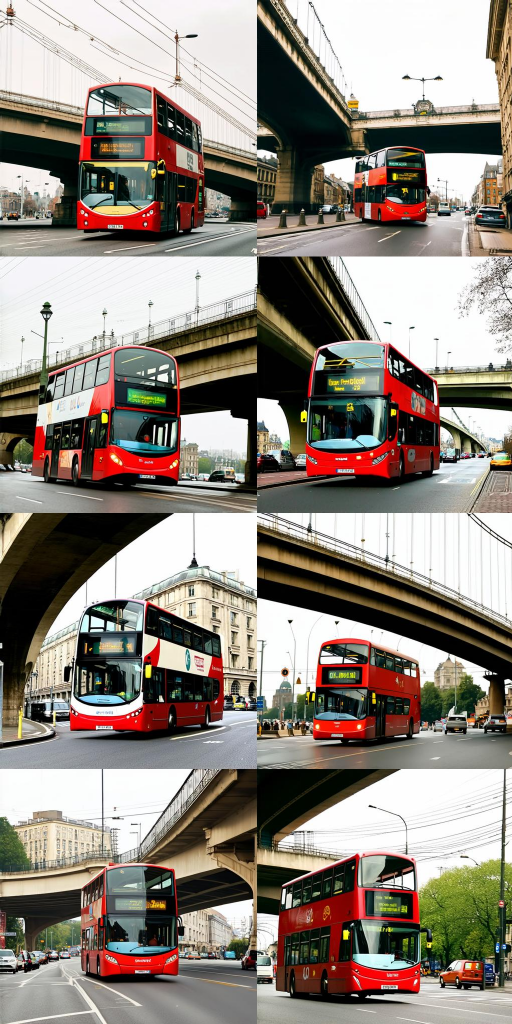}
        \label{fig:grid_dpp}
    \end{subfigure}
    \hfill
    \begin{subfigure}[t]{0.22\linewidth}
        \centering
        \caption{PG}
        \includegraphics[width=\linewidth]{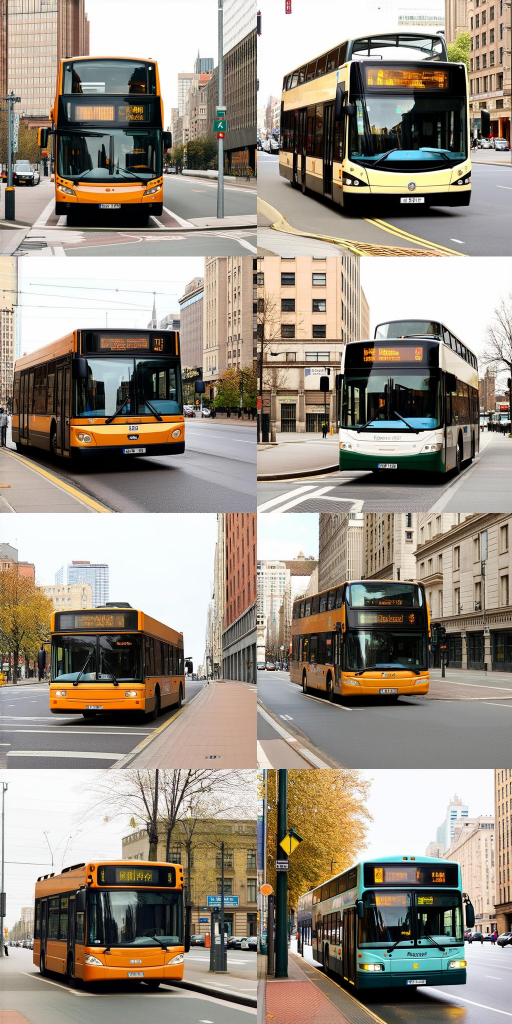}
        \vspace{2pt} 
        \includegraphics[width=\linewidth]{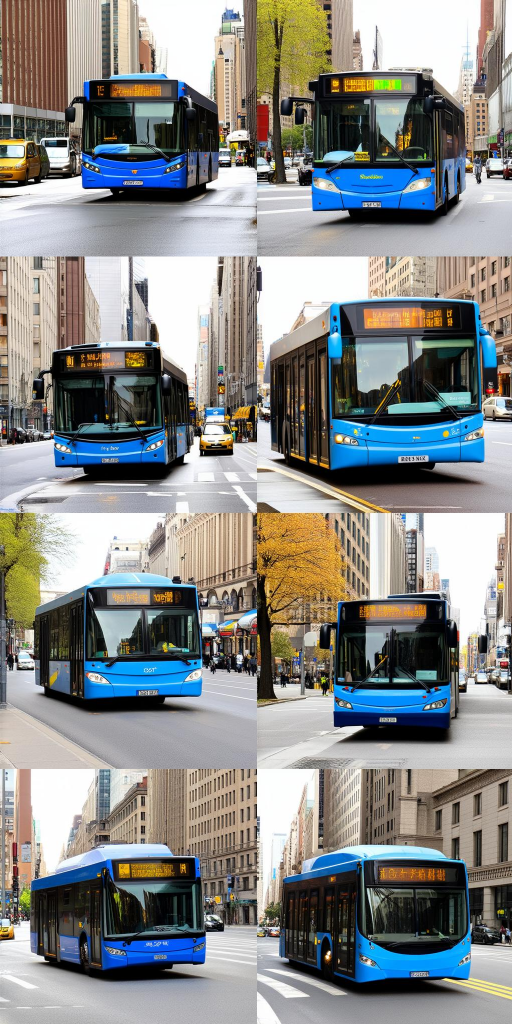}
        \vspace{2pt} 
        \includegraphics[width=\linewidth]{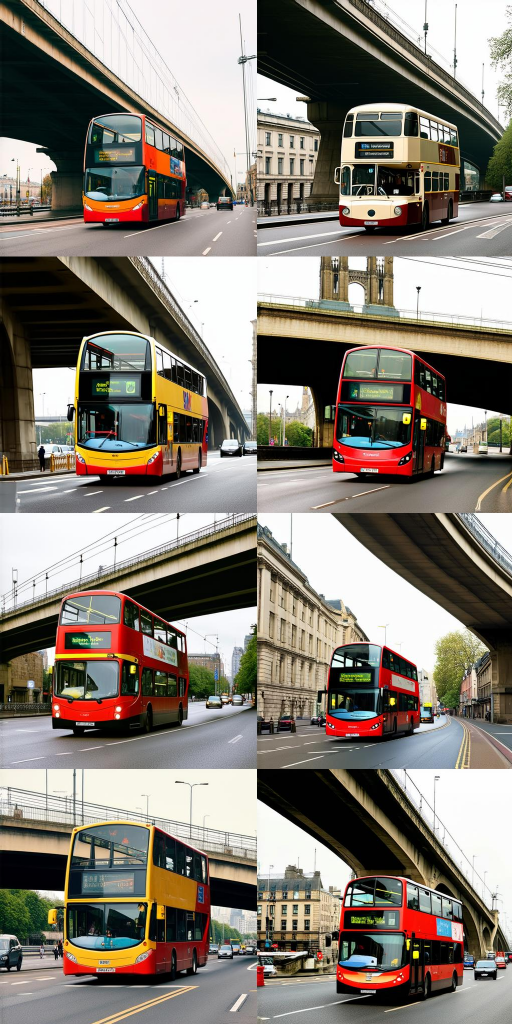}
        \label{fig:grid_pg}
    \end{subfigure}
    \hfill
    \begin{subfigure}[t]{0.22\linewidth}
        \centering
        \caption{Our Method}
        \includegraphics[width=\linewidth]{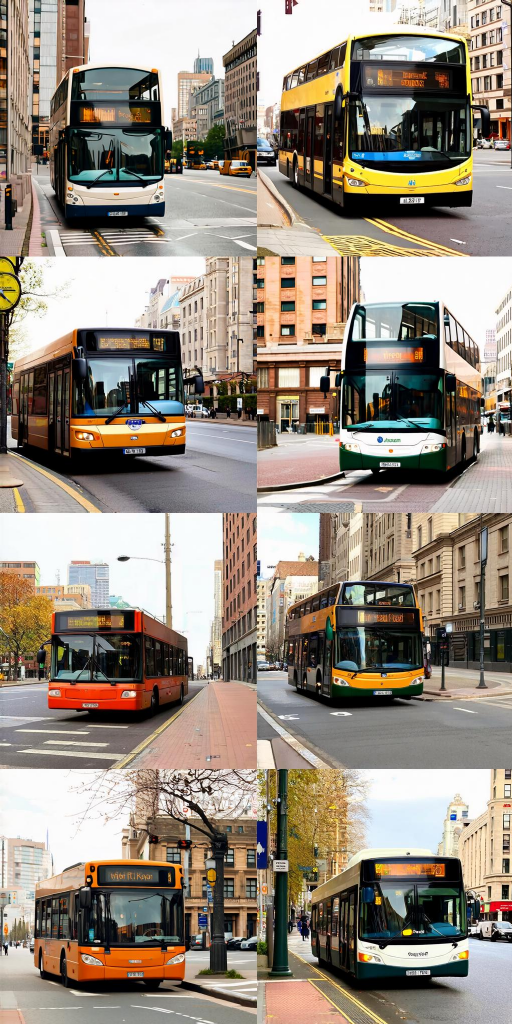}
        \vspace{2pt} 
        \includegraphics[width=\linewidth]{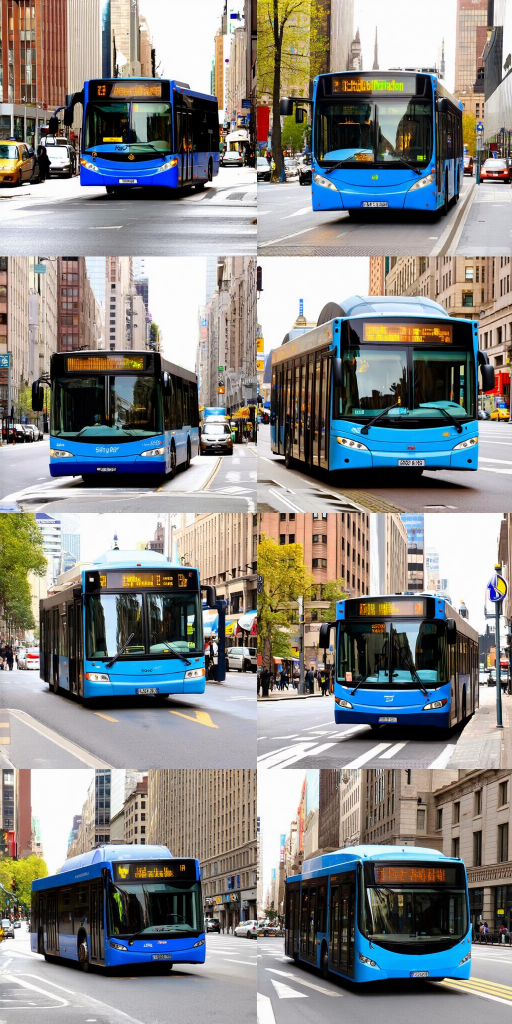}
        \vspace{2pt} 
        \includegraphics[width=\linewidth]{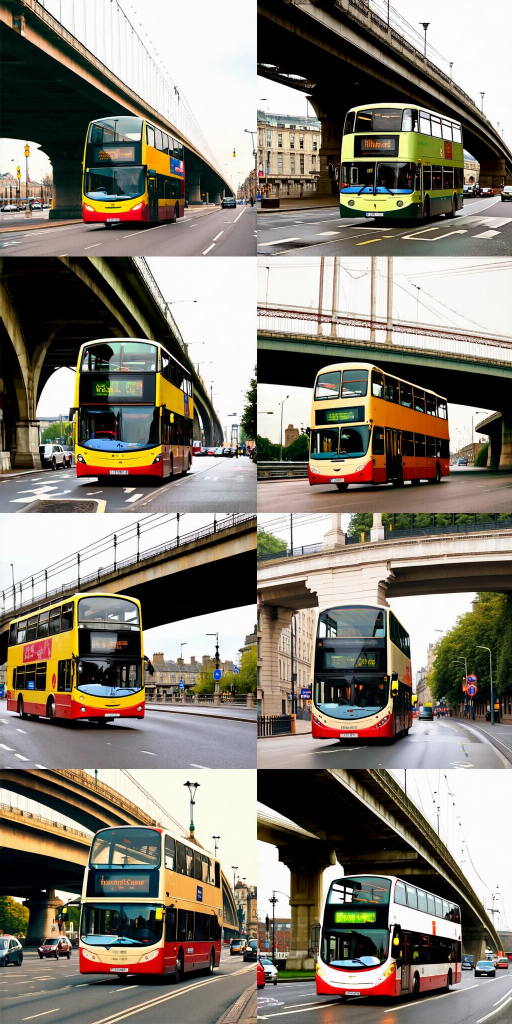}
    \end{subfigure}
        \caption{
        Qualitative comparison of conditional generalization (CIM) across all methods.
        These images were generated using \textit{dense prompts}, where specific attributes were explicitly requested, such as ``a blue bus on a city street'' or ``a single-decked bus drives down the street''.
        This setting evaluates each model's ability to follow precise instructions.
    }
\end{figure*}

\begin{figure*}[htbp]

    \label{fig:coarse_prompt_comparison}
    
    \begin{subfigure}[t]{0.22\linewidth}
        \centering
        \caption{CADS}
        \includegraphics[width=\linewidth]{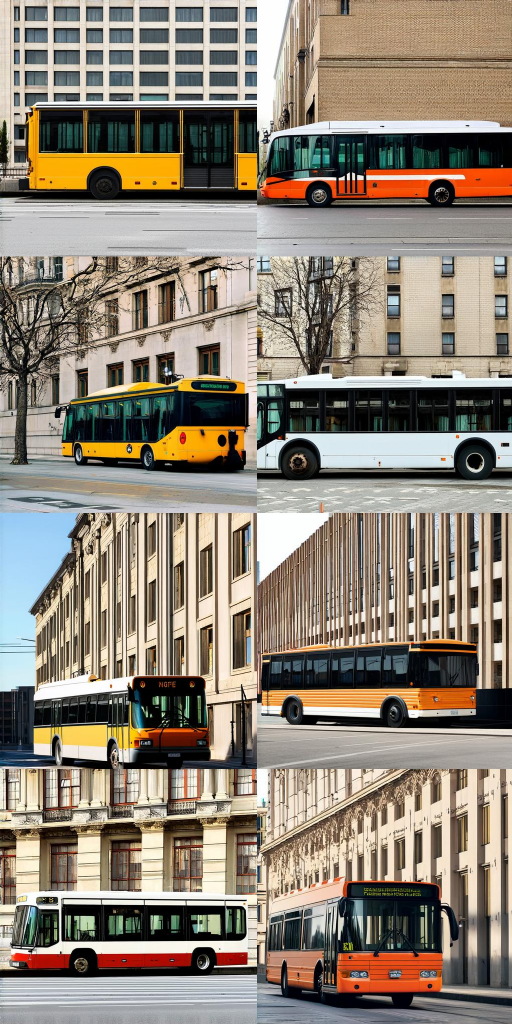}
        \vspace{2pt} 
        \includegraphics[width=\linewidth]{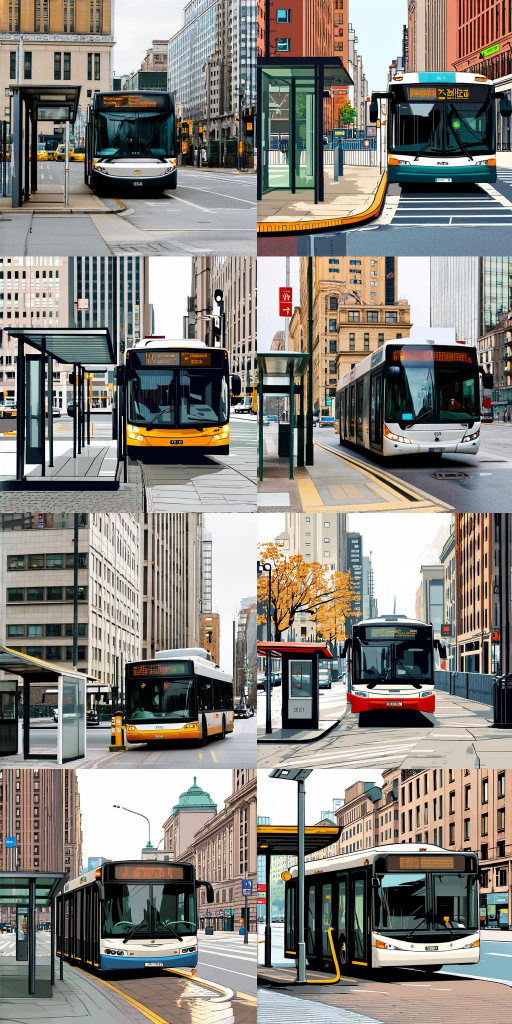}
        \vspace{2pt} 
        \includegraphics[width=\linewidth]{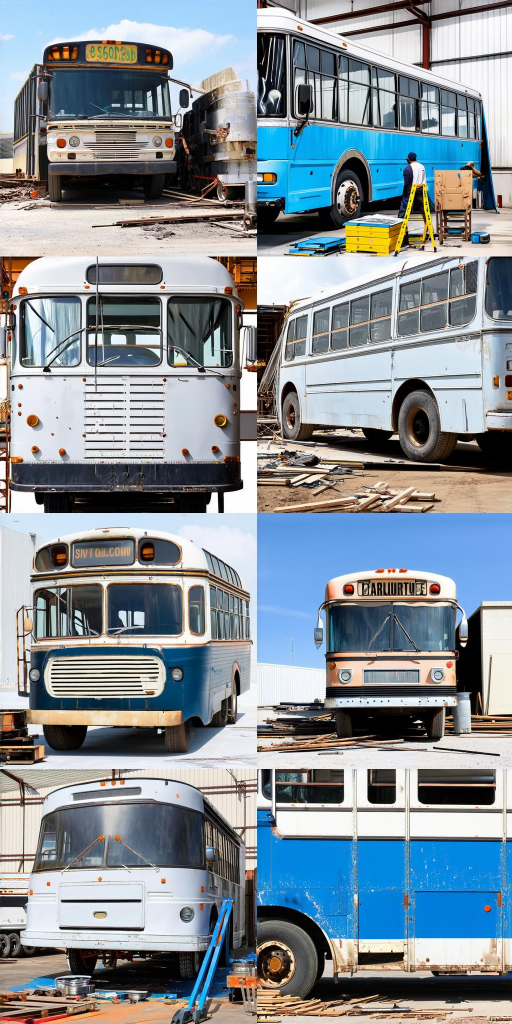}
        \label{fig:grid_cads}
    \end{subfigure}
    \hfill
    \begin{subfigure}[t]{0.22\linewidth}
        \centering
        \caption{DPP}
        \includegraphics[width=\linewidth]{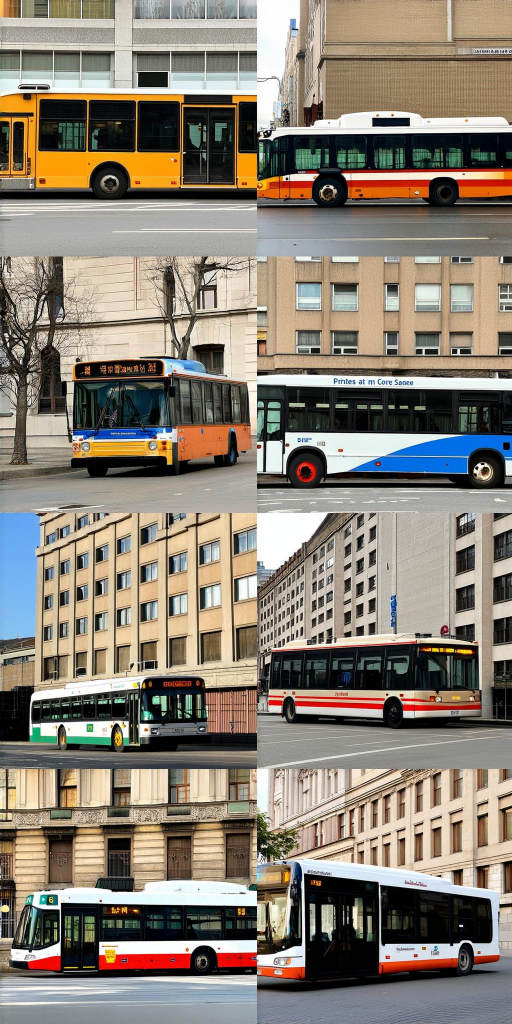}
        \vspace{2pt} 
        \includegraphics[width=\linewidth]{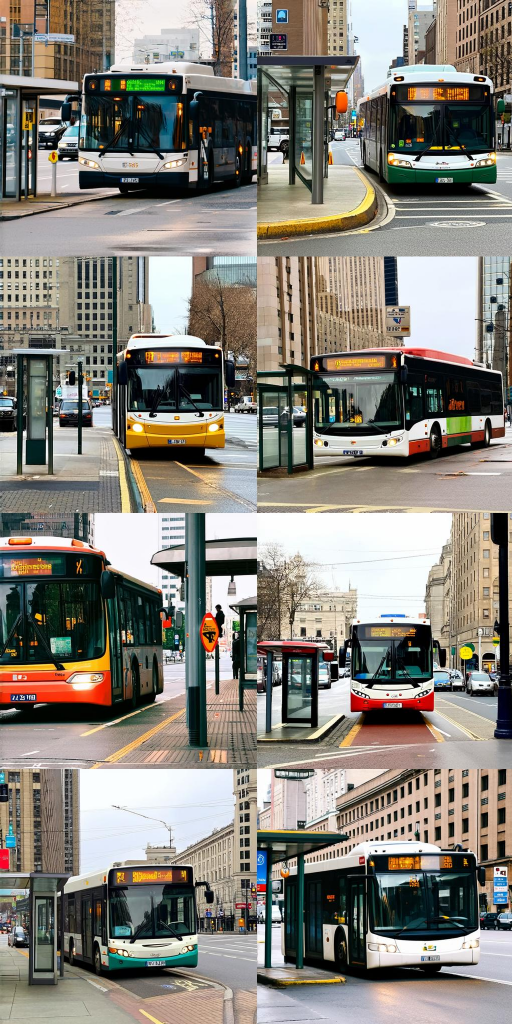}
        \vspace{2pt} 
        \includegraphics[width=\linewidth]{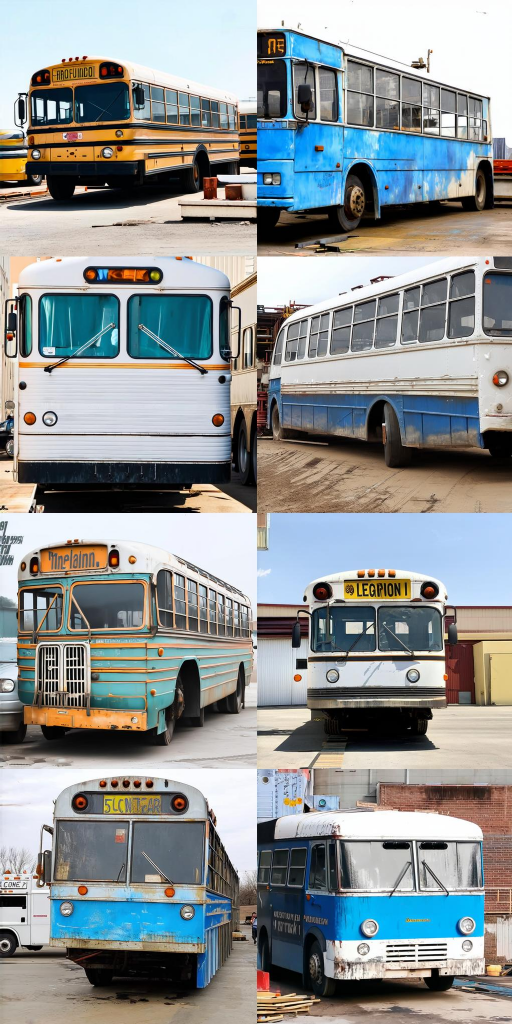}
        \label{fig:grid_dpp}
    \end{subfigure}
    \hfill
    \begin{subfigure}[t]{0.22\linewidth}
        \centering
        \caption{PG}
        \includegraphics[width=\linewidth]{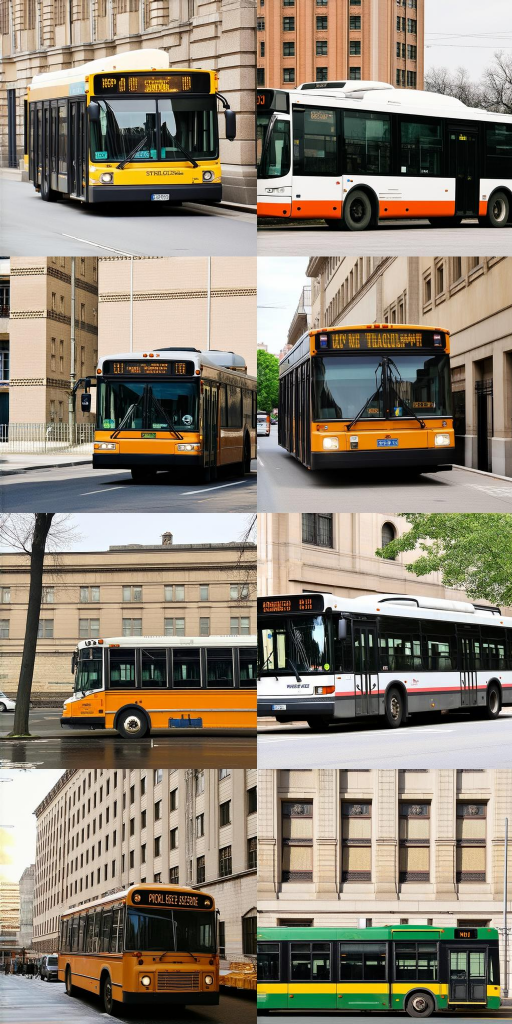}
        \vspace{2pt} 
        \includegraphics[width=\linewidth]{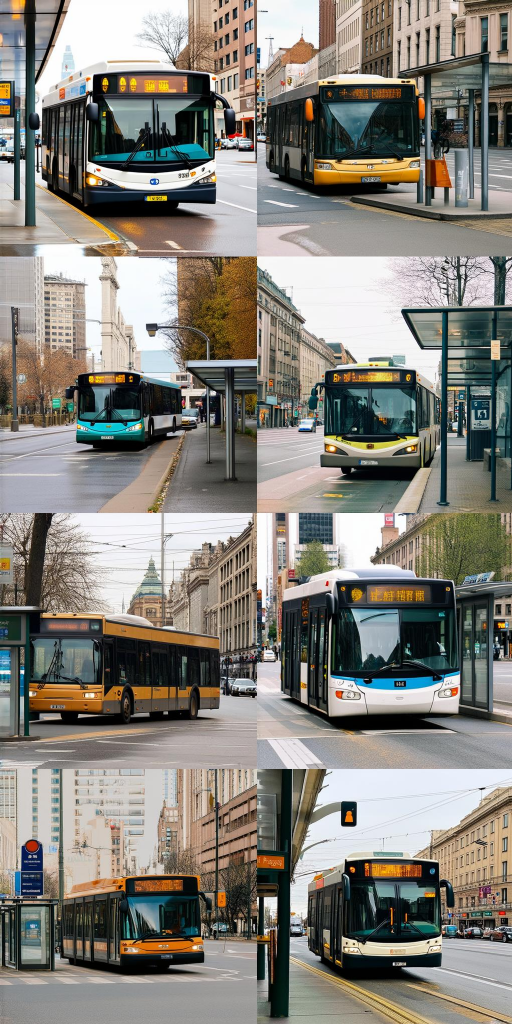}
        \vspace{2pt} 
        \includegraphics[width=\linewidth]{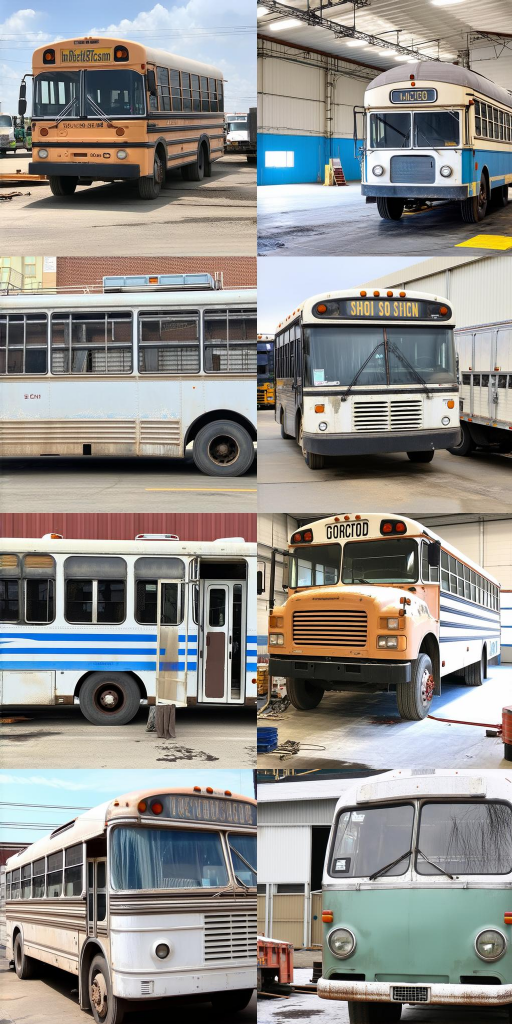}
        \label{fig:grid_pg}
    \end{subfigure}
    \hfill
    \begin{subfigure}[t]{0.22\linewidth}
        \centering
        \caption{Our Method}
        \includegraphics[width=\linewidth]{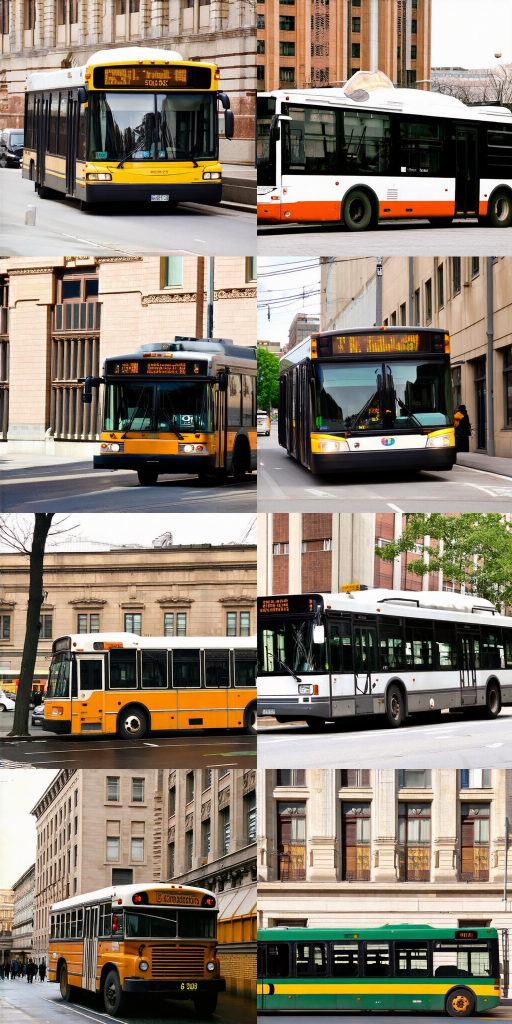}
        \vspace{2pt} 
        \includegraphics[width=\linewidth]{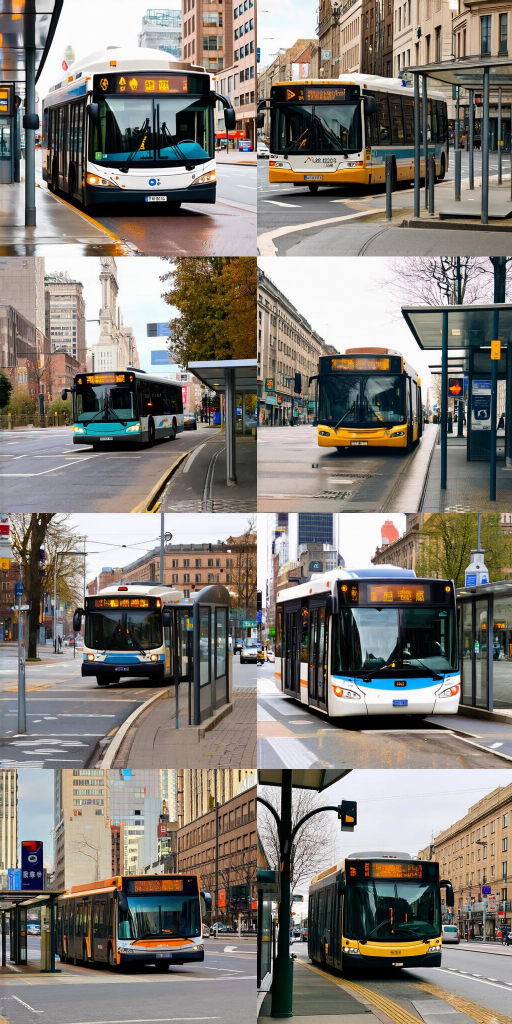}
        \vspace{2pt} 
        \includegraphics[width=\linewidth]{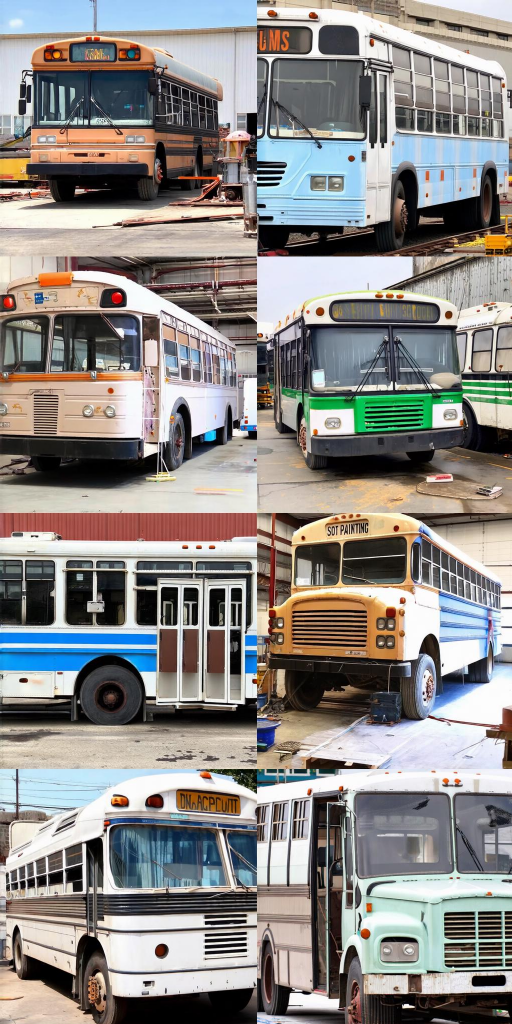}
    \end{subfigure}
        \caption{
        Qualitative comparison of default-mode diversity (DIM) across all methods.
        These images were generated using underspecified \textit{coarse prompts}, such as ``a bus by the side of a building'' or ``a bus stopping at a bus stop in a city''. This experiment evaluates each model's ability to generate a diverse and balanced set of attributes spontaneously.
    }
\end{figure*}

\clearpage
\FloatBarrier

\begin{figure}[p]
    \centering

    \includegraphics[width=\linewidth,height=0.23\textheight,keepaspectratio]{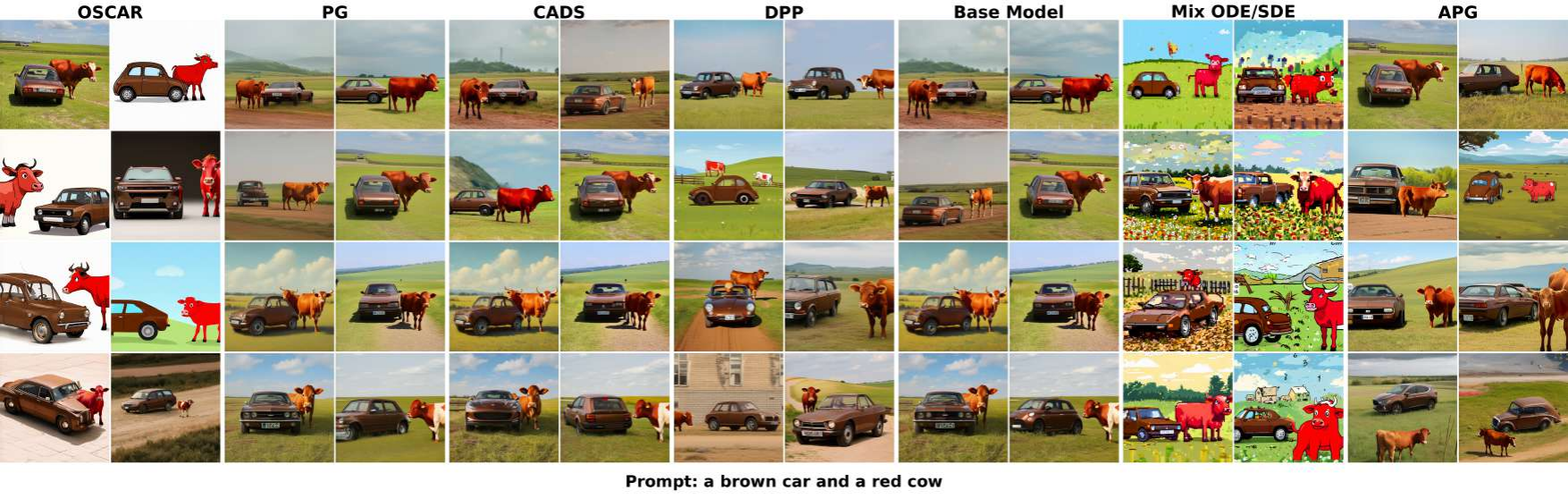}
    \vspace{0.4em}

    \includegraphics[width=\linewidth,height=0.23\textheight,keepaspectratio]{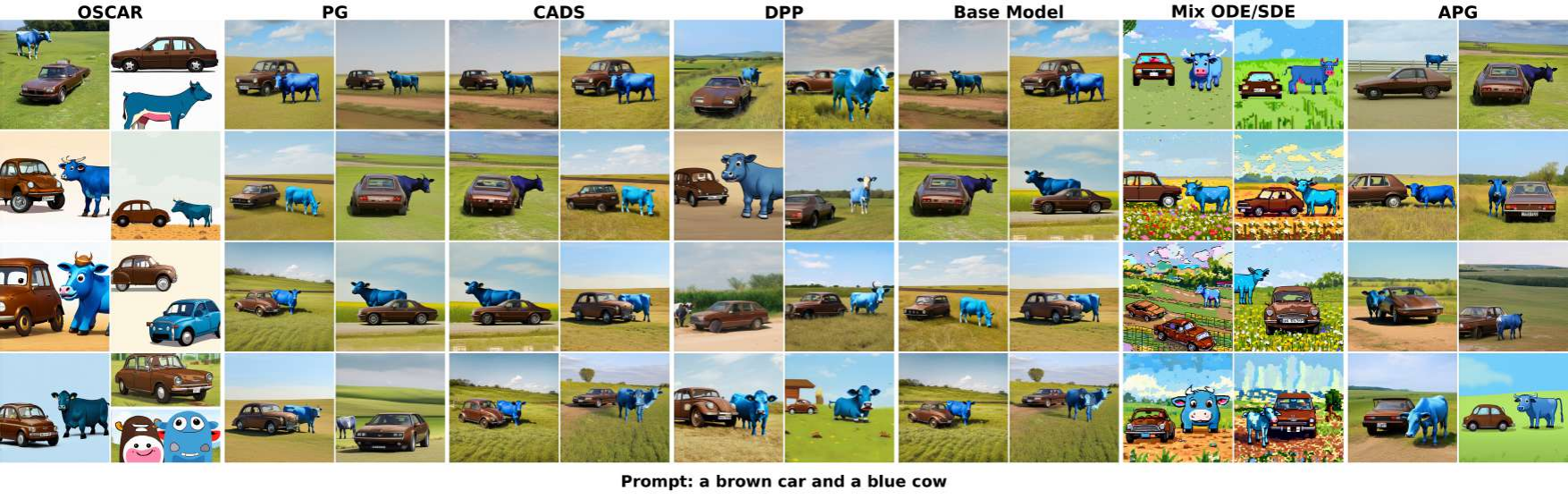}
    \vspace{0.4em}

    \includegraphics[width=\linewidth,height=0.23\textheight,keepaspectratio]{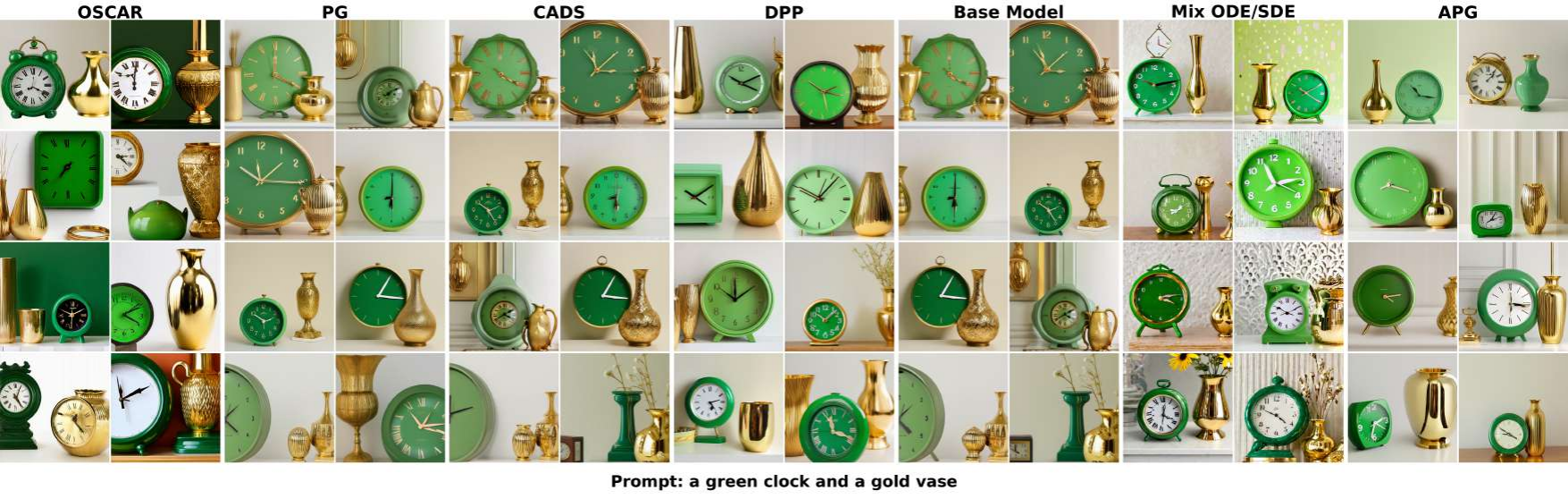}
    \vspace{0.4em}

    \includegraphics[width=\linewidth,height=0.23\textheight,keepaspectratio]{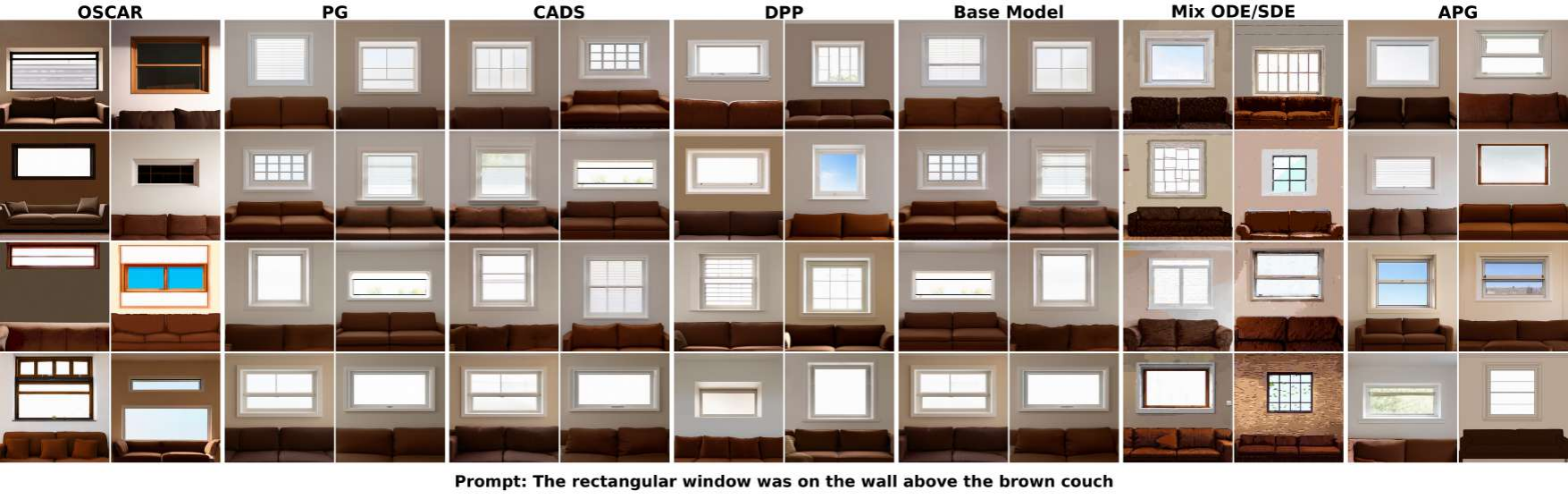}

    \caption{\small
    Qualitative visual comparisons on the \textit{spatial color} and \textit{complex} subsets of T2I-CompBench. The prompts are shown below each image.
    }
    \label{fig:t2i_compbench_visuals_1}
\end{figure}

\clearpage
\FloatBarrier

\begin{figure}[p]
    \centering

    \includegraphics[width=\linewidth,height=0.23\textheight,keepaspectratio]{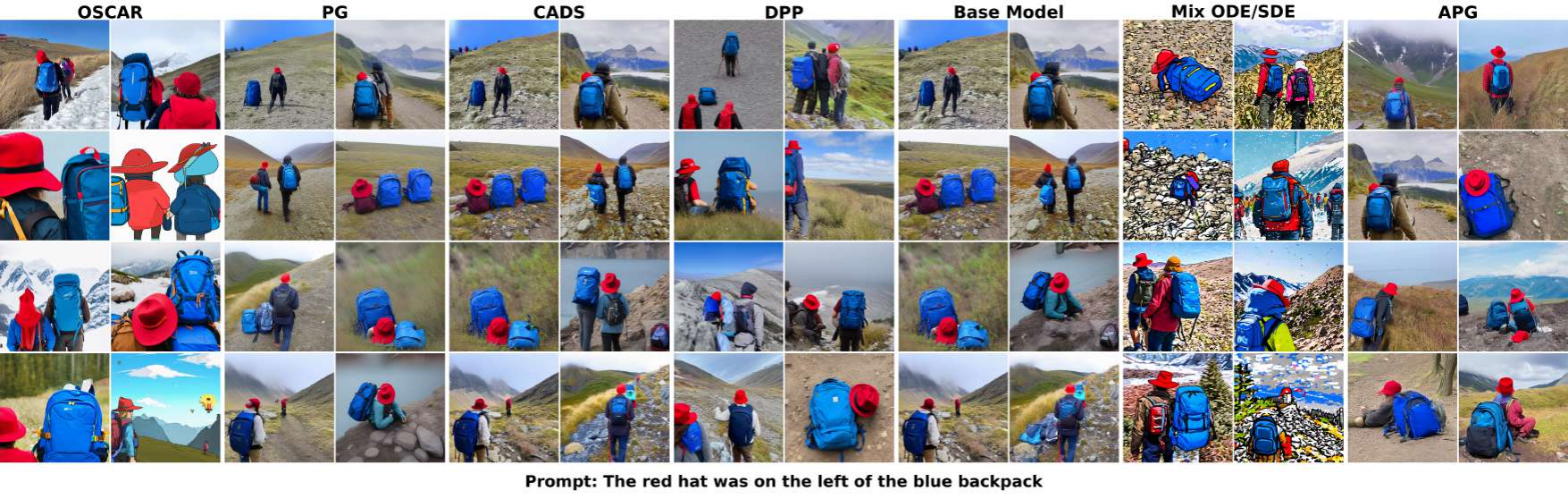}
    \vspace{0.5em}

    \includegraphics[width=\linewidth,height=0.23\textheight,keepaspectratio]{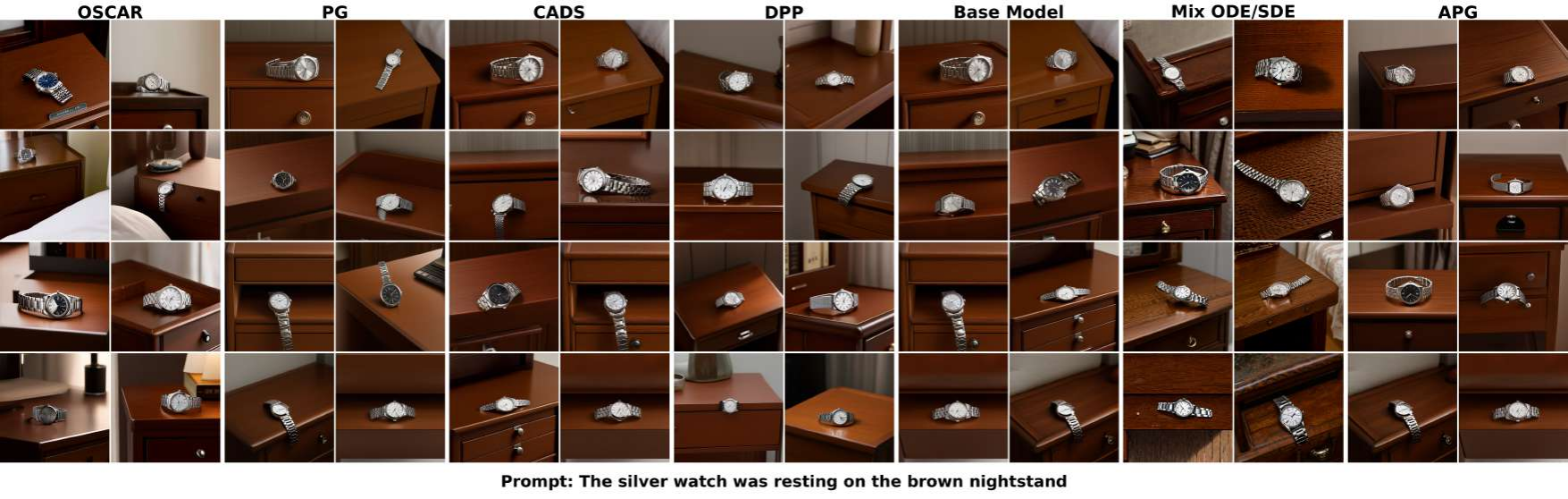}
    \vspace{0.5em}

    \includegraphics[width=\linewidth,height=0.23\textheight,keepaspectratio]{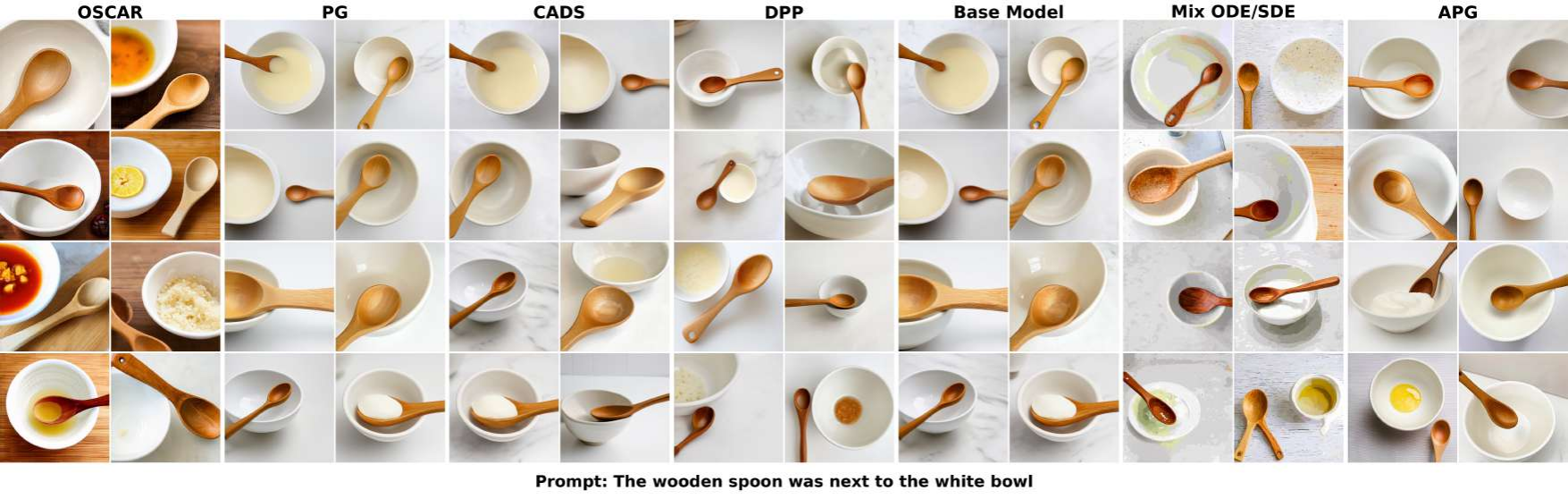}

    \includegraphics[width=\linewidth,height=0.23\textheight,keepaspectratio]{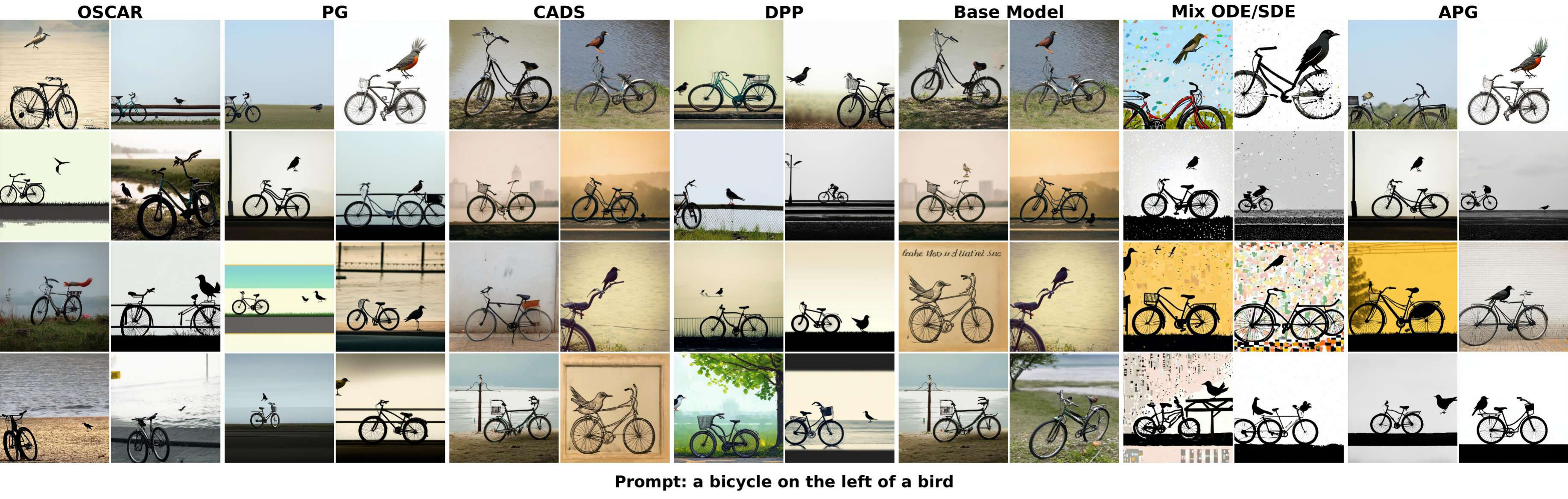}

    \caption{\small
    Additional qualitative visual comparisons on the \textit{complex} subset of T2I-CompBench. The prompts are shown below each image.
    }
    \label{fig:t2i_compbench_visuals_2}
\end{figure}

\FloatBarrier

\clearpage
\FloatBarrier

\begin{figure}[p]
    \centering

    \includegraphics[width=\linewidth,height=0.20\textheight,keepaspectratio]{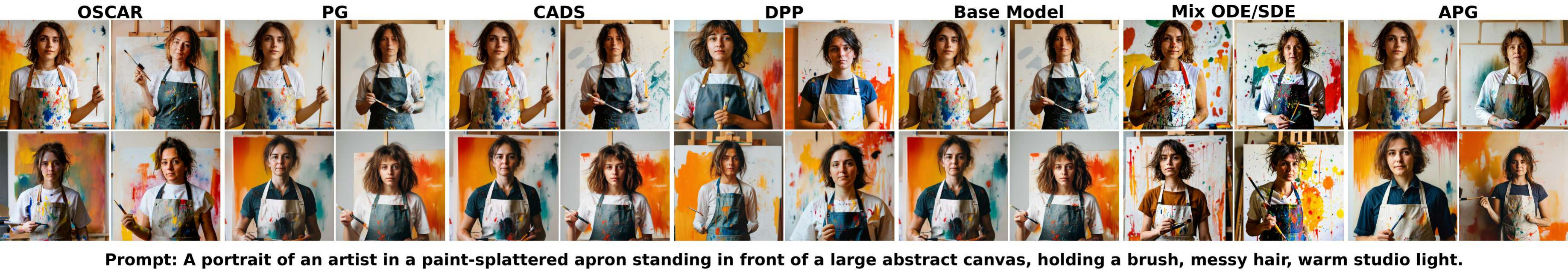}
    \vspace{0.4em}

    \includegraphics[width=\linewidth,height=0.20\textheight,keepaspectratio]{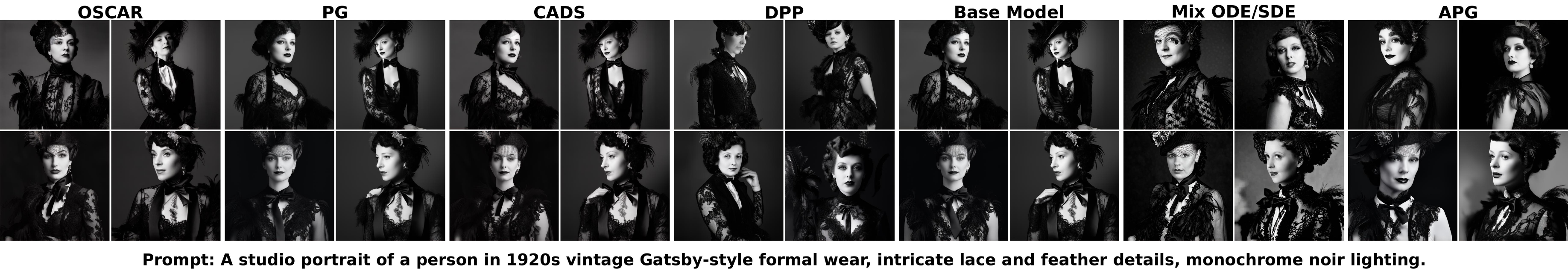}
    \vspace{0.4em}

    \includegraphics[width=\linewidth,height=0.20\textheight,keepaspectratio]{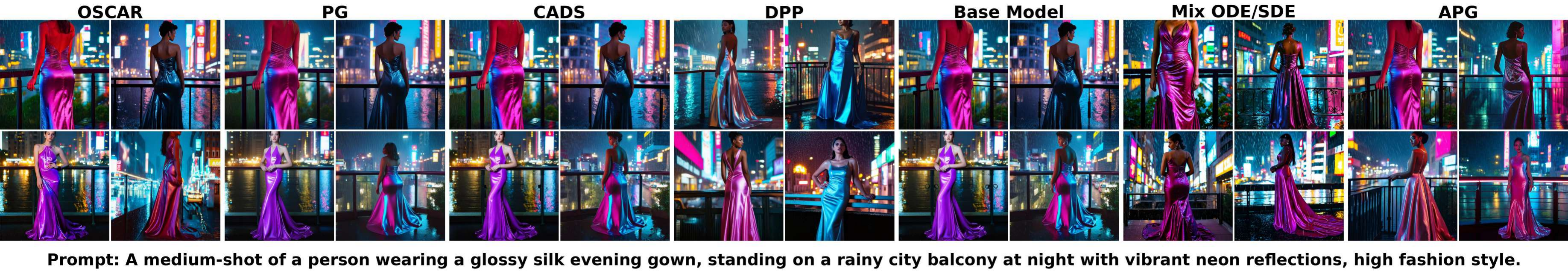}

    \caption{\small
    Portrait-focused qualitative comparisons on challenging prompts. These examples are designed to stress fine-grained visual fidelity, including facial plausibility, skin and hand details, material rendering, and complex lighting conditions.
    }
    \label{fig:portrait_fidelity_visuals}
\end{figure}

\FloatBarrier

\end{document}